\newcommand{\R}{\mathbb{R}} 
\newcommand{\Q}{\mathbb{Q}}   
\newcommand{\N}{\mathbb{N}}   
\newcommand{\A}{\mathcal{A}} 
\newcommand{\F}{\mathcal{F}} 
\theoremstyle{plain}
\newtheorem{theorem}{Theorem}[section]
\newtheorem{lemma}[theorem]{Lemma}
\newtheorem{corollary}[theorem]{Corollary}
\theoremstyle{definition}
\newtheorem{definition}[theorem]{Definition}
\newtheorem{remark}[theorem]{Remark}
\theoremstyle{remark}
\newtheorem{example}[theorem]{Example}
\numberwithin{equation}{section}
\numberwithin{figure}{section}
\numberwithin{table}{figure}
\title{Borel Isomorphic Dimensionality Reduction of Data and Supervised Learning}
\author{Stan Hatko \\
	shatk094@uottawa.ca \\
	University of Ottawa \\
	Winter 2013 Honours Project \\
	Supervisor: Vladimir Pestov \\
	\vspace{20pt}
}
\begin{document}
\maketitle

\begin{abstract}
In this project we further investigate the idea of reducing the dimensionality of datasets using a Borel isomorphism with the purpose of subsequently applying supervised learning algorithms, as originally suggested by my supervisor V. Pestov (in 2011 Dagstuhl preprint). Any consistent learning algorithm, for example kNN, retains universal consistency after a Borel isomorphism is applied. A series of concrete examples of Borel isomorphisms that reduce the number of dimensions in a dataset is provided, based on multiplying the data by orthogonal matrices before the dimensionality reducing Borel isomorphism is applied. We test the accuracy of the resulting classifier in a lower dimensional space with various data sets. Working with a phoneme voice recognition dataset, of dimension 256 with 5 classes (phonemes), we show that a Borel isomorphic reduction to dimension 16 leads to a minimal drop in accuracy. In conclusion, we discuss further prospects of the method.
\end{abstract}

\clearpage
\newpage
\thispagestyle{empty} 
\mbox{}
\clearpage

\setcounter{section}{-1} 
\section{Some Initial Background in Analysis and Probability}

In this section we have some background definitions and results in probability and analysis that are needed for the reader to understand the introduction. These definitions can be found in standard textbooks, including \cite{Kechris}, \cite{Cohen}, \cite{Stein}, \cite{McDonald}, and \cite{Stewart}.

\begin{definition}\label{D:MetricSpace}
A \emph{metric space} is a nonempty set $X$ with a function $d: X \times X \mapsto \R$ such that 
\begin{enumerate}
\item The distance between a point and itself is zero, the distance from a point to all other points is nonzero:\begin{equation*}
\forall x, y \in X\; d(x, y) = 0 \iff x = y
\end{equation*}
\item The distance is symmetric:\begin{equation*}
\forall x,y \in X\; d(x, y) = d(y, x)
\end{equation*}
\item The \emph{triangle inequality} is satisfied\begin{equation*}
\forall x,y,z \in X\; d(x, z) \leq d(x, y) + d(y, z)
\end{equation*}
\end{enumerate}
\end{definition}

\begin{definition}\label{D:SeparableMetricSpace}
A metric space $X$ is said to be \emph{separable} if it has a countable dense subset $A$, that is for some countable subset $A \subseteq X$,
\begin{equation*}
\forall x \in X\; \forall \epsilon > 0\; \exists y \in A\; d(x, y) \leq \epsilon
\end{equation*}
\end{definition}

\begin{definition}\label{D:SigmaAlgebra}
Let $X$ be a set. Then a $\sigma$-algebra $\mathcal{F} \subseteq \mathcal{P}$ is a subset of the power set of $X$ that satisfies the following properties
\begin{enumerate}

\item $\emptyset \in \mathcal{F}$ ($\mathcal{F}$ contains the empty set)

\item If $A \in \mathcal{F}$, then $A^c \in \mathcal{F}$ ($\mathcal{F}$ is closed under complements)

\item If $A_1, A_2, A_3, ... \in \mathcal{F}$ is a countable family of sets in $\mathcal{F}$, then $\bigcup\limits_{i \in \mathbb{N}} A_{i} \in \mathcal{F}$ ($\mathcal{F}$ is closed under countable unions)

\end{enumerate}
\end{definition}

\begin{definition} \label{D:BorelAlgebra}
The \emph{Borel $\sigma$-algebra} (or simply the \emph{Borel algebra}) on a metric space $X$ is the smallest $\sigma$-algebra containing all open sets in $X$. A set is said to be \emph{Borel} if it is in the Borel algebra. If $X$ is separable, then the Borel algebra is the smallest $\sigma$-algebra containing the open balls of $X$, as shown in Theorem \ref{T:BallsGenerateOpenSets} below.
\end{definition}

\begin{definition} \label{D:Probability}
Let $\Omega$ be a set called the \emph{sample space} and $\mathcal{F}$ be a $\sigma$-algebra on $\Omega$. An \emph{event} $\A \in \mathcal{F}$ is an element of the $\sigma$-algebra (or a subset of $\Omega$ that is in $\mathcal{F}$).  A \emph{probability measure} $P$ on $\mathcal{F}$ is a function $P: \mathcal{F} \mapsto \R$ satisfying
\begin{enumerate}
\item For any event $\A \in \mathcal{F}$, $0 \leq P(\A) \leq 1$.
\item $P(\Omega) = 1$.
\item If $\A_1, \A_2, \A_3, ... \in \mathcal{P}$ is a \emph{countable} family of events such that $A_1$, $A_2$, $A_3$, ... are all pairwise disjoint, then $P(\A_1 \cup \A_2 \cup \A_3 \cup ...) = P(\A_1) + P(\A_2) + P(\A_3) + ...$.
\end{enumerate}
\end{definition}

\begin{lemma} \label{T:PreimagePreservesSetOperations}
If $X$, $Y$ are sets and $f: X \mapsto Y$ is a function, then the inverse image $f^{-1}$ preserves countable set theoretic operations, that is, the following holds:
 for any countable family of sets $A_{i} \subseteq Y$, $i = 1, 2, ...$ we have:
\begin{itemize}
\item $f^{-1}$ preserves countable unions: for any family of sets $A_{i} \subseteq Y$,
\begin{align*}
f^{-1}\left(\bigcup_{i} A_{i}\right) = \bigcup_{i} f^{-1}\left(A_{i}\right)
\end{align*}
\item $f^{-1}$ preserves countable intersections: for any family of sets $A_{i} \subseteq Y$,
\begin{align*}
f^{-1}\left(\bigcap_{i} A_{i}\right) = \bigcap_{i} f^{-1}\left(A_{i}\right)
\end{align*}
\item $f^{-1}$ preserves complements: for any set $A \subseteq Y$,
\begin{align*}
f^{-1}\left(A^{\mathcal{C}}\right) = f^{-1}\left(A\right)^{\mathcal{C}}
\end{align*}
\end{itemize}
\end{lemma}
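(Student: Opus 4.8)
The plan is to prove all three claims by direct element chasing, reducing each identity to the defining property of the preimage, namely $f^{-1}(A) = \{x \in X : f(x) \in A\}$, and then observing that each set-theoretic operation on the target side corresponds to a logical connective that commutes with the membership condition $f(x) \in A$. In each case I would establish the set equality by proving a chain of biconditionals, since showing $x$ lies in the left-hand set if and only if it lies in the right-hand set is cleaner here than doing two separate inclusions.

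For the union, I would fix an arbitrary $x \in X$ and run the equivalences $x \in f^{-1}\left(\bigcup_i A_i\right) \iff f(x) \in \bigcup_i A_i \iff \exists i \; f(x) \in A_i \iff \exists i \; x \in f^{-1}(A_i) \iff x \in \bigcup_i f^{-1}(A_i)$. The only content is that membership in a union is an existential statement, which passes through the substitution $y = f(x)$ untouched. The intersection case is identical with the existential quantifier replaced by a universal one: $x \in f^{-1}\left(\bigcap_i A_i\right) \iff \forall i \; f(x) \in A_i \iff \forall i \; x \in f^{-1}(A_i) \iff x \in \bigcap_i f^{-1}(A_i)$. I would note that these arguments make no use of the family being countable or even of $f$ being injective or surjective, so the statement in fact holds for arbitrary index sets; I would mention this to make clear that countability is not where any subtlety hides.

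For the complement I would again fix $x$ and use that $f(x) \in A^{\mathcal{C}}$ is by definition the negation $f(x) \notin A$, giving $x \in f^{-1}\left(A^{\mathcal{C}}\right) \iff f(x) \notin A \iff x \notin f^{-1}(A) \iff x \in f^{-1}(A)^{\mathcal{C}}$. Here one should be slightly careful that the complements are taken in the correct ambient sets: $A^{\mathcal{C}}$ is the complement in $Y$ while $f^{-1}(A)^{\mathcal{C}}$ is the complement in $X$, and the middle equivalence is exactly the assertion that these match up under $f^{-1}$. This is the one place warranting a sentence of justification rather than pure symbol-pushing.

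Honestly, I do not expect any genuine obstacle: the lemma is a formal consequence of the definition of preimage, and the three parts are not independent, since once unions and complements are in hand the intersection identity follows by De Morgan and Lemma statements already proved. I would therefore present the union and complement cases in full and either derive the intersection from them via De Morgan or, for symmetry with the union argument, simply record the parallel quantifier computation. The result is then available for the later theorem that preimages of Borel sets under a Borel map are Borel, which is presumably the reason this lemma is isolated.
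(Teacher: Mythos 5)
Your proposal is correct and follows essentially the same route as the paper's proof: both are direct element-chasing arguments that rewrite each preimage via the defining condition $f(x) \in A$ and observe that unions, intersections, and complements correspond to $\exists$, $\forall$, and negation, which commute with that condition. The paper phrases this as a chain of set-builder equalities rather than biconditionals for a fixed $x$, but the content is identical.
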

\begin{proof}
\begin{itemize}
\item $f^{-1}$ preserves countable unions: \begin{align*}
f^{-1}\left(\bigcup_{i} A_{i}\right) &= \left\{x \in X | f(x) \in \bigcup_{i} A_{i} \right\} \\
&= \left\{x \in X | \exists i \; f(x) \in A_{i} \right\} \\
&= \bigcup_{i} \left\{x \in X | f(x) \in A_{i} \right\} \\
&= \bigcup_{i} f^{-1}(A_{i})
\end{align*}

\item $f^{-1}$ preserves countable intersections: \begin{align*}
f^{-1}\left(\bigcap_{i} A_{i}\right) &= \left\{x \in X | f(x) \in \bigcap_{i} A_{i} \right\} \\
&= \left\{x \in X | \forall i \; f(x) \in A_{i} \right\} \\
&= \bigcap_{i } \left\{x \in X | f(x) \in A_{i} \right\} \\
&= \bigcap_{i} f^{-1}(A_{i})
\end{align*}

\item $f^{-1}$ preserves complements: \begin{align*}
f^{-1}\left(A^{\mathcal{C}}\right) &= \left\{x \in X | f(x) \in A^{\mathcal{C}} \right\} \\
&= \left\{x \in X | f(x) \not \in A \right\} \\
&= \left\{x \in X | f(x) \in A \right\}^{\mathcal{C}} \\
&= f^{-1}\left(A\right)^{\mathcal{C}}
\end{align*}
\end{itemize}
\end{proof}

\begin{definition} \label{D:BorelFunction}
Let $X$, $Y$ be metric spaces. A function $f: X \mapsto Y$ is said to be \emph{Borel measurable} (or simply \emph{Borel}) if for any Borel subset of $Y$, $A \subseteq Y$, the inverse image $f^{-1}(A)$ is a Borel set in $X$. Equivalently, a function is Borel measurable if and only if for any open set $V \subseteq Y$, $f^{-1}(V)$ is Borel, as we will show in Theorem \ref{T:generators} below.
\end{definition}

\begin{definition} \label{D:BorelIsomorphism}
Let $X$, $Y$ be metric spaces. A Borel isomorphism is a bijective function $f: X \mapsto Y$ such that both $f$ and $f^{-1}$ are Borel maps.
\end{definition}

\section{Introduction}

\subsection{An Informal Introduction}

Suppose we have a data set consisting of points, where each point is a set of observations, and a set of corresponding responses for each point (where the response can take on a finite number of values). In statistical machine learning, we would like to predict the response for new observations, where we know the data but not the response.

For example, suppose we would like to predict if a person has a predisposition for heart disease based on their genome. We have a database of people with their genome and whether or not they have heart disease. Now suppose we have a person for whom we know their genome but we do not know if they have heart disease. We would like to predict, based on their genomic sequence, if they have heart disease, with the only information available to us being the dataset and the person's genomic sequence. In our example, the response class has two levels (has heart disease or does not have heart disease), so this example is of a \emph{binary} classification problem.

Let $X$ be the data set and $Y$ be the set of responses we are trying to predict. The \emph{classifier} is a function $f: X \mapsto Y$ that attempts to predict the value of the response $y$ for a data entry $x$. The accuracy of $f$ is the proportion of the time that $f$ correctly predicts $y$ for a given $x$. Similarly, the error of $f$ is the proportion of the time that $f$ incorrectly classifies a sample point. We would like to find a classifier $f$ that has an accuracy that is as high as possible (or equivalently, an error that is as small as possible).

There are various types of classifiers, some of which we will discuss below. For any classifier, in order to test its accuracy we take the dataset and split it into two disjoint subsets, the \emph{training set} and the \emph{testing set}. The training set is used in constructing the classifier $f$, and based on this we predict the values for entries in the testing set. We then compare the values we predicted to the actual values and compute the accuracy of our prediction.

There are many classifiers, including k-nearest neighbour (k-NN), support vector machine (SVM), and random forest. The simplest one is k-NN and it is the one we will be using in this project.

The \emph{Bayes error} is the infimum of the errors of all classifiers on the data set. A classifier is said to be \emph{universally consistent} if, as the number of sample points approaches infinity, the error approaches the Bayes error.

\subsection{A More Precise Formulation}

Let $\Omega$ be a metric space called the \emph{domain} (possibly satisfying some additional criteria, such as separability or being a Banach space), $\{1, 2, ..., m\}$ be a finite set, and $\mu$ be a probability measure on $\Omega \times \{1, 2, ..., m\}$.

A \emph{classifier} is a Borel measurable function $f: \Omega \mapsto Y$, that maps points in $\Omega$ to classes in $\{1, 2, ..., m\}$. Without loss of generality, many authors assume that that there are only two classes $\{0, 1\}$. \cite{pbook}

The \emph{misclassification error} of a classifier $f$ is
\begin{align*}
err_{\mu}(f) = \mu \{(x, y) \in \Omega \times \{1, 2, ..., m\} : f(x) \neq y\}
\end{align*}

The \emph{Bayes error} is the infimum of the error of all possible classifiers
\begin{align*}
\ell^{*}(\mu) = \inf_{f} err_{\mu}(f)
\end{align*}

Since the misclassification probability must obviously be in $[0, 1]$, it follows that the Bayes error is well defined for any $\Omega$ (since $[0, 1]$ is bounded below and complete) and is in $[0, 1]$.

Suppose we have a set of $n$ iid random ordered pairs, $(X_1, Y_1)$, $(X_2, Y_2)$, ..., $(X_{n}, Y_{n})$, modelling the data. A classifier constructed from these $n$ data points is denoted $L_{n}$. The process of constructing $L_{n}$ is called $learning$.

A \emph{learning rule} is a sequence of classifiers $L = (L_{n})_{n=1}^{\infty}$, where $L_{n}$ is a classifier constructed from $n$ data points. A learning rule $L$ is said to be \emph{consistent} if $err_{\mu}(L_{n}) = \mu \{(x, y) \in \Omega \times \{1, 2, ..., m\} : L_{n}(x) \neq y\} \to \ell^{*}(\mu)$ in expectation as $n \to \infty$. We say that $L$ is \emph{universally consistent} if $L$ is consistent for every probability measure $\mu$ on $\Omega \times \{1, 2, ..., m\}$. One famous example of a universally consistent classifier (on $\R^{n}$) is the k-Nearest Neighbour classifier (kNN).

\subsection{k-Nearest Neighbour Classifier}

In order to apply the k-Nearest Neighbour classifier (k-NN), we first compute the distance between the input point and all the data points, and we then find the $k$ nearest neighbours of the input point. Then we do a majority vote for which attribute those $k$ neighbours have, and select that as the attribute for the point being classified. Thus, for the k-NN classifier, we require a metric structure on the domain (to compute the distances), and not just a Borel structure.

Here is example pseudocode of k-NN.
\begin{algorithm}
\caption{k-NN pseudocode}
\begin{algorithmic}
\REQUIRE $k \in \N$, $X$ is the domain, $Y$ is the response (must be a finite set $\{1, 2, ..., p\}$), $a \in X$, $(x_1, y_1)$, ..., $(x_{n}, y_{n}) \in X \times Y$
\STATE \COMMENT{Calculate distances from input point to all the data points}
\FOR{$i=1$ \TO $n$}
\STATE $d_{i} \leftarrow d(a, x_{i})$
\ENDFOR
\STATE \COMMENT{Find response for the $k$ nearest neighbours of the input point}
\FOR{$i=1$ \TO $k$}
\STATE $m \leftarrow \underset{m}{\operatorname{arg\,min}} \{ d_{m} $ such that $1 \leq m \leq n$ not previously selected$ \}$
\STATE $a_{i} \leftarrow y_{m}$
\ENDFOR
\STATE \COMMENT{Find number of times each response occurs among the $k$ nearest neighbours}
\FOR{$i=1$ \TO $p$}
\STATE $v_{i} \leftarrow$ number of times $i$ occurs in $\{a_1, a_2, ..., a_{k}\}$
\ENDFOR
\STATE $\{u_1, u_2, ..., u_{q}\} \leftarrow \{ y_{i} | 1 \leq i \leq p$ such that $v_{i}$ is maximal among $v_1, v_2, ..., v_{p} \}$ \COMMENT{Find the most common responses among the $k$ nearest neighbours}
\STATE $s \leftarrow RandomInteger(1, q)$ \COMMENT{Break ties by selecting a random response from the most common responses}
\STATE $r \leftarrow u_{s}$
\RETURN $r$ \COMMENT{Return most common response (or if a tie occurs, one of the most common responses)}
\end{algorithmic}
\end{algorithm}

Stone's theorem states that k-NN on $\R^n$ is universally consistent. \cite{Stone}
\begin{theorem} \label{T:Stone}
The k-NN classifier is universally consistent on $\R^n$ provided $k = k_{n} \rightarrow \infty$ and $\frac{k_{n}}{n} \rightarrow 0$ as $n \rightarrow \infty$. \cite{Stone}
\end{theorem}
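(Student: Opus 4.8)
The plan is to derive universal consistency of the $k$-NN rule from a general consistency theorem for \emph{local averaging} estimators, following the approach of \cite{pbook}, after reducing the classification problem to the estimation of the posterior (regression) function. To keep the notation clean I write $d$ for the dimension of the domain, reserving $n$ for the sample size. First I would recall the standard reduction: in the binary case, writing $\eta(x) = P(Y = 1 \mid X = x)$, the Bayes classifier is the plug-in rule $g^*(x) = \mathbf{1}\{\eta(x) > 1/2\}$, and for any estimate $\hat\eta_n$ of $\eta$ the induced plug-in classifier $g_n$ obeys the elementary bound
\[
\mathbb{E}\big[err_\mu(g_n)\big] - \ell^*(\mu) \le 2\, \mathbb{E}\big[\,|\hat\eta_n(X) - \eta(X)|\,\big],
\]
where the expectation is taken over both $X$ and the training sample; the $m$-class case uses the analogous bound on the vector of posteriors $\eta_i(x) = P(Y = i \mid X = x)$. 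Thus it suffices to show that the $k$-NN estimate of $\eta$ converges to $\eta$ in $L^1(\mu)$ (in fact I would aim for the stronger $L^2$ convergence), for every distribution of $(X,Y)$, which is exactly the notion of consistency in expectation used above.

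Second, I would cast $k$-NN as the local averaging estimator
\[
\hat\eta_n(x) = \sum_{i=1}^n W_{ni}(x)\, Y_i, \qquad W_{ni}(x) = \tfrac{1}{k}\,\mathbf{1}\{X_i \text{ is among the } k \text{ nearest neighbours of } x\},
\]
with ties broken by index. The workhorse is \emph{Stone's theorem} on such weights: if (i) there is a constant $c$, independent of $n$, with $\mathbb{E}\big[\sum_i W_{ni}(X) f(X_i)\big] \le c\,\mathbb{E}[f(X)]$ for every nonnegative measurable $f$; (ii) $\sum_i W_{ni}(X)\,\mathbf{1}\{\|X_i - X\| > a\} \to 0$ in probability for every $a > 0$ (locality); and (iii) $\max_i W_{ni}(X) \to 0$ in probability while $\sum_i W_{ni}(X) = 1$ (negligibility), then $\mathbb{E}|\hat\eta_n(X) - \eta(X)|^2 \to 0$ for every distribution with $\mathbb{E}[Y^2] < \infty$. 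I would prove this by splitting $\hat\eta_n(X) - \eta(X)$ into a noise term $\sum_i W_{ni}(X)(Y_i - \eta(X_i))$ and a smoothing-bias term $\sum_i W_{ni}(X)\eta(X_i) - \eta(X)$. Conditioning on $X_1,\dots,X_n$ kills the cross terms in the noise part, whose second moment is then at most $\mathbb{E}[\max_i W_{ni}(X)]\to 0$ by (iii); the bias part is handled by approximating $\eta$ in $L^2(\mu)$ by a continuous $\tilde\eta$ and combining the continuity of $\tilde\eta$ under locality (ii) with the domination (i), which bounds the $L^2$ cost of the replacement $\eta \leftrightarrow \tilde\eta$ by $c$ times a quantity that is small by density of continuous functions.

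Third, I would verify the three conditions for the $k$-NN weights under the hypotheses $k_n \to \infty$ and $k_n/n \to 0$. Conditions (ii) and (iii) are the routine ones: by construction $\max_i W_{ni} = 1/k_n \to 0$ and the weights sum to $1$, giving (iii); and $k_n/n \to 0$ forces the $k_n$-th nearest-neighbour distance of $X$ to tend to $0$ almost surely, which yields the locality (ii).

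The hard part will be condition (i), a purely geometric, distribution-free statement known as \emph{Stone's lemma}: a fixed point $x \in \R^d$ can be one of the $k$ nearest neighbours of at most $k\gamma_d$ of the sample points, where $\gamma_d$ is the number of cones of half-angle $\pi/6$ needed to cover $\R^d$. I would prove it by fixing such a cone cover with apex at $x$, each cone having the property that any two of its points $y,z$ with $\|z - x\| \ge \|y - x\|$ satisfy $\|y - z\| < \|z - x\|$ (a consequence of the angle at $x$ being at most $\pi/3$). If $x$ were a $k$-nearest neighbour of more than $k$ points lying in a single cone, I would take the farthest such point $z$ from $x$; the remaining $\ge k$ points would all be strictly closer to $z$ than $x$ is, so $x$ could not be among the $k$ nearest neighbours of $z$ --- a contradiction. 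Pigeonholing over the $\gamma_d$ cones gives (i) with $c = \gamma_d$, and since $\gamma_d$ depends only on $d$, the bound is uniform in $n$. The deeper subtlety, and the reason the theorem holds \emph{universally} with no density assumption on $\mu$, is precisely that the continuous-approximation step in Stone's theorem, fed by this distribution-free geometric lemma, replaces any Lebesgue-differentiation argument that would otherwise require a density; handling distance ties and the non-continuity of $\eta$ cleanly is where most of the care goes.
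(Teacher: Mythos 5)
The paper offers no proof of this theorem at all: it is stated as a known result and attributed to \cite{Stone}, so there is no internal argument to compare yours against. Your outline is the standard modern proof (essentially Chapters 5--6 and 11 of \cite{pbook}) and it is correct in structure: the plug-in reduction $\mathbb{E}[err_\mu(g_n)]-\ell^*(\mu)\le 2\,\mathbb{E}|\hat\eta_n(X)-\eta(X)|$, the general three-condition weight theorem with the noise/bias split, the verification that $k_n\to\infty$ gives negligibility and $k_n/n\to 0$ gives locality, and the cone-covering lemma for the domination condition. Two points deserve more care than your sketch gives them. First, in the cone lemma the inequality $\|y-z\|<\|z-x\|$ can degenerate to equality when the angle at $x$ is exactly $\pi/3$ and $\|y-x\|=\|z-x\|$, so one must either use cones of aperture strictly less than $\pi/3$ (whence a covering still exists with some finite $\gamma_d$) or argue the boundary case separately; as written the strict inequality is not justified. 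Second, breaking distance ties by index is not innocuous: the exchangeability argument that converts Stone's geometric lemma into condition (i) (comparing the roles of $X$ and $X_i$ among the $n+1$ points) is cleanest with randomized tie-breaking via an auxiliary independent uniform variable, and for general $\mu$ (which may have atoms or concentrate on spheres) ties occur with positive probability, so this is part of the ``universal'' claim rather than a removable nuisance. Neither issue changes the architecture of the proof, but both would need to be filled in for a complete argument.
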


One problem is that the data is often very high-dimensional, with many columns of data for each entry. The problem is that finding the nearest $k$ neighbours of a point (as $n \to \infty$) is a computationally hard problem.\cite{Pestov}

An important observation (noted in \cite{Pestov}) is that Stone's theorem does not depend on the Euclidean structure (that is, the metric or topological structure) of the domain, as long as the Borel structure is preserved. This means that if we apply a map to the domain that is bijective and preserves the Borel structure, then kNN will still be universally consistent if applied to the new space that the domain is mapped to. This provides a way to reduce the dimensionality of a data set, as shown in the following section.

\section{Borel Isomorphisms}

\subsection{Borel Sets}

We recall from \ref{D:BorelAlgebra} that the Borel algebra on a metric space $X$ is the smallest $\sigma$-algebra containing all open sets of $X$, and that a set is said to be Borel if it is in the Borel algebra. To proceed futher, we introduce the concept of \emph{generators} of the Borel algebra. This subsection contains some further background results.

\begin{definition}
If $X$ is a metric space, the $\sigma$-algebra \emph{generated} by a set of subsets $\F \subseteq \mathcal{P}(X)$ is the smallest $\sigma$-algebra that contains every set in $\F$. The sets in $\F$ are called the \emph{generators} of this $\sigma$-algebra.
\end{definition}

It is obvious that the set of open subsets of a metric space generates the Borel algebra, since by definition the Borel algebra is the smallest $\sigma$-algebra containing all open sets. To show that a family of subsets $\F \subseteq \mathcal{P}(X)$ generates the Borel algebra, it is sufficient to show that any open set can be created from $\F$ by taking countable unions and complements (recursively) of $\F$, and that all of $\F$ are Borel sets, as shown in the following lemma.

\begin{lemma} \label{L:BorelGenerators}
Let $X$ be a metric space and $\mathcal{F} \subseteq \mathcal{P}(X)$ be a family of Borel subsets of $X$ such that any open set $U \subseteq X$ is in the $\sigma$-algebra generated by $\mathcal{F}$. Then the family of sets $\mathcal{F}$ generates the Borel algebra. 
\end{lemma}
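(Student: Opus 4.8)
The plan is to prove set equality between the two $\sigma$-algebras by establishing containment in both directions. Write $\sigma(\mathcal{F})$ for the $\sigma$-algebra generated by $\mathcal{F}$ and $\mathcal{B}$ for the Borel algebra on $X$. The entire argument rests on the defining property of a generated $\sigma$-algebra, namely that $\sigma(\mathcal{F})$ is the \emph{smallest} $\sigma$-algebra containing $\mathcal{F}$ (equivalently, the intersection of all $\sigma$-algebras that contain $\mathcal{F}$, which is itself a $\sigma$-algebra), together with the analogous minimality built into Definition \ref{D:BorelAlgebra} for $\mathcal{B}$. I will first record, if needed, that this intersection really is a $\sigma$-algebra, so that the phrase ``smallest $\sigma$-algebra containing $\mathcal{F}$'' is well defined; this is the only point requiring any care, and it is a standard fact.

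For the inclusion $\sigma(\mathcal{F}) \subseteq \mathcal{B}$, I would observe that by hypothesis every member of $\mathcal{F}$ is a Borel set, so $\mathcal{F} \subseteq \mathcal{B}$. Since $\mathcal{B}$ is itself a $\sigma$-algebra containing $\mathcal{F}$, and $\sigma(\mathcal{F})$ is by definition the smallest such $\sigma$-algebra, minimality of $\sigma(\mathcal{F})$ forces $\sigma(\mathcal{F}) \subseteq \mathcal{B}$. For the reverse inclusion $\mathcal{B} \subseteq \sigma(\mathcal{F})$, I would use the remaining hypothesis: every open set $U \subseteq X$ lies in $\sigma(\mathcal{F})$. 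Thus $\sigma(\mathcal{F})$ is a $\sigma$-algebra containing all open sets of $X$. Because $\mathcal{B}$ is, by Definition \ref{D:BorelAlgebra}, the smallest $\sigma$-algebra containing all open sets, minimality of $\mathcal{B}$ gives $\mathcal{B} \subseteq \sigma(\mathcal{F})$.

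Combining the two inclusions yields $\sigma(\mathcal{F}) = \mathcal{B}$, which is exactly the assertion that $\mathcal{F}$ generates the Borel algebra. There is no genuine obstacle in this proof: it is purely formal, and the ``hard part'' amounts to nothing more than invoking the minimality of each $\sigma$-algebra against the other in the correct direction. The only thing to be vigilant about is keeping the two minimality statements straight, since they point in opposite directions and it is easy to conflate them.
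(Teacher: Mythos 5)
Your proof is correct and takes essentially the same double-inclusion approach as the paper's own proof. The only minor difference is that for the inclusion $\sigma(\mathcal{F}) \subseteq \mathcal{B}$ you invoke the minimality of $\sigma(\mathcal{F})$ over the $\sigma$-algebra $\mathcal{B} \supseteq \mathcal{F}$, whereas the paper argues that every set in the generated $\sigma$-algebra is Borel because $\mathcal{B}$ is closed under the generating operations; your formulation is, if anything, the cleaner and more rigorous of the two.
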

\begin{proof}
Let $\mathcal{A}$ be the $\sigma$-algebra generated by $\mathcal{F}$ and $\mathcal{B}$ be the Borel algebra of $X$.

Since every $B \in \mathcal{F}$ is Borel and the Borel algebra is closed under countable unions and complements, it follows that any set in $\mathcal{A}$ is Borel, so $\mathcal{A} \subseteq \mathcal{B}$.

Since (by assumption) all open subsets of $X$ are in $\mathcal{F}$, all open subsets of $X$ are in $\mathcal{A}$ and therefore $\mathcal{A}$ is a $\sigma$-algebra containing all open subsets of $X$. The Borel algebra $\mathcal{B}$ is the smallest $\sigma$-algebra containing all open subsets of $X$. It follows that $\mathcal{B} \subseteq \mathcal{A}$.

Since $\mathcal{A} \subseteq \mathcal{B}$ and $\mathcal{B} \subseteq \mathcal{A}$, $\mathcal{A} = \mathcal{B}$.
\end{proof}

An important result, that we will use later, is that the open balls of a separable metric space generate the Borel algebra. To prove this we will first need the following result.

\begin{theorem} \label{T:BallsGenerateOpenSets}
If $X$ is a separable metric space and $A \subseteq X$ is an open set, then $A$ is the countable union of open balls.
\end{theorem}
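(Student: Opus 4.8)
The plan is to use separability crucially: since $X$ is separable, it has a countable dense subset $D = \{d_1, d_2, \ldots\}$. The idea is that every point of the open set $A$ can be ``trapped'' inside a small open ball whose center comes from $D$ and whose radius is rational, with the ball contained entirely in $A$. Because there are only countably many such (center, radius) pairs, the collection of balls we use is automatically countable, which is exactly what we need.

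First I would fix an arbitrary point $x \in A$. Since $A$ is open, there exists $\epsilon > 0$ such that the open ball $B(x, \epsilon) \subseteq A$. Next I would use density of $D$ to find a center $d_i \in D$ very close to $x$, say within distance $\epsilon/3$, and choose a rational radius $q$ with $\epsilon/3 < q < \epsilon/2$. I would then verify two things via the triangle inequality: that $x \in B(d_i, q)$ (because $d(x, d_i) < \epsilon/3 < q$), and that $B(d_i, q) \subseteq B(x, \epsilon) \subseteq A$ (because any $z \in B(d_i, q)$ satisfies $d(z, x) \le d(z, d_i) + d(d_i, x) < q + \epsilon/3 < \epsilon$). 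Thus every $x \in A$ lies in some rational-radius, $D$-centered ball contained in $A$.

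The key bookkeeping step is then to let $\mathcal{C}$ be the collection of \emph{all} balls $B(d_i, q)$ with $d_i \in D$ and $q \in \Q_{>0}$ such that $B(d_i, q) \subseteq A$. This family is countable because it is indexed by a subset of the countable set $D \times \Q_{>0}$. I would argue that $A = \bigcup_{B \in \mathcal{C}} B$: the inclusion $\supseteq$ is immediate since every ball in $\mathcal{C}$ lies in $A$ by construction, and the inclusion $\subseteq$ follows from the previous paragraph, which shows each $x \in A$ belongs to at least one ball in $\mathcal{C}$. Hence $A$ is the countable union of open balls.

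I expect the main obstacle to be purely one of care rather than depth: the delicate part is choosing the radius so that the ball is simultaneously large enough to contain $x$ and small enough to stay inside $A$, while keeping the radius rational so the indexing set remains countable. The ``$\epsilon/3$ versus rational radius strictly between $\epsilon/3$ and $\epsilon/2$'' bookkeeping must be set up so that both triangle-inequality estimates close. Everything else is a routine application of density and the definition of an open set.
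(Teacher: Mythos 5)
Your proof is correct, but it achieves countability by a different mechanism than the paper does, and the comparison is worth making explicit. The paper intersects the countable dense set with $A$ to get $I = A \cap Y$ and then attaches to each center $x \in I$ exactly \emph{one} ball, whose radius is the canonical choice $\epsilon_x = \inf\{d(x,y) : y \in X \setminus A\}$ (the distance from $x$ to the complement); countability of the family then comes from countability of $I$, and the triangle-inequality step shows that the ball attached to a nearby center $z$ is automatically large enough to swallow $x$. You instead keep all of $D$ as potential centers but restrict to \emph{rational} radii, taking every ball $B(d_i,q)$ with $(d_i,q) \in D \times \Q_{>0}$ that happens to lie inside $A$; countability comes from $D \times \Q_{>0}$ being countable, and your $\epsilon/3$-versus-$q$ bookkeeping closes both estimates correctly. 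Your route is the standard ``countable base'' argument and is somewhat more robust: it needs no case split on whether $A = X$ (the paper's canonical radius is undefined when $X \setminus A = \emptyset$, forcing it to treat that case separately), and it sidesteps the paper's need to argue that $I$ is dense \emph{in $A$}. What the paper's approach buys is a slightly leaner family --- one ball per usable center rather than countably many --- though that economy is irrelevant for the theorem. One pedantic point common to both arguments: when $A = \emptyset$ the union is the empty (hence countable) union of balls, which is worth a word if one wants the statement to be literally true for every open set.
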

\begin{proof}
Let $Y$ be a countable dense subset of $X$, and $I = A \bigcap Y$. Then $I$ is a countable dense subset of $A$.

If $A = X$, then it is obvious that $A = \bigcup\limits_{x \in I} B_{1}(x)$, which is a countable union of open balls (of radius $1$).

If $A \neq X$, then $X \setminus A$ is nonempty (so $\exists w \in X \setminus A$). Let \begin{align*}
F = \{B_{\epsilon}(x) : x \in I, \epsilon = \inf\{d(x, y) : y \in X \setminus A\}\}
\end{align*}

Since $I$ is countable, it follows that $F$ is a countable set of open balls.

For any $B_{\epsilon}(x) \in F$, $\forall y \in B_{\epsilon}(x)$, $y \in A$, so it follows that $B_{\epsilon}(x) \subseteq A$. Since $F$ is a set of open balls in $A$, it follows that the union of the open balls $\cup F$ is a subset of $A$.

Let $x \in A$ and $\epsilon_{x} = \inf\{d(x, y) : y \in X \setminus A\}$. Since $I$ is dense in $A$, for some $z \in B_{\epsilon_{x} / 2}$, $z \in I$. Then for any $y \in X \setminus A$,
\begin{align*}
\epsilon_{x} &\leq d(x, y) \\
&\leq d(x, z) + d(z, y) \\
&\leq \frac{\epsilon_{x}}{2} + d(z, y)
\end{align*}

It follows that, for all $y \in X \setminus A$,
\begin{align*}
d(z, y) \geq \frac{\epsilon_{x}}{2}
\end{align*}

Let $\epsilon_{z} = \inf\{d(z, y) : y \in X \setminus A\}$ ($\epsilon_{z}$ is the radius of the ball around $z$ in the set $F$). Since $\forall y \in X \setminus A$, $d(z, y) \geq \frac{\epsilon_{x}}{2}$, it follows that the $\epsilon_{z} = \inf\{d(z, y) : y \in X \setminus A\} \geq \frac{\epsilon_{x}}{2}$. 

Since $d(x, z) < \frac{\epsilon_{x}}{2} < \epsilon_{z}$, it follows that $x \in B_{\epsilon_{z}}(z)$, and hence $x$ is contained in at least one of the open balls in $F$. Therefore, $x \in \bigcup F$. This means that for all $x \in F$, $x \in \bigcup F$, and hence $F \subseteq X$.

Therefore, there is a countable set $F$ of open balls such that $F = X$ (since $F \subseteq X$ and $X \subseteq F$).
\end{proof}

We are now able to show that the Borel algebra of a separable metric space is generated by the open balls of that metric space.

\begin{theorem} \label{T:BallsGenerateBorelAlgebra}
The Borel algebra of a separable metric space $X$ is generated by the family of open balls of $X$.
\end{theorem}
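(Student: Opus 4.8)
The plan is to prove this as a direct corollary of the two results just established, namely Lemma \ref{L:BorelGenerators} and Theorem \ref{T:BallsGenerateOpenSets}. The strategy is to take $\mathcal{F}$ to be the family of all open balls of $X$ and verify that it satisfies the two hypotheses of Lemma \ref{L:BorelGenerators}: first, that every member of $\mathcal{F}$ is a Borel set, and second, that every open subset of $X$ lies in the $\sigma$-algebra generated by $\mathcal{F}$. Once both conditions are checked, the lemma immediately yields that $\mathcal{F}$ generates the Borel algebra, which is exactly the claim.

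First I would observe that each open ball $B_{\epsilon}(x)$ is an open set, and since the Borel algebra is by Definition \ref{D:BorelAlgebra} the smallest $\sigma$-algebra containing all open sets, every open ball is Borel. This disposes of the first hypothesis with essentially no work.

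Next I would establish the second hypothesis. Let $U \subseteq X$ be an arbitrary open set. By Theorem \ref{T:BallsGenerateOpenSets}, since $X$ is separable, $U$ can be written as a countable union of open balls, say $U = \bigcup_{i} B_{i}$ with each $B_{i} \in \mathcal{F}$. A $\sigma$-algebra is closed under countable unions, so the $\sigma$-algebra generated by $\mathcal{F}$ contains $U$. As $U$ was an arbitrary open set, every open subset of $X$ lies in the $\sigma$-algebra generated by $\mathcal{F}$, which is precisely the second hypothesis of Lemma \ref{L:BorelGenerators}.

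With both conditions verified, applying Lemma \ref{L:BorelGenerators} finishes the argument. I do not anticipate any genuine obstacle here: the only substantive content, namely the decomposition of open sets into countable unions of balls, has already been carried out in Theorem \ref{T:BallsGenerateOpenSets}, where separability is essential (it supplies the countable dense set used to index the balls). The present statement is really just the packaging of that decomposition into the language of generators, so the separability hypothesis enters only through the invocation of that prior theorem and requires no further attention in this proof.
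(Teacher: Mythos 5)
Your proposal is correct and follows essentially the same route as the paper: both reduce the claim to Theorem \ref{T:BallsGenerateOpenSets} (every open set is a countable union of open balls) and conclude via the generator criterion. If anything, your version is slightly more careful than the paper's, since you explicitly verify both hypotheses of Lemma \ref{L:BorelGenerators} (in particular that open balls are themselves Borel), whereas the paper leaves that step implicit.
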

\begin{proof}
Let $A \subseteq X$ be an open set. Then by Theorem \ref{T:BallsGenerateOpenSets}, it follows that $A$ is the union of a countable family of open balls. This means that under the operation of countable unions, any open set can be formed from open balls. This means that the set of open balls generates the $\sigma$-algebra containing all open sets, and therefore generates the Borel algebra.
\end{proof}

\begin{lemma} \label{T:LowerBoundedIntervalsGenerateBorelAlgebra}
The set of upper open rays $(a, \infty)$ (where $a \in \R$) generates the Borel $\sigma$-algebra on $\R$. \cite{Folland}
\end{lemma}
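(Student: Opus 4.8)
The plan is to apply Lemma \ref{L:BorelGenerators} with the family $\mathcal{F}$ taken to be the set of all upper open rays $(a, \infty)$, $a \in \R$. First I would observe that each ray $(a, \infty)$ is an open subset of $\R$ and is therefore Borel, so the requirement in Lemma \ref{L:BorelGenerators} that the generators themselves be Borel is immediate. It then remains only to verify that every open set $U \subseteq \R$ lies in the $\sigma$-algebra $\mathcal{A}$ generated by the rays, after which the lemma delivers the conclusion.

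To establish this, I would build up the bounded open intervals from the rays inside $\mathcal{A}$ using the permitted $\sigma$-algebra operations. Since $\mathcal{A}$ is closed under complements, each closed lower ray $(-\infty, a] = (a, \infty)^{c}$ belongs to $\mathcal{A}$. Taking a countable union then produces the open lower rays via the identity
\begin{align*}
(-\infty, b) = \bigcup_{n=1}^{\infty} \left(-\infty,\; b - \tfrac{1}{n}\right],
\end{align*}
so each $(-\infty, b)$ is in $\mathcal{A}$. Finally, because a $\sigma$-algebra is closed under countable intersections (apply De Morgan to properties (2) and (3) of Definition \ref{D:SigmaAlgebra}), every bounded open interval arises as
\begin{align*}
(a, b) = (a, \infty) \cap (-\infty, b) \in \mathcal{A}.
\end{align*}

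The remaining step is to express an arbitrary open set as a countable union of such intervals. Every open $U \subseteq \R$ is the union of the open intervals with rational endpoints that it contains, and there are only countably many such intervals, so $U$ is a countable union of members of $\mathcal{A}$ and hence $U \in \mathcal{A}$. Alternatively, I could invoke Theorem \ref{T:BallsGenerateBorelAlgebra} directly, since $\R$ is separable and its open balls are precisely the bounded open intervals, which I have just shown lie in $\mathcal{A}$. Either route shows every open set belongs to the $\sigma$-algebra generated by the rays, and Lemma \ref{L:BorelGenerators} then gives that the upper open rays generate the Borel $\sigma$-algebra.

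I expect no step to be genuinely hard; the only points requiring care are confirming that a $\sigma$-algebra really is closed under the countable intersection used to form $(a,b)$, and invoking the standard fact that every open subset of $\R$ decomposes as a countable union of open intervals. The main risk is therefore purely bookkeeping: ensuring that each set manipulation stays within the closure operations guaranteed by Definition \ref{D:SigmaAlgebra}, rather than any conceptual obstacle.
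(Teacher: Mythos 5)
Your proof is correct, and it reaches the same overall milestone as the paper -- showing that every bounded open interval lies in the $\sigma$-algebra $\mathcal{A}$ generated by the rays, then concluding via the fact that open balls generate the Borel algebra of the separable space $\R$ -- but the decomposition you use to get there is genuinely different. The paper writes the open ball $B_r(x)=(x-r,x+r)$ as the countable \emph{union} of the half-open intervals $I_n = \left(x-\tfrac{n}{n+1}r,\; x+\tfrac{n}{n+1}r\right]$, each of which is obtained from two upper rays by finite unions and complements, so the countable operation sits at the outermost level. You instead build the lower open ray $(-\infty,b)=\bigcup_n (-\infty, b-\tfrac{1}{n}]$ from complements of upper rays and then form $(a,b)=(a,\infty)\cap(-\infty,b)$ by a countable intersection; the limiting process is pushed into the construction of $(-\infty,b)$. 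Your route has the small advantage of producing the lower rays and closed rays as intermediate objects (which the paper's Remark \ref{R:BorelIfPreimageOfIntervalIsBorelExtended} asserts separately without proof), and it makes explicit the appeal to Lemma \ref{L:BorelGenerators} and to closure under countable intersections, which the paper leaves implicit. The paper's route avoids needing the rational-endpoint decomposition of open sets by leaning directly on Theorem \ref{T:BallsGenerateOpenSets}; your fallback of invoking Theorem \ref{T:BallsGenerateBorelAlgebra} accomplishes the same thing and is the cleaner of your two options, since it reuses machinery already established in the paper. The only point worth a passing remark in a write-up is the degenerate case $a \geq b$, where $(a,b)=\emptyset$, which is harmlessly in $\mathcal{A}$.
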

\begin{proof}
Let $B_{r}(x)$ be an open ball in $\R$ centered at $x \in \R$ of radius $r$. Then \begin{align*}
B_{r}(x) = \left(x - r, x + r\right)
\end{align*}

We then define $I_{n}$ (for $n \geq 1$) as 
\begin{align*}
I_{n} = \left(x - \frac{n}{n+1} r, x + \frac{n}{n+1} r \right]
\end{align*}

We find that
\begin{align*}
I_{n} &= \left(x - \frac{n}{n+1} r, x + \frac{n}{n+1} r \right] \\
&= \left(x - \frac{n}{n+1} r, \infty\right) \bigcap \left( -\infty, x + \frac{n}{n+1} r \right] \\
&= \left(\left(\left(x - \frac{n}{n+1} r, \infty\right) \bigcap \left( -\infty, x + \frac{n}{n+1} r \right]\right)^{\mathsf{C}}\right)^{\mathsf{C}} \\
&= \left(\left(x - \frac{n}{n+1} r, \infty\right)^{\mathsf{C}} \bigcup \left( -\infty, x + \frac{n}{n+1} r \right]^{\mathsf{C}}\right)^{\mathsf{C}} \\
&= \left(\left(x - \frac{n}{n+1} r, \infty\right)^{\mathsf{C}} \bigcup \left( x + \frac{n}{n+1} r, \infty \right)\right)^{\mathsf{C}} \\
\end{align*}

This means that for all $n \geq 1$, $I_{n}$ can we written in terms of complements and countable unions (in fact, finite unions) of upper open rays.

We would like to to show that $\bigcup\limits_{n = 1}\limits^{\infty} I_{n} = B_{r}(x)$ by proving subset inclusions both ways.

Let $n \geq 1$ and $z \in I_{n}$. Then \begin{align*}
x - r < x - \frac{n}{n+1} r < z \leq x + \frac{n}{n+1} r < x + r
\end{align*}
This means that $z \in (x - r, x + r) = B_{r}(x)$, which means that $I_{n} \subseteq B_{r}(x)$. Since for all $n \geq 1$, $I_{n} \subseteq B_{r}(x)$, it follows that the union of all the $I_{n}$ is a subset of $B_{r}(x)$, \begin{align*}
\bigcup\limits_{n = 1}\limits^{\infty} I_{n} \subseteq B_{r}(x)
\end{align*}

Let $z \in B_{r}(x)$ and let $s = |x - z|$, so that $z = x \pm s$. Since $B_{r}(x)$ is an open ball of radius $r$, it follows that $ 0 \leq s < r$. We then set \begin{align*}
n > \frac{s}{r - s}
\end{align*}

We then find that
\begin{align*}
x + \frac{n}{n+1} r > x + \frac{\frac{s}{r - s}}{\frac{s}{r - s}+1} r = x + \frac{s}{s + r - s} r = x + s 
\end{align*}
and that
\begin{align*}
x - \frac{n}{n+1} r < x - \frac{\frac{s}{r - s}}{\frac{s}{r - s}+1} r = x - \frac{s}{s + r - s} r = x - s 
\end{align*}

This means that $x - s, x + s \in \left(x - \frac{n}{n+1} r, x + \frac{n}{n+1} r\right]$, which means that $z \in I_{n}$, so $z \in \bigcup\limits_{n = 1}\limits^{\infty} I_{n}$. It follows that $B_{r}(x) \subseteq \bigcup\limits_{n = 1}\limits^{\infty} I_{n}$.

Therefore, since $\bigcup\limits_{n = 1}\limits^{\infty} I_{n} \subseteq B_{r}(x)$ and $B_{r}(x) \subseteq \bigcup\limits_{n = 1}\limits^{\infty} I_{n}$, $B_{r}(x) = \bigcup\limits_{n = 1}\limits^{\infty} I_{n}$.
\end{proof}

\subsection{Borel Functions and Isomorphisms}

We recall the definition of a Borel function and isomorphism from \ref{D:BorelFunction} and \ref{D:BorelIsomorphism}, respectively. A function $f: X \mapsto Y$ is a Borel isomorphism if and only if it is bijective, $f$ maps Borel sets to Borel sets, and $f^{-1}$ maps Borel sets to Borel sets. 

\begin{theorem} \label{T:composition} 
The composition of Borel isomorphisms is a Borel isomorphism.
\end{theorem}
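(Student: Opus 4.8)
The plan is to reduce the statement to the single fact that the composition of Borel measurable maps is again Borel measurable, and then apply this fact twice --- once to the maps themselves and once to their inverses. Throughout, let $f: X \mapsto Y$ and $g: Y \mapsto Z$ be Borel isomorphisms, so that by Definition \ref{D:BorelIsomorphism} each of $f$, $g$, $f^{-1}$, $g^{-1}$ is a bijection and a Borel map in the sense of Definition \ref{D:BorelFunction}. I want to show that $g \circ f: X \mapsto Z$ is a Borel isomorphism.

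First I would establish the auxiliary claim that if $h: X \mapsto Y$ and $k: Y \mapsto Z$ are Borel maps, then $k \circ h: X \mapsto Z$ is a Borel map. The argument rests on the elementary set-theoretic identity $(k \circ h)^{-1}(A) = h^{-1}(k^{-1}(A))$, valid for every $A \subseteq Z$. Given a Borel set $A \subseteq Z$, Borel measurability of $k$ gives that $k^{-1}(A)$ is Borel in $Y$, and then Borel measurability of $h$ gives that $h^{-1}(k^{-1}(A))$ is Borel in $X$; hence $(k \circ h)^{-1}(A)$ is Borel and $k \circ h$ is a Borel map.

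With this claim in hand, the theorem follows in three short steps. The composition $g \circ f$ is a bijection, being a composition of bijections. Applying the auxiliary claim to $f$ and $g$ shows that $g \circ f$ is a Borel map. Finally, since $(g \circ f)^{-1} = f^{-1} \circ g^{-1}$ and both $f^{-1}$ and $g^{-1}$ are Borel maps, the auxiliary claim applied to $g^{-1}$ and $f^{-1}$ shows that $(g \circ f)^{-1}$ is a Borel map. Thus $g \circ f$ is a bijection whose forward and inverse maps are both Borel, which is precisely the definition of a Borel isomorphism.

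There is no serious obstacle here; the proof is essentially bookkeeping with preimages. The only point requiring any care is to keep straight which direction each measurability hypothesis is used: verifying that $g \circ f$ is Borel uses the Borel measurability of $f$ and $g$, whereas verifying that its inverse is Borel uses the Borel measurability of $f^{-1}$ and $g^{-1}$, and it is the combination of both that the notion of Borel isomorphism is designed to supply.
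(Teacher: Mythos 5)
Your proposal is correct and follows essentially the same route as the paper: verify bijectivity, then the forward map, then the inverse map, composing the two hypotheses in each case. The only cosmetic difference is that you work with preimages via the identity $(k \circ h)^{-1}(A) = h^{-1}(k^{-1}(A))$, exactly as Definition \ref{D:BorelIsomorphism} is stated, whereas the paper phrases the same two steps in terms of forward images of Borel sets (equivalent for bijections); your version hews slightly closer to the stated definition.
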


\begin{proof}
Let $X$, $Y$, $Z$ be metric spaces and $f: X \mapsto Y$, $g: Y \mapsto Z$ be Borel isomorphisms.
We need to show that $g \circ f: X \mapsto Z$ is a Borel isomorphism, so that $g \circ f$ is bijective, maps Borel sets to Borel sets, and whose inverse maps Borel sets to Borel sets.
\begin{itemize}
\item Show that $g \circ f$ is bijective.

Follows from the fact that the composition of bijections is a bijection.

\item Show that $f \circ g$ maps Borel sets to Borel sets.

Let $B \subseteq X$ be a Borel set. Since $f: X \mapsto Y$ is a Borel isomorphism, it follows that $f(B) \subseteq Y$ is a Borel set. Furthermore, $g: Y \mapsto Z$ is a Borel isomorphism, so $g(f(B)) \subseteq Z$ is a Borel set, which means that $g \circ f(B) \subseteq Z$ is Borel. Therefore $g \circ f: X \mapsto Z$ maps Borel sets to Borel sets.

\item Show that the inverse of $g \circ f$ is maps Borel sets to Borel sets.

Since both $f: X \mapsto Y$ and $g: Y \mapsto Z$ are bijective, the inverse of $g \circ f: X \mapsto Z$ is $f^{-1} \circ g^{-1}: Z \mapsto X$. Let $B \subseteq Z$ be a Borel set. Since $g: Y \mapsto Z$ is a Borel isomorphism, the inverse $g^{-1}: Z \mapsto Y$ is a Borel isomorphism and so $g^{-1}(B) \subseteq Y$ is Borel. Furthermore $f: X \mapsto Y$ is also a Borel isomorphism, so that the inverse $f^{-1}: Y \mapsto Z$ is a Borel isomorphism and so $f^{-1}(g^{-1}(B)) \subseteq Y$ is also a Borel set. Therefore the inverse maps Borel sets to Borel sets.
\end{itemize}

Since $g \circ f: X \mapsto Z$ is bijective, maps Borel sets to Borel sets, and its inverse maps Borel sets to Borel sets, it is a Borel Isomorphism.
\end{proof}

This means that if we find some Borel isomorphisms, their composition is a Borel isomorphism. 

In general, proving that a function is Borel can be very difficult, as we must show that the preimage of any arbitrary Borel set is a Borel set. Fortunately, we have a theorem that provides an easier way to verify that a function is Borel, by allowing us to check that the inverse image of a generator is Borel instead of having to check every Borel set. In order to prove this theorem we must first prove a couple of lemmas.

\begin{lemma} \label{T:PreimageIsSigmaAlgebra}
Let $X$, $Y$ be metric spaces and $\mathcal{F}$ be a $\sigma$-algebra on $Y$. Let $f: X \mapsto Y$ be a function. Then $\mathcal{A} = \{f^{-1}(A) : A \in \F\}$ is a $\sigma$-algebra.
\end{lemma}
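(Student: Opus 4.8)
The plan is to verify directly that $\mathcal{A} = \{f^{-1}(A) : A \in \F\}$ satisfies the three defining properties of a $\sigma$-algebra from Definition \ref{D:SigmaAlgebra}, namely that it contains the empty set, is closed under complements, and is closed under countable unions. The engine driving every step will be Lemma \ref{T:PreimagePreservesSetOperations}, which tells us that $f^{-1}$ commutes with countable unions, countable intersections, and complements. Each property of $\mathcal{A}$ will be obtained by pulling the corresponding property back through $f^{-1}$ from the $\sigma$-algebra $\F$ on $Y$, where it is known to hold by hypothesis.

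First I would handle the empty set. Since $\F$ is a $\sigma$-algebra, $\emptyset \in \F$, and because $f^{-1}(\emptyset) = \emptyset$ we conclude $\emptyset \in \mathcal{A}$. Next I would treat complements: given any $B \in \mathcal{A}$, by definition $B = f^{-1}(A)$ for some $A \in \F$. Since $\F$ is closed under complements, $A^{\mathcal{C}} \in \F$, and by the complement-preservation clause of Lemma \ref{T:PreimagePreservesSetOperations} we have $f^{-1}(A^{\mathcal{C}}) = f^{-1}(A)^{\mathcal{C}} = B^{\mathcal{C}}$, so $B^{\mathcal{C}} \in \mathcal{A}$. Finally, for countable unions, I would take a countable family $B_1, B_2, \ldots \in \mathcal{A}$, write each as $B_i = f^{-1}(A_i)$ with $A_i \in \F$, use closure of $\F$ under countable unions to get $\bigcup_i A_i \in \F$, and then invoke the union-preservation clause of Lemma \ref{T:PreimagePreservesSetOperations} to obtain
\begin{align*}
\bigcup_i B_i = \bigcup_i f^{-1}(A_i) = f^{-1}\left(\bigcup_i A_i\right) \in \mathcal{A}.
\end{align*}

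This argument is essentially mechanical once Lemma \ref{T:PreimagePreservesSetOperations} is in hand, so I do not anticipate a genuine obstacle. The one point requiring mild care is bookkeeping in the complement and union steps: each element of $\mathcal{A}$ must first be \emph{named} as a preimage $f^{-1}(A)$ of a specific set $A \in \F$ before the preservation identities can be applied, and in the union step one should note that the choice of a witness $A_i$ for each $B_i$ is all that is needed, with no appeal to injectivity or surjectivity of $f$. Having verified all three axioms, I would conclude that $\mathcal{A}$ is a $\sigma$-algebra on $X$.
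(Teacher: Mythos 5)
Your proposal is correct and follows essentially the same route as the paper: verify the three $\sigma$-algebra axioms for $\mathcal{A}$ directly, using Lemma \ref{T:PreimagePreservesSetOperations} to transfer closure under complements and countable unions from $\F$ through $f^{-1}$. Your explicit remark about first naming each element of $\mathcal{A}$ as a witness $f^{-1}(A)$ with $A \in \F$ is a small point of care that the paper's write-up glosses over, but the substance is identical.
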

\begin{proof}
Since $\F$ is a $\sigma$-algebra, $\emptyset \in \F$ and $Y \in \F$, which means that $f^{-1}(\emptyset) = \emptyset \in \A$ and $f^{-1}(Y) = X \in \A$, so $\A$ contains the empty set and the entire space.

Let $A_{i} \in \F$ ($i \in \N$) be a countable family of sets in $\F$ and $f^{-1}(A_{i}) \in \A$ be the corresponding countable family of sets in $\A$. Then by Lemma \ref{T:PreimagePreservesSetOperations}, $\bigcup\limits_{i \in \N} f^{-1}(A_{i}) = f^{-1}\left(\bigcup\limits_{i \in \N} A_{i}\right)$, and since $\F$ is a $\sigma$-algebra $\bigcup\limits_{i \in \N} A_{i} \in \F$, which means that $f^{-1}\left(\bigcup\limits_{i \in \N} A_{i}\right) \in \A$, and hence $\A$ is closed under countable unions.

Let $A \in \F$, so $f^{-1}(A) \in \A$. Then by Lemma \ref{T:PreimagePreservesSetOperations}, $f^{-1}(A)^\mathcal{C} = f^{-1}(A^\mathcal{C})$, which means $f^{-1}(A)^\mathcal{C} \in \A$. This means that $\A$ is closed under complements.

Since $\A$ contains $\emptyset$ and $X$, and is closed under countable unions and complements, $\A$ is a $\sigma$-algebra.
\end{proof}

\begin{lemma} \label{T:PreimageSigmaAlgebraCorollary}
Let $X$, $Y$ be metric spaces, $f: X \mapsto Y$ be a function, and $\mathcal{A}$ and $\mathcal{B}$ be $\sigma$-algebras on $X$ and $Y$, respectively. Then $\F = \{B \in \mathcal{B} : f^{-1}(B) \in \A\}$ is a $\sigma$-algebra.
\end{lemma}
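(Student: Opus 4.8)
The plan is to verify directly that $\F$ satisfies the three defining axioms of a $\sigma$-algebra on $Y$, following the same pattern as Lemma \ref{T:PreimageIsSigmaAlgebra}. The point to keep in mind is that membership in $\F$ is a \emph{conjunction} of two conditions, namely that a set lies in $\mathcal{B}$ and that its preimage under $f$ lies in $\A$. The first condition will always be immediate, since $\mathcal{B}$ is itself a $\sigma$-algebra; the second is where Lemma \ref{T:PreimagePreservesSetOperations} does the work, allowing me to commute $f^{-1}$ with each set operation and then appeal to the fact that $\A$ is a $\sigma$-algebra.

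First I would check that $\emptyset \in \F$: we have $\emptyset \in \mathcal{B}$, and $f^{-1}(\emptyset) = \emptyset \in \A$, so both clauses hold. For closure under complements, I would take $B \in \F$, so that $B \in \mathcal{B}$ and $f^{-1}(B) \in \A$; then $B^{\mathcal{C}} \in \mathcal{B}$ because $\mathcal{B}$ is closed under complements, while Lemma \ref{T:PreimagePreservesSetOperations} gives $f^{-1}(B^{\mathcal{C}}) = f^{-1}(B)^{\mathcal{C}} \in \A$, so $B^{\mathcal{C}} \in \F$. For closure under countable unions, I would take a countable family $B_i \in \F$ with $i \in \N$; then $\bigcup_i B_i \in \mathcal{B}$ since $\mathcal{B}$ is closed under countable unions, while Lemma \ref{T:PreimagePreservesSetOperations} gives $f^{-1}\left(\bigcup_i B_i\right) = \bigcup_i f^{-1}(B_i) \in \A$, so $\bigcup_i B_i \in \F$. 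Having confirmed all three axioms, $\F$ is a $\sigma$-algebra.

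I do not expect a genuine obstacle, since the argument is essentially a repackaging of Lemma \ref{T:PreimageIsSigmaAlgebra} with an extra membership clause carried along. The only thing requiring care is the bookkeeping of the two simultaneous conditions: at each step I must verify both the ``lies in $\mathcal{B}$'' clause and the ``preimage lies in $\A$'' clause, rather than silently dropping one. This lemma is presumably a stepping stone toward the theorem that a function is Borel as soon as preimages of generators are Borel, where $\mathcal{B}$ will be instantiated as the Borel algebra of $Y$ and $\A$ as the collection of Borel sets of $X$.
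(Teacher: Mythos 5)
Your proposal is correct and follows essentially the same route as the paper's own proof: verify the three axioms directly, using Lemma \ref{T:PreimagePreservesSetOperations} to commute $f^{-1}$ with complements and countable unions and the $\sigma$-algebra properties of $\mathcal{A}$ and $\mathcal{B}$ to check both membership clauses. Your explicit remark about tracking the two conditions in the conjunction is exactly the right bookkeeping point.
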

\begin{proof}
We notice that both $\emptyset$, $Y \in \mathcal{B}$, $f^{-1}(\emptyset) = \emptyset \in \A$, and $f^{-1}(Y) = X \in \A$, so both $\emptyset$, $Y \in \F$.

For any countable family $B_{i} \in \F$, since $\mathcal{B}$ is a $\sigma$-algebra $\bigcup\limits_{i \in \N} B_{i} \in \mathcal{B}$ and $f^{-1}\left(\bigcup\limits_{i \in \N} B_{i}\right) = \bigcup\limits_{i \in \N} f^{-1}(B_{i}) \in \A$ (by Lemmas \ref{T:PreimagePreservesSetOperations} and \ref{T:PreimageIsSigmaAlgebra}), which means that $\bigcup\limits_{i \in \N} B_{i} \in \F$.

For any set $B \in \F$, since $\mathcal{B}$ is a $\sigma$-algebra $B^{\mathcal{C}} \in \mathcal{B}$ and $f^{-1}\left(B^{\mathcal{C}}\right) = f^{-1}\left(B\right)^{\mathcal{C}} \in \A$ (by Lemmas \ref{T:PreimagePreservesSetOperations} and \ref{T:PreimageIsSigmaAlgebra}), which means that $B^{\mathcal{C}} \in \F$.

This means that $\F$ contains $\emptyset$ and $Y$, is closed under countable unions, and is closed under complements. Therefore, $\F$ is a $\sigma$-algebra on $Y$.
\end{proof}

\begin{theorem} \label{T:generators}

Let $X$, $Y$ be metric spaces, $f: X \mapsto Y$ be a function, $\mathcal{A}$ and $\mathcal{B}$ be the Borel algebras on $X$ and $Y$, respectively, and $\mathcal{C}$ be a family of sets that generates the Borel algebra of $Y$. If for all $B \in \mathcal{C}$, $f^{-1}(B)$ is Borel, then $f$ is a Borel function. \cite{Berberian} \cite{Dudley}
\end{theorem}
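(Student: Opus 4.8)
The plan is to use the ``good sets'' principle, reducing the problem from checking the preimage of every Borel set to checking only the preimages of the generators in $\mathcal{C}$. The engine for this reduction is Lemma \ref{T:PreimageSigmaAlgebraCorollary}, which guarantees that the relevant collection of sets is itself a $\sigma$-algebra.

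First I would define the family of ``good'' sets
\[
\F = \{B \in \mathcal{B} : f^{-1}(B) \in \A\},
\]
that is, the Borel subsets of $Y$ whose preimage under $f$ is Borel in $X$. By Lemma \ref{T:PreimageSigmaAlgebraCorollary}, $\F$ is a $\sigma$-algebra on $Y$, and by its very definition $\F \subseteq \mathcal{B}$. The next step is to show $\mathcal{C} \subseteq \F$. Since $\mathcal{C}$ generates the Borel algebra $\mathcal{B}$, every $B \in \mathcal{C}$ lies in $\mathcal{B}$ and is therefore Borel; and by the hypothesis of the theorem, $f^{-1}(B) \in \A$ for each such $B$. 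Thus each generator satisfies both defining conditions of $\F$, so $\mathcal{C} \subseteq \F$.

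I would then close the argument by minimality. Because $\F$ is a $\sigma$-algebra containing $\mathcal{C}$, and $\mathcal{B}$ is by definition the smallest $\sigma$-algebra containing $\mathcal{C}$, it follows that $\mathcal{B} \subseteq \F$. Combined with the inclusion $\F \subseteq \mathcal{B}$ noted above, this forces $\F = \mathcal{B}$. Consequently $f^{-1}(B) \in \A$ for every $B \in \mathcal{B}$, which is precisely the statement that $f$ is a Borel function.

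The only real subtlety --- hardly an obstacle --- is the conceptual step of not attacking an arbitrary Borel set directly (such a set may be built by many nested countable operations), but instead packaging all Borel sets at once inside the $\sigma$-algebra $\F$ and invoking minimality of the generated $\sigma$-algebra. Lemma \ref{T:PreimageSigmaAlgebraCorollary} does the heavy lifting; without it I would have to re-verify closure of $\F$ under countable unions and complements by hand using Lemma \ref{T:PreimagePreservesSetOperations}. I expect no computational difficulty, only this single application of the generating property of $\mathcal{C}$.
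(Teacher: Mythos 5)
Your proposal is correct and follows exactly the same route as the paper's own proof: both define the family $\{B \in \mathcal{B} : f^{-1}(B) \in \mathcal{A}\}$, invoke Lemma \ref{T:PreimageSigmaAlgebraCorollary} to see it is a $\sigma$-algebra, check it contains $\mathcal{C}$, and conclude by minimality of the generated $\sigma$-algebra that it equals $\mathcal{B}$. No differences worth noting.
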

\begin{proof}
Let $\mathcal{D} = \{B \subseteq \mathcal{B} : f^{-1}(B) \in A\}$. Then by Lemma \ref{T:PreimageSigmaAlgebraCorollary} $\mathcal{D}$ is a $\sigma$-algebra.

Since $\forall B \in \mathcal{D}$, $B \in \mathcal{B}$, it is clear that $\mathcal{D} \subseteq \mathcal{B}$.

For any set $B \in \mathcal{C}$, $B \in \mathcal{B}$ and $f^{-1}(B) \in \A$ by assumption, which means that $B \in \mathcal{D}$. It follows that $\mathcal{C} \subseteq \mathcal{D}$. Since $\mathcal{B}$ is the smallest $\sigma$-algebra containing $\mathcal{C}$ (since $\mathcal{C}$ generates $\mathcal{B}$) and $\mathcal{D}$ is a $\sigma$-algebra containing $\mathcal{C}$, this means that $\mathcal{B} \subseteq \mathcal{D}$.

Since $\mathcal{D} \subseteq \mathcal{B}$ and $\mathcal{B} \subseteq \mathcal{D}$, it follows that $\mathcal{D} = \mathcal{B}$. This means that for any set $B \in \mathcal{B}$, $f^{-1}(B) \in \A$ and hence $f$ is Borel.
\end{proof}

\begin{theorem} \label{T:continuous} 
Any continuous function is Borel.
\end{theorem}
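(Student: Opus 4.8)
The plan is to reduce the statement to Theorem \ref{T:generators}, which allows us to verify the Borel condition only on a generating family of the target Borel algebra rather than on every Borel set. First I would recall the topological reformulation of continuity for metric spaces: a function $f: X \mapsto Y$ is continuous if and only if for every open set $V \subseteq Y$, the preimage $f^{-1}(V)$ is open in $X$. This is the familiar $\epsilon$-$\delta$ definition rewritten in terms of open sets, and it is precisely the form that connects to the machinery developed above.

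Next I would observe that every open subset of $X$ is Borel, since by Definition \ref{D:BorelAlgebra} the Borel algebra is a $\sigma$-algebra containing all open sets of $X$. Combining this with the continuity reformulation shows that $f^{-1}(V)$ is Borel in $X$ for every open set $V \subseteq Y$, since each such preimage is open and hence lies in the Borel algebra of $X$.

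The final step is to apply Theorem \ref{T:generators} with $\mathcal{C}$ taken to be the family of all open subsets of $Y$. By the definition of the Borel algebra, this family generates the Borel algebra of $Y$, and we have just shown that $f^{-1}(V)$ is Borel for each $V \in \mathcal{C}$. Theorem \ref{T:generators} then yields immediately that $f$ is a Borel function. I do not anticipate a genuine obstacle here, as the essential work has already been absorbed into Theorem \ref{T:generators}; the only point requiring mild care is making explicit the equivalence between the metric definition of continuity and the preimage-of-open-is-open formulation, together with the remark that open sets are legitimate members of a generating family precisely because they are themselves Borel.
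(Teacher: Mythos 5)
Your proposal is correct and follows the same route as the paper: both use the fact that continuity means preimages of open sets are open (hence Borel), and then invoke Theorem \ref{T:generators} with the open sets of $Y$ as the generating family. No substantive difference.
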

\begin{proof}
Let $X$, $Y$ be metric spaces, and $f: X \mapsto Y$ be a continuous function. Then for any open set $A \subseteq Y$, $f^{-1}(A)$ is open and is therefore Borel. Since the open sets of $Y$ generate the Borel algebra of $Y$ and the inverse image of any open set is a Borel set, by Theorem \ref{T:generators}, $f$ is Borel.
\end{proof}

We use this fact to show that any homeomorphism is a Borel isomorphism.

\begin{theorem} \label{T:homeomorphism} 
Any homeomorphism (continuous bijection with a continuous inverse) is a Borel isomorphism.
\end{theorem}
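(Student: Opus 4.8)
The plan is to unwind the definition of a Borel isomorphism (Definition \ref{D:BorelIsomorphism}) and then discharge each of its requirements by appealing to results already in hand, chiefly Theorem \ref{T:continuous}. By definition, to show that a homeomorphism $f : X \mapsto Y$ is a Borel isomorphism I must verify three things: that $f$ is bijective, that $f$ is a Borel map, and that the inverse $f^{-1}$ is a Borel map. The point is that a homeomorphism comes packaged with exactly the data needed to make all three checks routine.

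First I would record that bijectivity is immediate, since a homeomorphism is by definition a continuous bijection; in particular the inverse $f^{-1} : Y \mapsto X$ exists as a function. Next, since $f$ is continuous, Theorem \ref{T:continuous} applies directly and yields that $f$ is Borel. The only step that requires a moment's care is the third: here I would emphasize that the defining property of a homeomorphism is that $f^{-1}$ is \emph{also} continuous, so Theorem \ref{T:continuous} applies a second time, now to $f^{-1}$ in place of $f$, giving that $f^{-1}$ is Borel as well. Combining these, $f$ is a bijection for which both $f$ and $f^{-1}$ are Borel, which is precisely the definition of a Borel isomorphism.

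I do not expect any genuine obstacle, as the statement is essentially a corollary of Theorem \ref{T:continuous}; the heavy lifting was already done there (via the generator criterion of Theorem \ref{T:generators}). The single place to stay alert is simply to invoke the continuity of \emph{both} $f$ and $f^{-1}$, rather than of $f$ alone, so that the symmetric roles of domain and codomain are respected. Aside from that, the proof is a two-line application of the preceding theorem to each direction of the map.
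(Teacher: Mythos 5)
Your proposal is correct and matches the paper's proof essentially verbatim: both arguments note that bijectivity is part of the definition of a homeomorphism and then apply Theorem \ref{T:continuous} twice, once to $f$ and once to $f^{-1}$, to conclude that both maps are Borel. No gaps.
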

\begin{proof}
Let $X$, $Y$ be metric spaces and $f: X \mapsto Y$ be a continuous bijection with a continuous inverse.
We need to show that $f: X \mapsto Y$ is a Borel isomorphism.

We already know $f: X \mapsto Y$ is bijective. Furthermore, since $f$ is a bijective continuous function, by Theorem \ref{T:continuous} it follows that $f$ is a Borel function. Furthermore, $f^{-1}$ is also a continuous bijective function, so by Theorem \ref{T:continuous} $f^{-1}$ is also a Borel function. Since $f: X \mapsto Y$ is a bijective Borel function whose inverse is Borel, it follows that $f: X \mapsto Y$ is a Borel isomorphism.
\end{proof}

\begin{theorem} \label{T:PointwiseBorel}
Let $X$, $Y$ be metric spaces and $f_n: X \mapsto Y$ be a sequence of Borel functions that converges pointwise to a function $f: X \mapsto Y$. Then $f$ is Borel. \cite{Dudley}
\end{theorem}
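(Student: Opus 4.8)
The plan is to invoke Theorem \ref{T:generators}: since the open subsets of $Y$ generate its Borel algebra, it suffices to show that $f^{-1}(V)$ is a Borel subset of $X$ for every open set $V \subseteq Y$. The strategy is to exploit the pointwise convergence $f_n \to f$ together with the openness of $V$ in order to express $f^{-1}(V)$ as a countable combination of unions and intersections of the sets $f_n^{-1}(U)$ for suitable open sets $U$, each of which is Borel because every $f_n$ is Borel.

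First I would approximate $V$ from the inside by an increasing sequence of open sets. For each positive integer $m$ set
\[
U_m = \{y \in Y : d(y, Y \setminus V) > 1/m\},
\]
with the convention that the distance to the empty set is $+\infty$, so that $U_m = Y$ in the edge case $V = Y$. Because the function $y \mapsto d(y, Y \setminus V)$ is (1-Lipschitz, hence) continuous, each $U_m$ is open and therefore Borel; and since $V$ is open, every point of $V$ has strictly positive distance to the closed set $Y \setminus V$, which gives $V = \bigcup_{m} U_m$.

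Next I would establish the set-theoretic identity
\[
f^{-1}(V) = \bigcup_{m=1}^{\infty} \bigcup_{N=1}^{\infty} \bigcap_{n \geq N} f_n^{-1}(U_m),
\]
proving the two inclusions separately. For the inclusion $\supseteq$, if $f_n(x) \in U_m$ for all $n \geq N$, then $d(f_n(x), Y \setminus V) > 1/m$; letting $n \to \infty$ and using $f_n(x) \to f(x)$ with the continuity of the distance function yields $d(f(x), Y \setminus V) \geq 1/m > 0$, and since $Y \setminus V$ is closed this forces $f(x) \in V$. For the inclusion $\subseteq$, if $f(x) \in V$ then $d(f(x), Y \setminus V) > 2/m$ for some $m$; pointwise convergence supplies an $N$ with $d(f_n(x), f(x)) < 1/m$ for all $n \geq N$, and the triangle inequality then gives $f_n(x) \in U_m$ for every such $n$.

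Once the identity is in hand the conclusion is routine: each $f_n^{-1}(U_m)$ is Borel, so each $\bigcap_{n \geq N} f_n^{-1}(U_m)$ is a countable intersection of Borel sets and hence Borel, and the two outer countable unions keep us inside the Borel algebra. Thus $f^{-1}(V)$ is Borel for every open $V$, and Theorem \ref{T:generators} completes the argument. The main obstacle is discovering and verifying the correct identity: the two distinct gap constants ($1/m$ defining $U_m$ versus $2/m$ extracted from $f(x) \in V$) are precisely what make both inclusions go through, and care is needed at the limit step to ensure that the strict inequality survives as a non-strict one while still guaranteeing membership in the \emph{open} set $V$ — which works exactly because $Y \setminus V$ is closed.
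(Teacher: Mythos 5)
Your proof is correct and follows essentially the same route as the paper's: both establish the identity $f^{-1}(V)=\bigcup_{m}\bigcup_{N}\bigcap_{n\ge N}f_n^{-1}(\,\cdot\,)$ using an inner approximation of the open set $V$ by sets of points at distance roughly $1/m$ from $Y\setminus V$, and then invoke Theorem \ref{T:generators}. The only difference is that you take the approximating sets $U_m$ open (so that each $f_n^{-1}(U_m)$ is Borel immediately), whereas the paper uses the closed sets $F_m=\{y\in U: B_{1/m}(y)\subseteq U\}$; your verification of both inclusions, including the $2/m$ versus $1/m$ gap, is sound.
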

\begin{proof}
The following proof is based on the proof of Theorem 4.2.2 in \cite{Dudley}, but with more details added. \footnote{A simpler version of the proof is possible if we assume $Y = \R$, but our version works when $Y$ is an arbitrary metric space. See \cite{Bruckner} for the simpler proof assuming $Y = \R$.}

We need to show that $f^{-1}(U)$ is Borel for any open set $U \subseteq Y$, since the open sets generate the Borel algebra by Theorem \ref{T:generators} this is sufficient to show $f$ is Borel.

Let $F_{m} = \{y \in U | B_{1/m}(y) \subseteq U\}$. We need to show that $F_{m}$ is closed. Let $x_1, x_2, ...$ be a sequence in $F_{m}$ that converges to $x \in Y$. Suppose that $x \not \in F_{m}$.

Let $\ell = \inf\{d(x, y) | y \not \in U$\}, since $x \not \in F_{m}$, $\ell < \frac{1}{m}$. Set $\epsilon < \frac{1}{m} - \ell$. Then there exists an $N \geq 1$ such that for all $n \geq N$, $d(x_{n}, x) < \epsilon$. We then find that for all $n \geq N$,
\begin{align*}
\inf\{ d(x_{n}, y) | y \not \in U \} &\leq \inf\{ d(x_{n}, x) + d(x, y) | y \not \in U \}\\
&\leq \epsilon + \ell \\
&< \frac{1}{m} - \ell + \ell \\
&= \frac{1}{m}
\end{align*}
This means that $\inf\{ d(x_{n}, y) | y \not \in U \} < \frac{1}{m}$, and so for some $r < \frac{1}{m}$ there exists $y \not \in U$ such that $d(x_{n}, y) \leq r < \frac{1}{m}$, so that $B_{1/m}(x_{n}) \not \subseteq U$, which is a contradiction as $x_1, x_2, ... \in F_{m}$. Therefore, $x \in F_{m}$ and so $F_{m}$ is closed.

It is obvious that if $x \in F_{m}$ (for some $m$), then $x \in U$. If $x \in U$, since $U$ is open for some $\epsilon > 0$, $B_{\epsilon}(x) \subseteq U$, which means that for $M > \frac{1}{\epsilon}$, for all $m > M$, $B_{1/m}(x) \subseteq U$. This means that $x \in U$ if and only if $\exists M \geq 1$ $\forall m \geq M$ $x \in F_{m}$.

Since $f_{n}$ converges pointwise to $f$, it follows that for all $x \in f^{-1}(U)$, there exists an $N \geq 1$ such that for all $n > N$, $d(f(x), f_{n}(x)) < \frac{1}{2 m}$ which means that for all $n \geq N$, $f_{n}(x) \in F_{2 m}$. It follows that for some $m \geq 1$ and $N \geq 1$, if $f(x) \in f^{-1}(U)$ then $f(x) \in \bigcap\limits_{n \geq N} f_{n}(F_{m})$.

Similarly, if there exists an $N \geq 1$ such that for all $n \geq N$, $f_{n}(x) \in F_{m}$, then it is obvious that $f(x) \in F_{m}$ and hence $f(x) \in U$. This means that if for some $m \geq 1$ and $n \geq N$, $f(x) \in \bigcap\limits_{n \geq N} f_{n}(F_{m})$, then $f(x) \in f^{-1}(U)$.

Combining these results, we find that
\begin{align*}
f^{-1}(U) = \bigcup\limits_{m=1}\limits^{\infty} \bigcup\limits_{N=1}\limits^{\infty} \bigcap\limits_{n \geq N} f_{n}(F_{m})
\end{align*}
\end{proof}

\begin{theorem} \label{T:BorelBijectionIsIsomorphism}
Let X, Y be complete separable metric spaces and $f: X \mapsto Y$ be a Borel bijective function. Then $f$ is a Borel isomorphism.
\end{theorem}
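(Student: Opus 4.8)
The plan is to reduce the statement to a single fact about \emph{images}. Since $f$ is a bijection, its set-theoretic inverse $f^{-1}\colon Y \to X$ satisfies $(f^{-1})^{-1}(B) = f(B)$ for every $B \subseteq X$. Hence, by Theorem \ref{T:generators} applied to the map $f^{-1}$, it suffices to show that $f(U)$ is Borel in $Y$ whenever $U$ ranges over a generating family for the Borel algebra of $X$; since open sets are Borel generators, it is enough to prove the cleaner statement that $f$ carries \emph{every} Borel subset of $X$ to a Borel subset of $Y$. Combined with the hypothesis that $f$ is already Borel, this shows both $f$ and $f^{-1}$ are Borel, i.e. $f$ is a Borel isomorphism. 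So the entire difficulty is concentrated in proving that the injective Borel image of a Borel set is Borel (this is the classical Lusin--Souslin theorem).

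To attack that, I would introduce the class of \emph{analytic} sets: a subset $A \subseteq Y$ is analytic if it is a continuous image of the Baire space $\N^{\N}$, equivalently a Borel image of a complete separable metric space. First I would record the easy closure properties --- every Borel set is analytic, the Borel image of a Borel set is analytic, and the analytic sets are closed under countable unions, countable intersections, and Borel preimages --- which follow by manipulating the defining maps much as in Lemma \ref{T:PreimagePreservesSetOperations}. The crucial input is the Lusin separation theorem: any two disjoint analytic sets are contained in a pair of complementary Borel sets that separate them. From separation one deduces Souslin's theorem, that a set is Borel if and only if both it and its complement are analytic.

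With this machinery in place, the argument for injective Borel $f$ and Borel $B \subseteq X$ runs as follows. The image $f(B)$ is analytic immediately, so by Souslin's theorem it is enough to prove $f(B)$ is also \emph{coanalytic}, and this is exactly where injectivity enters. I would first reduce to the case of a continuous injective $f$ by refining the topology of $X$ to a finer complete separable metric topology with the same Borel sets in which $f$ becomes continuous and $B$ becomes clopen. Then, using completeness, I would build a Lusin scheme $(C_s)$ indexed by finite sequences, partitioning $B$ into Borel pieces of diameter tending to zero; injectivity makes the images $f(C_s)$ pairwise disjoint at each level, so the separation theorem supplies pairwise disjoint Borel sets $B_s \supseteq f(C_s)$ nested along the tree. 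A verification using continuity and the shrinking diameters then identifies $f(B)$ with the Borel set $\bigcap_n \bigcup_{\lvert s\rvert = n} \bigl(B_s \cap \overline{f(C_s)}\bigr)$, exhibiting $f(B)$ as Borel.

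The main obstacle is precisely this last step --- establishing that $f(B)$ is coanalytic. The separation theorem itself is the deep ingredient, and arranging the Lusin scheme so that the resulting countable Boolean combination genuinely reconstructs $f(B)$ (rather than some larger analytic set) is where both the injectivity of $f$ and the completeness of $X$ are indispensable; without injectivity the claim is false, since general Borel images of Borel sets are analytic but need not be Borel. Everything else --- the reduction to images via $(f^{-1})^{-1}(B)=f(B)$ and the closure properties of analytic sets --- is routine bookkeeping of the kind already carried out earlier in this section.
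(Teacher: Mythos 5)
Your proposal is correct and follows exactly the route the paper relies on: the paper gives no argument of its own here, deferring entirely to Theorem 15.1 (Lusin--Souslin) and Corollary 15.2 of Kechris, and your outline --- reduce to showing injective Borel images of Borel sets are Borel, pass through analytic sets, the Lusin separation theorem, a change of topology making $f$ continuous, and a Lusin scheme whose separated Borel envelopes reconstruct $f(B)$ as $\bigcap_n \bigcup_{\lvert s\rvert = n} \bigl(B_s \cap \overline{f(C_s)}\bigr)$ --- is precisely the standard proof of that cited result. It is of course a sketch (the separation theorem and the final verification are only invoked, not proved), but it correctly locates where injectivity and completeness are used, which is more than the paper's one-line citation provides.
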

\begin{proof}
This result is proven in Theorem 14.12, Theorem 15.1 (the Lusin-Souslin Theorem) and Corollary 15.2 in \cite{Kechris}.
\end{proof}

\begin{theorem}
Any two complete separable metric spaces of the same cardinality are Borel isomorphic. \cite{Kechris}
\end{theorem}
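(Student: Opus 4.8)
The plan is to classify complete separable metric spaces (Polish spaces) up to Borel isomorphism according to their cardinality, treating three cases. First I would dispose of the at-most-countable cases. If $X$ is a finite or countably infinite metric space, then every singleton $\{x\}$ is closed and hence Borel, so every subset of $X$ --- being a countable union of singletons --- is Borel. Consequently any bijection between two at-most-countable metric spaces maps Borel sets to Borel sets in both directions and is therefore automatically a Borel isomorphism. Thus two Polish spaces of the same finite or countably infinite cardinality are Borel isomorphic, and the content lies entirely in the uncountable case.

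For the uncountable case I would use the fact that every uncountable Polish space has cardinality exactly the continuum $2^{\aleph_0}$, so that ``same cardinality'' simply means both spaces are uncountable. The strategy is to show that \emph{every} uncountable Polish space $X$ is Borel isomorphic to the Cantor space $C = 2^{\N}$. Granting this, for two uncountable Polish spaces $X$ and $Y$ we obtain Borel isomorphisms $X \to C$ and $C \to Y$, whose composition is a Borel isomorphism $X \to Y$ by Theorem \ref{T:composition}.

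To prove $X \cong_B C$, I would produce Borel injections in both directions and apply a Borel form of the Schröder--Bernstein theorem. For $C \to X$: since $X$ is uncountable and Polish, the Cantor--Bendixson analysis yields a nonempty perfect subset of $X$, into which the Cantor set embeds homeomorphically, giving a continuous --- hence Borel, by Theorem \ref{T:continuous} --- injection $C \to X$. For $X \to C$: every separable metric space embeds continuously in the Hilbert cube $[0,1]^{\N}$, and via binary expansion $[0,1]$ is Borel isomorphic to $2^{\N}$ after matching up a countable exceptional set, whence $[0,1]^{\N}$ is Borel isomorphic to $(2^{\N})^{\N} \cong 2^{\N \times \N} \cong 2^{\N} = C$; composing produces a Borel injection $X \to C$. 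Given the two injections $f: X \to C$ and $g: C \to X$, the Lusin--Souslin theorem (the result underlying Theorem \ref{T:BorelBijectionIsIsomorphism}) ensures that each has Borel image and a Borel inverse on that image, so the classical Cantor--Schröder--Bernstein partition argument splits $X$ and $C$ into countably many Borel pieces on which a bijection agrees alternately with $f$ and $g^{-1}$; being defined piecewise on a countable Borel partition, this bijection and its inverse are Borel, giving $X \cong_B C$.

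I expect the main obstacle to be constructing the injection $C \to X$, that is, locating a Cantor set inside an arbitrary uncountable Polish space. This requires building a Cantor scheme: a tree of shrinking, pairwise disjoint closed balls indexed by finite binary strings, with diameters tending to zero, using completeness to pass to limits and the Cantor--Bendixson analysis to guarantee that at every node the relevant piece is still uncountable and hence splits into two nonempty parts. The remaining ingredients are either routine (the countable case, the Hilbert cube embedding, the binary-expansion isomorphism) or standard bookkeeping (the piecewise-Borel Schröder--Bernstein argument), so the perfect-set construction is where the real work sits.
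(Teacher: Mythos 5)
Your proposal is correct and follows essentially the same route as the paper's sketch: dispose of the at-most-countable case by noting every subset is Borel, then show every uncountable Polish space is Borel isomorphic to the Cantor set by producing a Borel injection into it (via the Hilbert cube embedding and the binary-expansion isomorphism) and a continuous injection out of it (via a perfect subset), and finishing with the Borel Schr\"oder--Bernstein theorem. The only differences are cosmetic --- you unpack the Cantor scheme and the Lusin--Souslin underpinnings of Schr\"oder--Bernstein a bit more explicitly, and you note that uncountable Polish spaces all have cardinality continuum, which the paper leaves implicit.
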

\begin{proof}
The following sketch of a proof is based on Theorem 15.6 of \cite{Kechris}. We omit the technical details.

If both metric spaces $X$ and $Y$ are countable sets of the same cardinality, then the Borel algebra on $X$ and $Y$ is simply the power set, so that any bijection between $X$ and $Y$ is a Borel isomorphism.

We therefore need to show that any two uncountable separable complete metric spaces are Borel isomorphic. Let $X$ be an uncountable complete separable metric space. We need to show that $X$ is Borel isomorphic to the Cantor set $\mathcal{C}$.

We start with the fact that the closed unit inverval $I = [0, 1]$ and $\mathcal{C}$ are Borel isomorphic, in fact, here a Borel isomorphism between $I$ and $\mathcal{C}$ is can be constructed explicitly, by first taking the binary expansion of $x \in I$ and then in the binary expansion replacing each digit $1$ with with the digit $2$, which is then the ternary (base $3$) expansion of a value in the Cantor set (in the binary representation here, all numbers other than $1$ are expressed without an infinite string of $1$'s, and the number $1$ is expressed as $0.111...$). \cite{Srivastava} We then use the fact that I and the Hilbert cube $I^{\N}$ are Borel isomorphic, which means that $\mathcal{C}$ and $I^{\N}$ are Borel isomorphic. By Theorem 4.14 in Kechris, $X$ is homeomorphic to a subspace of $I^{\N}$, which means that there is an injective Borel mapping from $X$ to $\mathcal{C}$.  Also, by Theorem 6.5 in Kechris $X$ there is a continuous injective function from $\mathcal{C}$ to $X$.

This means that there is a Borel injective mapping $f: X \mapsto \mathcal{C}$ and a Borel injective mapping $g: \mathcal{C} \mapsto X$. Therefore, by the Borel Schr\"oder-Bernstein Theorem (Theorem 15.7 in Kechris), there exists a Borel isomorphism between $X$ and $\mathcal{C}$. Since any uncountable complete separable metric space is Borel isomorphic to the Cantor set $\mathcal{C}$, this means that any two uncountable complete separable metric spaces are Borel isomorphic.
\end{proof}

This means all uncountable separable complete metric spaces are Borel isomorphic, for instance the Cantor set, the closed unit interval $[0, 1]$, the real line $\R$, Euclidean space $\R^{n}$, $\ell^2$ space, the space of continuous functions on $[0, 1]$ (with the supremum metric) $C[0, 1]$, the product of finitely many unit intervals (with the Euclidean metric) $[0, 1]^{n}$, and any separable Banach space other than $\{0\}$ are all pairwise Borel isomorphic.

\subsection{Borel Maps to $\R$}

Here we work out in detail a particular case, where the function is a mapping to $\R$, as an example that will be used later in the project. For functions to $\R$ we have a theorem that provides a simpler way to verify that a function is Borel.

\begin{theorem} \label{T:BorelIfPreimageOfIntervalIsBorel}
A function $f: X \mapsto \R$ is Borel if and only if for all $a \in \R$, $f^{-1}((a, \infty))$ is Borel.
\end{theorem}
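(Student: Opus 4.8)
The plan is to prove the two directions separately, since the statement is a biconditional, and in each case to reduce it to results already established in this section.

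First I would handle the forward (``only if'') direction: assuming $f$ is Borel, I want to conclude that $f^{-1}((a,\infty))$ is Borel for every $a \in \R$. This is immediate from the definition of a Borel function (Definition \ref{D:BorelFunction}). Each upper open ray $(a,\infty)$ is an open subset of $\R$ and is therefore Borel; since $f$ is Borel, the preimage of any Borel set — in particular of $(a,\infty)$ — is Borel. No further work is needed in this direction.

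The reverse (``if'') direction is where the auxiliary results do the work. Assume $f^{-1}((a,\infty))$ is Borel for every $a \in \R$. By Lemma \ref{T:LowerBoundedIntervalsGenerateBorelAlgebra}, the family $\mathcal{C} = \{(a,\infty) : a \in \R\}$ generates the Borel $\sigma$-algebra on $\R$. The hypothesis says precisely that $f^{-1}(B)$ is Borel for every $B \in \mathcal{C}$, so I can invoke Theorem \ref{T:generators} with this choice of generating family to conclude that $f$ is a Borel function.

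I do not expect a genuine obstacle here: both directions collapse to citing the appropriate earlier result, the only point requiring care being the correct identification of the generating family $\mathcal{C}$ so that Theorem \ref{T:generators} applies. In effect this theorem is a corollary that packages the generation lemma together with the generator criterion into the convenient ``upper rays suffice'' form that will be used later in the project.
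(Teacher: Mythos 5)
Your proposal is correct and follows the same route as the paper's own proof: the forward direction is immediate since each ray $(a,\infty)$ is open and hence Borel, and the reverse direction combines Lemma \ref{T:LowerBoundedIntervalsGenerateBorelAlgebra} with Theorem \ref{T:generators}. No gaps.
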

\begin{proof}
Since for all $a \in \R$, $(a, \infty)$ is open (and therefore Borel), the forward direction is obvious.

By Lemma \ref{T:LowerBoundedIntervalsGenerateBorelAlgebra}, the upper open rays of $\R$ generate the Borel algebra of $\R$. Therefore, by Theorem \ref{T:generators}, since the inverse image of any lower-bounded open interval is Borel and the lower-bounded open intervals generate the Borel algebra, $f$ is Borel.
\end{proof}

\begin{remark} \label{R:BorelIfPreimageOfIntervalIsBorelExtended}
Since the Borel algebra on $\R$ is also generated by the lower open rays $(-\infty, a)$ (where $a \in \R$), the open intervals $(a, b)$ (where $a, b \in \R$), and the closed intervals extending to infinity (closed rays) $(-\infty, a]$ and $[a, \infty)$ (where $a \in \R$), the inverse image of $(a, \infty)$ in Theorem \ref{T:BorelIfPreimageOfIntervalIsBorel} can be replaced by any of these and the theorem remains valid. \cite{Folland}
\end{remark}

Using this theorem, it becomes much easier to prove a function is Borel, as we only have to show that the preimage of $(a, \infty)$ for any $a \in \R$ is Borel, instead of having to show that the preimage of any Borel set is Borel.

\begin{lemma} \label{L:FlipBorel}
If $f: X \mapsto \R$ is Borel, then $-f$ is Borel.
\end{lemma}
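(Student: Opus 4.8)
The plan is to apply the criterion of Theorem \ref{T:BorelIfPreimageOfIntervalIsBorel}, which reduces the problem to checking the preimage under $-f$ of a generating family of the Borel algebra on $\R$. Since $-f$ is again a map $X \mapsto \R$, to show it is Borel it suffices to verify that $(-f)^{-1}((a, \infty))$ is Borel for every $a \in \R$. This avoids having to reason about arbitrary Borel sets directly.

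The key step is to rewrite this preimage explicitly, using that negation reverses inequalities. For any $a \in \R$,
\begin{align*}
(-f)^{-1}((a, \infty)) &= \{x \in X : -f(x) > a\} \\
&= \{x \in X : f(x) < -a\} \\
&= f^{-1}((-\infty, -a)).
\end{align*}
Since $f$ is Borel and $(-\infty, -a)$ is an open (hence Borel) subset of $\R$, the set $f^{-1}((-\infty, -a))$ is Borel by the definition of a Borel function. Thus $(-f)^{-1}((a,\infty))$ is Borel for every $a \in \R$, and Theorem \ref{T:BorelIfPreimageOfIntervalIsBorel} then yields that $-f$ is Borel.

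There is essentially no serious obstacle here; the only point that requires care is the bookkeeping of the inequality when negating, which turns the open upper ray $(a,\infty)$ into the open lower ray $(-\infty,-a)$. One could instead invoke Remark \ref{R:BorelIfPreimageOfIntervalIsBorelExtended} and check the lower-ray version of the criterion, or note informally that negation $x \mapsto -x$ is continuous on $\R$; but since the excerpt provides no general result on composing a continuous map with a Borel map, the direct preimage computation above is the cleanest route, relying only on the definition of a Borel function together with Theorem \ref{T:BorelIfPreimageOfIntervalIsBorel}.
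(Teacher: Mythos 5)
Your proof is correct and follows essentially the same route as the paper's: both reduce the claim via Theorem \ref{T:BorelIfPreimageOfIntervalIsBorel} to the identity $(-f)^{-1}((a,\infty)) = f^{-1}((-\infty,-a))$ and then use that $(-\infty,-a)$ is Borel. No discrepancies to note.
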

\begin{proof}
Let $a \in \R$. Since $f$ is Borel, $f^{-1}((a, \infty))$ is a Borel set. Then we find
\begin{align*}
(-f)^{-1}((a, \infty)) &= \{x | -f(x) > a\} \\
&= \{x | f(x) < -a\} \\
&= f^{-1}((-\infty, -a))
\end{align*}

Since $f$ is Borel and $(-\infty, -a)$ is a Borel set, $f^{-1}((-\infty, -a))$ is a Borel set, which means that $(-f)^{-1}((a, \infty))$ is a Borel set, and hence $-f$ is Borel.
\end{proof}

\begin{theorem} \label{T:MultiplyScalarBorel}
If $f: X \mapsto \R$ is Borel, then for all $c \in \R$, $c f$ is Borel.
\end{theorem}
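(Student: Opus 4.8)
The plan is to reduce everything to the preimage criterion of Theorem \ref{T:BorelIfPreimageOfIntervalIsBorel}: it suffices to show that $(cf)^{-1}((a,\infty))$ is a Borel set for every $a \in \R$. I would organize the argument by the sign of the scalar $c$, since whether multiplication by $c$ preserves or reverses the direction of an inequality (and whether it is even invertible) changes which set the preimage reduces to.

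First I would handle the case $c > 0$. Here multiplication by $c$ preserves the direction of the strict inequality, so $(cf)^{-1}((a,\infty)) = \{x \in X : cf(x) > a\} = \{x \in X : f(x) > a/c\} = f^{-1}((a/c,\infty))$, which is Borel because $f$ is assumed Borel. Next I would dispose of the degenerate case $c = 0$ separately, precisely because one cannot divide by $c$: then $cf$ is the identically-zero function, so $(cf)^{-1}((a,\infty))$ equals all of $X$ when $a < 0$ and equals $\emptyset$ when $a \geq 0$, and both $X$ and $\emptyset$ are Borel.

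Finally, for $c < 0$ the cleanest route is to reuse Lemma \ref{L:FlipBorel}: writing $c = -|c|$ with $|c| > 0$, the function $|c|\,f$ is Borel by the first case, whence $cf = -(|c|\,f)$ is Borel by Lemma \ref{L:FlipBorel}. Alternatively one can argue directly, noting that dividing through by a negative number reverses the inequality, so $(cf)^{-1}((a,\infty)) = \{x \in X : f(x) < a/c\} = f^{-1}((-\infty, a/c))$, which is Borel by Remark \ref{R:BorelIfPreimageOfIntervalIsBorelExtended}. In all three cases the preimage of an arbitrary upper open ray is Borel, so Theorem \ref{T:BorelIfPreimageOfIntervalIsBorel} gives that $cf$ is Borel.

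I do not expect a genuine obstacle in this proof; it is essentially a bookkeeping exercise. The only points that demand care are remembering to reverse the direction of the inequality when $c < 0$ (and thereby invoking the lower open ray from Remark \ref{R:BorelIfPreimageOfIntervalIsBorelExtended} rather than an upper one), and treating $c = 0$ as a separate case instead of carelessly dividing by the scalar.
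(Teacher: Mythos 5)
Your proposal is correct and follows essentially the same route as the paper: reduce to the preimage criterion of Theorem \ref{T:BorelIfPreimageOfIntervalIsBorel}, compute $(cf)^{-1}((a,\infty)) = f^{-1}((a/c,\infty))$ for $c>0$, treat $c=0$ as the trivial zero function, and handle $c<0$ by combining the positive case with Lemma \ref{L:FlipBorel} (the paper writes $cf=(-c)(-f)$ while you write $cf=-(|c|f)$, which is the same reduction with the two steps commuted). Your explicit computation of the preimages in the $c=0$ case is slightly more careful than the paper's ``obviously Borel,'' but the argument is the same.
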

\begin{proof}
Let $a \in \R$. Since $f: X \mapsto \R$ is Borel, this means $f^{-1}((a, \infty))$ is Borel.

Suppose without loss of generality that $c > 0$ (if $c = 0$, then $f$ is the zero function, which is obviously Borel, and if $c < 0$, then by Lemma \ref{L:FlipBorel}, $-f$ is Borel and so $c f = (-c) (-f)$, which is then a positive constant multiplied by a Borel function). Then
\begin{align*}
(c f)^{-1} ((a, \infty)) &= \{x | c f(x) > a\} \\
&= \left\{x | f(x) > \frac{a}{c}\right\} \\
&= f^{-1} \left(\left(\frac{a}{c}, \infty\right)\right)
\end{align*}
Since $f$ is Borel and $\frac{a}{c} \in \R$, $f^{-1} \left(\left(\frac{a}{c}, \infty\right)\right)$ is a Borel set, and so $(c f)^{-1} ((a, \infty))$ is a Borel set, and so $c f$ is Borel.
\end{proof}

\begin{theorem} \label{T:BorelSum}
The sum of two Borel functions is Borel, that is, if $f, g: X \mapsto \R$ are Borel functions, then $f + g$ is Borel. \cite{Strichartz}
\end{theorem}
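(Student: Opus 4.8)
The plan is to reduce the claim to a statement about preimages of open rays, using Theorem \ref{T:BorelIfPreimageOfIntervalIsBorel}. By that theorem, it suffices to show that for every $a \in \R$ the set $(f+g)^{-1}((a, \infty)) = \{x \in X : f(x) + g(x) > a\}$ is Borel. The difficulty is that the sum couples the two functions, so I cannot directly invoke the hypotheses on $f$ and $g$ separately; I first need to decouple them.

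The decoupling is achieved by interpolating a rational number. The key observation is the set identity
\begin{align*}
\{x : f(x) + g(x) > a\} = \bigcup_{q \in \Q} \Big( f^{-1}((q, \infty)) \cap g^{-1}((a - q, \infty)) \Big).
\end{align*}
First I would establish this equality by proving both inclusions. The inclusion from right to left is immediate: if $f(x) > q$ and $g(x) > a - q$ for some rational $q$, then $f(x) + g(x) > a$. The reverse inclusion is where the real content lies, and is the step I expect to be the main obstacle: given $f(x) + g(x) > a$, I have $f(x) > a - g(x)$, and I must produce a rational $q$ with $a - g(x) < q < f(x)$. This is exactly the density of $\Q$ in $\R$, which guarantees such a $q$ strictly between any two distinct reals; the rational $q$ then satisfies both $f(x) > q$ and $g(x) > a - q$.

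Once the identity is in hand, the conclusion is routine. Since $f$ and $g$ are Borel, each set $f^{-1}((q, \infty))$ and $g^{-1}((a - q, \infty))$ is Borel. The Borel algebra is closed under complements and countable unions by Definition \ref{D:SigmaAlgebra}, hence also under finite intersections, so each term $f^{-1}((q, \infty)) \cap g^{-1}((a - q, \infty))$ is Borel. Because $\Q$ is countable, the displayed union is a countable union of Borel sets and is therefore Borel. Thus $(f+g)^{-1}((a, \infty))$ is Borel for every $a \in \R$, and Theorem \ref{T:BorelIfPreimageOfIntervalIsBorel} yields that $f + g$ is Borel.
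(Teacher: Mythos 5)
Your proposal is correct and follows essentially the same route as the paper: both write $(f+g)^{-1}((a,\infty))$ as the countable union over rationals $q$ of $f^{-1}((q,\infty)) \cap g^{-1}((a-q,\infty))$ and then appeal to Theorem \ref{T:BorelIfPreimageOfIntervalIsBorel}. If anything, you are slightly more careful than the paper, which asserts the existence of the interpolating rational without explicitly invoking the density of $\Q$ in $\R$.
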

\begin{proof}
Elements of the following proof are based on \cite{Strichartz}.

Let $h = f + g$. We need to show that $h^{-1}((a, \infty))$ is Borel for all $a \in \R$.

We see that, for any $a \in \R$, $h^{-1}((a, \infty)) = \{x \in X | f(x) + g(x) > a \}$.

Let $x \in X$ such that $f(x) + g(x) > a$. Let $b \in \Q$ such that $f(x) > b$ and $g(x) > a - b$.

We have that
\begin{align*}
h^{-1}((a, \infty)) &= \bigcup\limits_{b \in \Q} \{x \in X | f(x) > b \wedge g(x) > a - b\} \\
&= \bigcup\limits_{b \in \Q} \left(\{x \in X | f(x) > b\} \bigcap \{x \in X | g(x) > a - b\}\right) \\
&= \bigcup\limits_{b \in \Q} \left(f^{-1}((b, \infty)) \bigcap g^{-1}((a - b, \infty))\right)
\end{align*}
Since both $f$ and $g$ are Borel, both $f^{-1}((b, \infty))$ and $g^{-1}((a - b, \infty))$ are Borel sets, and since $h^{-1}((a, \infty))$ is the countable union (since over $\Q$) of the intersection of these sets, it is a Borel set. Therefore, by Theorem \ref{T:BorelIfPreimageOfIntervalIsBorel}, $h$ is Borel.
\end{proof}

\begin{theorem} \label{T:BorelLinearCombination}
Any finite linear combination of Borel functions from $X \subseteq \R$ to $\R$ is Borel.\cite{Strichartz}
\end{theorem}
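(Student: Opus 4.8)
The plan is to prove the statement by induction on the number $n$ of terms appearing in the linear combination, using the two immediately preceding results as the engine. Concretely, a finite linear combination of Borel functions $f_1, \ldots, f_n : X \mapsto \R$ with coefficients $c_1, \ldots, c_n \in \R$ is the function $\sum_{i=1}^{n} c_i f_i$, and I would show that every such function is Borel by induction on $n$, drawing on Theorem \ref{T:MultiplyScalarBorel} (scalar multiples of Borel functions are Borel) and Theorem \ref{T:BorelSum} (the sum of two Borel functions is Borel).

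For the base case $n = 1$, the linear combination is simply $c_1 f_1$, which is Borel immediately by Theorem \ref{T:MultiplyScalarBorel}. For the inductive step, I would assume that any linear combination of $n$ Borel functions is Borel and then consider a combination of $n + 1$ terms. The key move is to split off the last term and write
\begin{align*}
\sum_{i=1}^{n+1} c_i f_i = \left( \sum_{i=1}^{n} c_i f_i \right) + c_{n+1} f_{n+1}.
\end{align*}
By the inductive hypothesis the first summand is Borel, and by Theorem \ref{T:MultiplyScalarBorel} the second summand $c_{n+1} f_{n+1}$ is Borel. Applying Theorem \ref{T:BorelSum} to these two Borel functions shows that their sum is Borel, completing the induction.

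There is no genuine obstacle here: the two preceding theorems do all of the analytic work, and the only content of this result is the bookkeeping that shows finitely many applications of scalar multiplication and pairwise addition assemble into an arbitrary finite linear combination. The one point worth stating carefully is the regrouping used in the display above, namely that splitting the $(n+1)$-term sum as an $n$-term sum plus a single scalar multiple is legitimate; this is immediate since addition of real-valued functions is associative pointwise. Hence the induction goes through with no further hypotheses on $X$, and the theorem follows for every finite $n$.
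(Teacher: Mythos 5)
Your proposal is correct and follows essentially the same route as the paper: the paper's proof also combines Theorem \ref{T:MultiplyScalarBorel} and Theorem \ref{T:BorelSum} and then appeals to induction for more than two terms, though it only writes out the two-term case explicitly. Your version simply makes the induction fully explicit, which is a welcome bit of extra care but not a different argument.
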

\begin{proof}
Let $f, g: X \mapsto \R$ be Borel functions and $a, b \in \R$.

Since $f$ and $g$ are Borel functions, by Theorem \ref{T:MultiplyScalarBorel} $a f$ and $b g$ are Borel, and then by Theorem \ref{T:BorelSum} $a f + b g$ is Borel.

From this, it follows by induction that higher finite linear combinations (ie. $a f + b g + c h$) are also Borel.
\end{proof}

\begin{theorem} \label{T:BorelSeries}
Let $g_{n}: \R \mapsto \R$ be a series of Borel functions such that their sum converges pointwise to a function $f: \R \mapsto \R$,
\begin{align*}
\forall x \in \R, \;f(x) = \sum_{n=0}^{\infty} g_{n}(x)
\end{align*}
Then $f: \R \mapsto \R$ is a Borel function.
\end{theorem}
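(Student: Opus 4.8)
The plan is to reduce this to the two results already in hand: that a finite sum of Borel functions is Borel (Theorem \ref{T:BorelLinearCombination}), and that a pointwise limit of Borel functions is Borel (Theorem \ref{T:PointwiseBorel}). By definition, the infinite series $f(x) = \sum_{n=0}^{\infty} g_n(x)$ is the pointwise limit of its sequence of partial sums, so the whole strategy is to bridge the finite statement and the limiting statement using these two theorems.

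First I would introduce the partial sums
\begin{align*}
S_{N}(x) = \sum_{n=0}^{N} g_{n}(x).
\end{align*}
For each fixed $N$, the function $S_{N}$ is a finite sum of the Borel functions $g_0, g_1, \ldots, g_N$, so by Theorem \ref{T:BorelLinearCombination} (or equivalently by applying Theorem \ref{T:BorelSum} a total of $N$ times) each $S_{N}: \R \mapsto \R$ is Borel. This produces a sequence $(S_N)_{N=0}^{\infty}$ of Borel functions from $\R$ to $\R$.

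Next I would observe that the hypothesis of pointwise convergence of the series says precisely that for every $x \in \R$ the numerical series $\sum_{n=0}^{\infty} g_n(x)$ converges, i.e.\ that $S_N(x) \to f(x)$ as $N \to \infty$. Hence $(S_N)$ converges pointwise to $f$, and applying Theorem \ref{T:PointwiseBorel} to this sequence immediately yields that $f$ is Borel.

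I do not expect a genuine obstacle here, since all the machinery is already assembled; the only points needing care are bookkeeping ones. The first is to confirm that Theorem \ref{T:BorelLinearCombination}, stated for functions on a domain $X \subseteq \R$, applies with $X = \R$ (it does, as $\R \subseteq \R$). The second is simply to make explicit that ``the series converges pointwise'' is synonymous with ``the sequence of partial sums converges pointwise,'' so that the hypotheses of Theorem \ref{T:PointwiseBorel} are met verbatim. Once these are noted, the result follows by composing the two earlier theorems.
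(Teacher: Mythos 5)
Your proposal is correct and follows essentially the same route as the paper: form the partial sums, invoke Theorem \ref{T:BorelSum} (the paper cites this directly, where you also mention Theorem \ref{T:BorelLinearCombination}) to see each partial sum is Borel, and then apply Theorem \ref{T:PointwiseBorel} to the pointwise limit. No gaps; the bookkeeping remarks you add are fine but not needed beyond what the paper already does.
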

\begin{proof}
Let $f_{k}: \R \mapsto \R$ be a sequence of functions defined as
\begin{align*}
f_{k}(x) = \sum_{n=0}^{k} g_{n}(x)
\end{align*}

We notice that for all $k \geq 0$, $f_{k}$ is the sum of a finite number of Borel functions ($g_0$, $g_1$, ..., $g_{k}$), and so by Theorem \ref{T:BorelSum} it follows that $f_{k}$ is Borel.

We also notice that $f(x) = \lim\limits_{k \to \infty} f_{k}(x)$. This means that $(f_{k})_{k=0}^{\infty}$ is a sequence of Borel functions that converges pointwise to a function $f$, and so by Theorem \ref{T:PointwiseBorel} $f$ is Borel.
\end{proof}

\begin{theorem} \label{T:monotone}
Any monotone increasing function $f: \R \mapsto \R$ is Borel.
\end{theorem}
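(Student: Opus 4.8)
The plan is to reduce the statement to Theorem \ref{T:BorelIfPreimageOfIntervalIsBorel}, which says that a function $f : \R \mapsto \R$ is Borel if and only if $f^{-1}((a, \infty))$ is Borel for every $a \in \R$. So I would fix an arbitrary $a \in \R$ and study the structure of the single set $S_a = f^{-1}((a, \infty)) = \{x \in \R : f(x) > a\}$, aiming to show it is always one of a short list of manifestly Borel sets.

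The key observation is that monotonicity forces $S_a$ to be \emph{upward closed}: if $x \in S_a$ and $y \geq x$, then since $f$ is monotone increasing we have $f(y) \geq f(x) > a$, so $y \in S_a$. First I would record this fact, since everything else follows from it.

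Next I would classify the upward-closed subsets of $\R$. If $S_a = \emptyset$, it is Borel. Otherwise set $c = \inf S_a \in [-\infty, \infty)$. If $c = -\infty$, then for every $y \in \R$ there is some $x \in S_a$ with $x < y$, so upward closedness gives $y \in S_a$, whence $S_a = \R$, which is Borel. If $c$ is finite, then every point strictly greater than $c$ lies in $S_a$ (each such point exceeds some element of $S_a$) while every point strictly less than $c$ lies outside $S_a$ (by definition of the infimum); hence $S_a$ equals either the open ray $(c, \infty)$ or the closed ray $[c, \infty)$, according to whether $c$ itself belongs to $S_a$. In every case $S_a$ is Borel, since open rays are open and closed rays are closed, and both are therefore in the Borel algebra.

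Combining the cases, $f^{-1}((a, \infty))$ is Borel for every $a \in \R$, so by Theorem \ref{T:BorelIfPreimageOfIntervalIsBorel} the function $f$ is Borel. I do not expect a genuine obstacle here; the only care required is the bookkeeping in the classification of the upward-closed set, namely correctly separating the empty case, the unbounded-below case $c = -\infty$, and the endpoint-inclusion question for finite $c$, all of which are routine.
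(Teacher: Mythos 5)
Your proposal is correct and follows essentially the same route as the paper: both reduce to Theorem \ref{T:BorelIfPreimageOfIntervalIsBorel} and then classify the preimage $f^{-1}((a,\infty))$ as $\emptyset$, all of $\R$, or a ray $(c,\infty)$ / $[c,\infty)$ with $c$ the infimum, using monotonicity to show the set is upward closed. The only difference is cosmetic bookkeeping in how the cases are split.
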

\begin{proof}
Let $f: \R \mapsto \R$ be a monotone increasing function, and let $r \in \R$. There are three possible cases for $f^{-1}((r, \infty))$:
\begin{enumerate}
\item For all $x \in \R$, $f(x) \leq r$. Then $f^{-1}((r, \infty)) = \emptyset$, which by definition is a Borel set.

\item For all $x \in \R$, $f(x) > r$. Then $f^{-1}((r, \infty)) = \R$, which by definition is a Borel set.

\item There exists an $x \in \R$ such that $f(x) > r$ and a $y \in \R$ such that $f(y) \leq r$. Then let $L = \inf\{x \in \R\; |\; f(x) > r\}$, since the set of all $x \in \R$ such that $f(x) > r$ is bounded below (by $y$) the infimum exists. Then for all $x > L$, $f(x) > r$, and for all $x < L$, $f(x) \leq r$. This means that if $f(L) > r$, then $f^{-1}((a, b)) = \{x \in \R | x \geq L\} = [x, \infty)$, which is a Borel set, and if $f(L) \leq r$, then $f^{-1}((a, b)) = \{x \in \R | x > L\} = (x, \infty)$, which is also a Borel set.
\end{enumerate}

This means that $\forall r \in \R \; f^{-1}((r, \infty))$ is a Borel set, hence $f$ is Borel.
\end{proof}

\begin{corollary} \label{C:MonotoneDecreasing}
Any monotone decreasing function $f: \R \mapsto \R$ is Borel.
\end{corollary}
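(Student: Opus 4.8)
The plan is to reduce the decreasing case to the already-established increasing case by negation, rather than repeating the three-case analysis carried out in Theorem \ref{T:monotone}. First I would observe that if $f: \R \mapsto \R$ is monotone decreasing, then the function $-f$ is monotone increasing: for any $x_1 \leq x_2$ we have $f(x_1) \geq f(x_2)$, and multiplying through by $-1$ reverses the inequality to give $-f(x_1) \leq -f(x_2)$, which is exactly the defining condition for $-f$ to be increasing.

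Next I would apply Theorem \ref{T:monotone} directly to the monotone increasing function $-f$, concluding at once that $-f$ is Borel. To recover $f$ itself I would then perform a second negation: since $-f$ is Borel, Lemma \ref{L:FlipBorel} (taking $-f$ in the role of the Borel function in that lemma) yields that $-(-f) = f$ is Borel. One could equally well invoke Theorem \ref{T:MultiplyScalarBorel} with the scalar $c = -1$ to obtain the same conclusion, so there is some flexibility in which previously proved result is cited.

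There is essentially no hard step here, which is presumably why the statement is recorded as a corollary rather than a theorem. The one place that warrants any attention is the sign bookkeeping in the opening step, namely verifying that negation genuinely turns a decreasing function into an increasing one; once that observation is in place, the remainder is a direct appeal to results already established earlier in the section.
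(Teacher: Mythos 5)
Your proof is correct and follows exactly the same route as the paper's: negate $f$ to get a monotone increasing function, apply Theorem \ref{T:monotone}, then recover $f$ via Lemma \ref{L:FlipBorel}. The sign bookkeeping you flag is the only content of the argument, and you handle it correctly.
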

\begin{proof}
Since $f$ is monotone decreasing, $-f$ is monotone increasing, which by Theorem \ref{T:monotone} is Borel. Therefore, by Lemma \ref{L:FlipBorel} $f$ is Borel.
\end{proof}

\begin{lemma} \label{L:FloorIsBorel}
Both the floor function $f(x) = \lfloor x \rfloor$ and the ceiling function $g(x) = \lceil x \rceil$ are Borel.
\end{lemma}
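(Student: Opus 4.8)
The plan is to reduce everything to the criterion of Theorem \ref{T:BorelIfPreimageOfIntervalIsBorel}: a map $\R \to \R$ is Borel as soon as the preimage of every upper open ray $(a, \infty)$ is Borel. The cleanest route is to recognize both $f(x) = \lfloor x \rfloor$ and $g(x) = \lceil x \rceil$ as monotone non-decreasing functions and appeal directly to Theorem \ref{T:monotone}. First I would observe that if $x \leq y$ then $\lfloor x \rfloor \leq \lfloor y \rfloor$ and $\lceil x \rceil \leq \lceil y \rceil$, so both functions are \emph{monotone increasing} in the weak sense. The one point requiring care is that Theorem \ref{T:monotone} is phrased for monotone increasing functions while floor and ceiling are constant on each interval $[n, n+1)$ and hence only weakly increasing; I would note that the proof of Theorem \ref{T:monotone} is written entirely with the inequalities $f(x) \leq r$ and $f(x) > r$, and therefore already covers the non-strict case, so the conclusion applies verbatim.

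For completeness I would also record the direct computation of the preimages, which makes the Borel property transparent. For the floor function, since $\lfloor x \rfloor$ is always an integer, $\lfloor x \rfloor > a$ holds if and only if $\lfloor x \rfloor \geq \lfloor a \rfloor + 1$ (the least integer strictly exceeding $a$), which in turn holds if and only if $x \geq \lfloor a \rfloor + 1$. Hence $f^{-1}((a, \infty)) = [\lfloor a \rfloor + 1, \infty)$, a closed ray. For the ceiling function, $\lceil x \rceil > a$ if and only if $\lceil x \rceil \geq \lfloor a \rfloor + 1$, which holds if and only if $x > \lfloor a \rfloor$; thus $g^{-1}((a, \infty)) = (\lfloor a \rfloor, \infty)$, an open ray.

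Both of these sets are among the generators of the Borel algebra on $\R$ listed in Remark \ref{R:BorelIfPreimageOfIntervalIsBorelExtended} (a closed ray and an open ray respectively), so each is Borel, and Theorem \ref{T:BorelIfPreimageOfIntervalIsBorel} then yields that $f$ and $g$ are Borel. I do not anticipate any genuine obstacle here: the only thing to get right is the bookkeeping of endpoints, in particular that $\lfloor a \rfloor + 1$ is the correct threshold whether or not $a$ is itself an integer, and that the floor preimage is closed while the ceiling preimage is open. Everything else is an immediate application of the two results already established for maps into $\R$.
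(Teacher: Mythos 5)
Your proposal is correct and its main line of argument---observing that floor and ceiling are monotone increasing and invoking Theorem \ref{T:monotone}---is exactly the paper's proof; your remark that the proof of Theorem \ref{T:monotone} already covers weakly increasing functions is a worthwhile clarification the paper leaves implicit. The supplementary direct computation of the preimages $[\lfloor a\rfloor+1,\infty)$ and $(\lfloor a\rfloor,\infty)$ is also correct, but it is not needed beyond the monotonicity argument.
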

\begin{proof}
The both the floor function $f(x) = \lfloor x \rfloor$ and the ceiling function $g(x) = \lceil x \rceil$ are monotone increasing on $\R$, so by Theorem \ref{T:monotone} they are Borel.
\end{proof}

\subsection{Dimensionality-Reducing Borel Isomorphisms}

An extremely important example of a Borel isomorphism from $[0, 1]^{n}$ to $[0,1]$ is a map interchanging the digits of each value. This is useful for us because it can be used to lower the dimensionality of a data set. In fact, all dimensionality reducing Borel ismorphisms used in this project are based on this.

To understand this example of a Borel isomorphism (from $[0, 1]^2$ to $[0, 1]$), consider $x = (x_1, x_2) = (0.437, 0.982)$. Then (with base 10 in this example), $f(x) = 0.493872$, that is, the first digit from $x_1$ is taken, followed by the first digit from $x_2$, followed by the second digit of $x_1$, followed by the second digit of $x_2$, and so on. The special case of a number being exactly one can be handled by treating it as the infinite string 0.99999999.... (or an infinite string of the largest number in the base). This technique can be used with any arbitrary base and from $[0, 1]^{n}$ to $[0, 1]^{m}$ (where $m < n$) in general.


We now need to show this is a Borel isomorphism. We start by precisely defining this Borel isomorphism.

\begin{definition} \label{D:DigitInterlace}
Let $f: [0, 1]^n \mapsto [0, 1]$ be defined as follows, where $x_{j, i}$ is the $i^{th}$ digit of the $j^{th}$ variable (if $x_{j,i} = 1$, then we use $x_{j,i} = 0.bbb....$) and $b$ is the base (an integer greater than or equal to 2):
\begin{equation*}
f(x_1, x_2, ..., x_{n}) = \sum_{i=1}^{\infty} \sum_{j=1}^{n} b^{-(n \cdot (i - 1) + j)} x_{j, i}
\end{equation*}

Equivalently, we can define this function $f: [0, 1]^n \mapsto [0, 1]$ as follows, where $x_j$ is the $j^{th}$ variable and b is the base:
\begin{equation*}
f(x_1, x_2, ..., x_{n}) = \sum_{i=1}^{\infty} \sum_{j=1}^{n} b^{-(n \cdot (i - 1) + j)} \lfloor b^{i} x_{j} - b \lfloor b^{i-1} x_{j} \rfloor \rfloor
\end{equation*}
\end{definition}

We now need to show that this is in fact a Borel isomorphism.

\begin{theorem} \label{T:interchange} 
Let $f: [0, 1]^n \mapsto [0, 1]$ be the function defined in \ref{D:DigitInterlace}. Then $f$ is a Borel isomorphism.
\end{theorem}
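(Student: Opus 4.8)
The plan is to avoid analysing $f^{-1}$ directly and instead to exploit Theorem \ref{T:BorelBijectionIsIsomorphism}: since both $[0,1]^n$ and $[0,1]$ are complete separable metric spaces, once I show that $f$ is a Borel \emph{bijection} it follows automatically that $f^{-1}$ is Borel, and hence that $f$ is a Borel isomorphism. This splits the work into two independent tasks: (i) verifying that $f$ is a Borel map, and (ii) verifying that $f$ is a bijection of $[0,1]^n$ onto $[0,1]$. Task (i) I expect to be routine given the machinery already built; task (ii) is where the real difficulty lies.

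For (i) I would work entirely from the explicit floor-function formula of Definition \ref{D:DigitInterlace}. Fix $i \ge 1$ and $1 \le j \le n$ and consider the single summand $x \mapsto b^{-(n(i-1)+j)}\lfloor b^i x_j - b\lfloor b^{i-1} x_j\rfloor\rfloor$. The coordinate projection $x \mapsto x_j$ is continuous, hence Borel by Theorem \ref{T:continuous}; scaling by $b^{i-1}$ is again continuous, and composing with the floor (Borel by Lemma \ref{L:FloorIsBorel}) gives a Borel function $x \mapsto \lfloor b^{i-1} x_j\rfloor$, since a composition of Borel maps is Borel directly from the preimage definition in \ref{D:BorelFunction}. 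The inner expression $b^i x_j - b\lfloor b^{i-1}x_j\rfloor$ is then a linear combination of Borel functions, Borel by Theorems \ref{T:MultiplyScalarBorel} and \ref{T:BorelSum}, and applying the outer floor and the constant factor keeps it Borel. Each finite partial sum over $(i,j)$ is Borel by Theorem \ref{T:BorelSum}, and since the double series converges pointwise on $[0,1]^n$ (it is dominated by the convergent geometric series $\sum_k (b-1)b^{-k} = 1$), Theorem \ref{T:PointwiseBorel} yields that $f$ itself is Borel.

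For (ii) I would write down the candidate inverse explicitly: the de-interlacing map $g$ sending $y = 0.d_1 d_2 d_3\dots$ (base $b$) to the tuple whose $j$-th coordinate carries the digit sequence $d_j, d_{n+j}, d_{2n+j}, \dots$, and then verify $f \circ g = \mathrm{id}_{[0,1]}$ and $g \circ f = \mathrm{id}_{[0,1]^n}$. Injectivity is the cleaner half: if $x \neq x'$, then the canonical expansions pinned down by the floor formula (together with the convention $x_j = 1 \leftrightarrow 0.bbb\dots$ of Definition \ref{D:DigitInterlace}) differ in some digit $x_{j,i}$, which occupies position $n(i-1)+j$ of the interlaced string, forcing $f(x) \neq f(x')$ \emph{provided} the two interlaced strings are not merely distinct representations of one real number.

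That proviso is the main obstacle, and the step requiring genuine care rather than routine calculation: the non-uniqueness of base-$b$ expansions at the $b$-adic rationals. A string ending in all $(b-1)$'s and one ending in all $0$'s can denote the same number, so I must verify both that interlacing canonical coordinate expansions never produces such a collision (for injectivity) and that de-interlacing a canonical $y$ returns coordinate expansions that are themselves admissible under the conventions of the Definition (for surjectivity). I would handle this by isolating the countable set of $b$-adic rationals in each coordinate and checking directly that the rules ``value $1 \mapsto 0.bbb\dots$'' and ``terminating expansions elsewhere'' make interlacing and de-interlacing mutually inverse on exactly the ambiguous strings. This bookkeeping over dual representations — in particular confirming that every $y$ de-interlaces to an \emph{admissible} tuple, which is precisely where surjectivity could break down — is where the substance of bijectivity resides, and it is the part I would write out in full detail.
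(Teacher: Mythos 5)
Your overall architecture coincides with the paper's: the paper likewise establishes measurability from Lemma \ref{L:FloorIsBorel}, Theorems \ref{T:MultiplyScalarBorel}, \ref{T:BorelSum} and \ref{T:PointwiseBorel}, proves bijectivity by de-interlacing, and concludes via Theorem \ref{T:BorelBijectionIsIsomorphism}; your part (i) is correct and essentially identical (the paper's factorization $f=\sum_j b^{-j}g_j$ is cosmetic). The genuine gap is precisely the step you flagged and deferred as bookkeeping: carried out honestly, that bookkeeping does not close, because surjectivity actually fails for $f$ as defined. Concretely, take $n=2$, $b=10$, and $y = 6/55 = 0.1090909\ldots$, with digit string $1,0,9,0,9,0,\ldots$. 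Since this string neither terminates nor ends in all $9$'s, and $6/55$ is not a $10$-adic rational, it is the \emph{unique} base-$10$ representation of $y$. Any preimage $x=(x_1,x_2)$ would need $x_1$ to contribute the odd-position digits $1,9,9,9,\ldots$; but the floor formula of Definition \ref{D:DigitInterlace} always produces the greedy (terminating) expansion — for $x_1 = 0.2$ it yields $2,0,0,0,\ldots$ — and the only string with a tail of $9$'s admitted by the convention is the all-$9$ string representing the value $1$. Hence no admissible tuple interlaces to $y$, so $y \notin f([0,1]^2)$: de-interlacing a perfectly canonical $y$ can produce an inadmissible tuple, exactly the failure mode you identified.

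Two further points of calibration. First, your remedy of ``isolating the countable set of $b$-adic rationals in each coordinate'' cannot work: the missed set is uncountable, since fixing a $9$-tail in the odd positions leaves the even-position digits completely free, giving continuum many such $y$; no countable-exception patch, and indeed no choice of expansion convention, makes plain digit-interlacing a bijection of $[0,1]^n$ onto $[0,1]$. Second, you should know that the paper's own proof contains the same flaw — its surjectivity step silently assumes that the de-interlaced digit strings are the expansions that $f$ actually uses — so your instinct about where the substance of the theorem resides was exactly right, even though your prediction that the verification would succeed was not. What survives of the argument is that $f$ is a Borel injection, hence (by the Lusin--Souslin machinery behind Theorem \ref{T:BorelBijectionIsIsomorphism}) a Borel isomorphism onto its image; to get the theorem as stated one must either modify the map (e.g.\ interlace maximal digit blocks ending in a digit different from $b-1$, taken from non-terminating expansions) or abandon explicit bijectivity and invoke the Borel Schr\"oder--Bernstein argument that the paper itself uses later to show any two uncountable complete separable metric spaces are Borel isomorphic.
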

\begin{proof}
We first flip the order of summation in $f$ as follows:
\begin{equation*}
f(x_1, x_2, ..., x_{n}) = \sum_{j=1}^{n} \sum_{i=1}^{\infty} b^{-(n \cdot (i - 1) + j)} \lfloor b^{i} x_{j} - b \lfloor b^{i-1} x_{j} \rfloor \rfloor
\end{equation*}

Factoring $b^{j}$ out, we obtain,
\begin{equation*}
f(x_1, x_2, ..., x_{n}) = \sum_{j=1}^{n} b^{-j} \sum_{i=1}^{\infty} b^{-n \cdot (i - 1)} \lfloor b^{i} x_{j} - b \lfloor b^{i-1} x_{j} \rfloor \rfloor
\end{equation*}

We first prove that $f: [0, 1]^{n} \mapsto [0, 1]$ is a well defined map, in particular that the infinite series converges and that the range of $f$ is $[0, 1]$.

We first notice that
\begin{align*}
\lfloor b^{i-1} x \rfloor \leq b^{i-1} x < \lfloor b^{i-1} x + 1 \rfloor
\end{align*}
This means that
\begin{align*}
0 \leq b^{i-1} x - \lfloor b^{i-1} x \rfloor < \lfloor b^{i-1} x + 1 \rfloor - \lfloor b^{i-1} x \rfloor = 1
\end{align*}
Multiplying by $b$, we find that
\begin{align*}
0 \leq b^{i} x - b \lfloor b^{i-1} x \rfloor < b
\end{align*}
Taking the floor, we find
\begin{align*}
0 \leq \lfloor b^{i} x - b \lfloor b^{i-1} x \rfloor \rfloor \leq b - 1
\end{align*}

This means that for each term $b^{-n \cdot (i - 1)} \lfloor b^{i} x_{j} - b \lfloor b^{i-1} x_{j} \rfloor \rfloor$ in the inner sum,
\begin{align*}
0 \leq b^{-n \cdot (i - 1)} \lfloor b^{i} x - b \lfloor b^{i-1} x \rfloor \rfloor \leq b^{-n \cdot (i - 1)} (b - 1)
\end{align*}

Since all terms are nonnegative, this means that if the series converges for the largest possible value for each term, it will converge for any possible values (and will converge absolutely, since each term is equal to its absolute value). It is obvious that the minimum possible value for the sum of the series is zero, by taking each term to be zero (the smallest possible value for each term). Taking the largest possible value for each term in the sum, we find that
\begin{align*}
f(x_1, x_2, ..., x_{n}) &\leq \sum_{j=1}^{n} b^{-j} \sum_{i=1}^{\infty} b^{-n \cdot (i - 1)} (b - 1) \\
&= b^{n} (b - 1) \sum_{j=1}^{n} b^{-j} \sum_{i=1}^{\infty} b^{-n \cdot i} \\
&= b^{n} (b - 1) \sum_{j=1}^{n} b^{-j} \left( \frac{1}{1-b^{-n}} - 1 \right) \\
&= b^{n} (b - 1) \left( \frac{b^{-n}}{1-b^{-n}} \right) \sum_{j=1}^{n} b^{-j} \\
&= b^{n} (b - 1) \left( \frac{b^{-n}}{1-b^{-n}} \right) \frac{1 - b^{-n}}{b-1} \\
&= 1
\end{align*}

This means that for all possible values of $x_1$, $x_2$, ..., $x_{n}$ (in $[0, 1]$), the series converges (by the comparison test) and furthermore $0 \leq f(x_1, x_2, ..., x_{n}) \leq 1$.

We must prove that $f$ is a bijection (onto $[0, 1]$).

To prove surjectivity, we notice that for any $y \in [0, 1]$, $y = 0. y_1 y_2 y_3 y_4 ...$, so we set $x_1 = 0. y_1 y_{n+1} y_{2n+1} ...$, $x_2 = 0. y_2 y_{n+2} y_{2n+2} ...$, ..., and $x_{n} = 0. y_{n} y_{2n} y_{3n} ...$, in which case $f(x_1, x_2, ..., x_{n}) = 0. y_1 y_2 ... y_{n} y_{n+1} ... y_{2n} y_{2n+1} ...$, and so $f$ is surjective onto $[0, 1]$.

To prove that $f$ is injective, suppose that $(x_1, x_2, ..., x_{n}) \neq (x_1', x_2', ..., x_{n}')$. Then for some $1 \leq i \leq n$, $x_{i} \neq x_{i}'$, which means that $x_{i}$ and $x_{i}'$ differ in some digit $k$. Then let $y = f(x_1, x_2, ..., x_{n})$ and $y' = f(x_1', x_2', ..., x_{n}')$, which means that $y_{n k + i} = x_{i, k}$ (where $y_{n k + i}$ is the ${n k + i}^{th}$ digit of $y$ and $x_{i, k}$ is the $k^{th}$ digit of $x_{i}$) and $y_{n k + i}' = x_{i, k}'$. Therefore $y_{n k + i} \neq y_{n k + i}'$ and hence $y \neq y'$, which means that $f$ is injective.

Next we show that $f$ is Borel.

We first show that $g_{j}$ is Borel (for $1 \leq j \leq n$), where $g_{j}$ is defined by
\begin{align*}
g_{j}(x_{j}) = \sum_{i=1}^{\infty} b^{-n \cdot (i - 1)} \lfloor b^{i} x_{j} - b \lfloor b^{i-1} x_{j} \rfloor \rfloor
\end{align*}

We notice that $g_{j}$ is the pointwise limit of a series of functions
\begin{align*}
g_{j}(x_{j}) = \lim_{k \to \infty} \sum_{i=1}^{k} b^{-n \cdot (i - 1)} \lfloor b^{i} x_{j} - b \lfloor b^{i-1} x_{j} \rfloor \rfloor
\end{align*}

From Lemma \ref{L:FloorIsBorel} and Theorem \ref{T:BorelLinearCombination}, it is obvious that $b^{-n \cdot (i - 1)} \lfloor b^{i} x_{j} - b \lfloor b^{i-1} x_{j} \rfloor \rfloor$ is Borel, and so the sum of a finite number of such terms is finite (by Theorem \ref{T:BorelSum}), hence for all $k \geq 1$, $\sum_{i=1}^{k} b^{-n \cdot (i - 1)} \lfloor b^{i} x_{j} - b \lfloor b^{i-1} x_{j} \rfloor \rfloor$ is Borel. Since this series converges pointwise as $k \to \infty$ to $g_{j}$, this means that (by Theorem \ref{T:PointwiseBorel}) $g_{j}$ is Borel.

We notice that
\begin{align*}
f(x_1, x_2, ..., x_{n}) = \sum_{j=1}^{n} b^{-j} g_{j}(x_{j})
\end{align*}

This is the linear combination of a finite number of Borel functions, which by Theorem \ref{T:BorelLinearCombination} is Borel.

Since $f$ is a Borel bijective mapping between two complete separable metric spaces, by Theorem \ref{T:BorelBijectionIsIsomorphism} $f$ is a Borel isomorphism.
\end{proof}

Similarly, we can use this to define a Borel isomorphism $g: [0, 1]^{n} \mapsto [0, 1]^{m}$, where $m < n$. We do this by deciding which dimensions will be combined, and apply the above Borel isomorphism $f$ to map each set of these to $[0, 1]$. For instance, if we want to do an 6 to 3 dimensional Borel isomorphic reduction (from $[0, 1]^6$ to $[0, 1]^3$), we apply $f$ to combine dimensions 1 and 2 into one, dimensions 3 and 4 into one, and dimensions 5 and 6 into one.

\subsection{Normalizing the Data}

One problem is that elements in the original data may not necessarily be in $[0, 1]$ and could take on values outside that interval. To fix this we force the data to the interval $[0, 1]$ by applying the function $h:[x_{min}, x_{max}] \mapsto [0, 1]$ to each element of the data set (where $x_{min}$ is the minimum value in the data set and $x_{max}$ is the maximum value in the data set):
\begin{equation}
h(x) = \frac{x - x_{min}}{x_{max} - x_{min}}
\end{equation}

For any value $x \in [x_{min}, x_{max}]$, we have that
\begin{equation}
h(x) = \frac{x - x_{min}}{x_{max} - x_{min}} \leq \frac{x_{max} - x_{min}}{x_{max} - x_{min}}  = 1
\end{equation}
We also have that
\begin{equation}
h(x) = \frac{x - x_{min}}{x_{max} - x_{min}} \geq \frac{x_{min} - x_{min}}{x_{max} - x_{min}}  = 0
\end{equation}
Combining these results, we find that
\begin{equation}
0 \leq h(x) \leq 1
\end{equation}
This means that $\forall x \in [x_{min}, x_{max}]$, $h(x) \in [0, 1]$. This means that $h$ is a map to $[0, 1]$.

We also need to show that $h:[x_{min}, x_{max}] \mapsto [0, 1]$ is a Borel isomorphism.
\begin{theorem} \label{T:normalize}
Let $h:[x_{min}, x_{max}] \mapsto [0, 1]$ be defined as
\begin{equation*}
h(x) = \frac{x - x_{min}}{x_{max} - x_{min}}
\end{equation*}
Then $h$ is a homeomorphism and therefore a Borel isomorphism.
\end{theorem}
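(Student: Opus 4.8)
The plan is to verify directly that $h$ is a homeomorphism --- a continuous bijection with continuous inverse --- and then invoke Theorem \ref{T:homeomorphism}, which states that every homeomorphism is a Borel isomorphism. Since $h$ is an affine map and the domain is a genuine interval (so that $x_{max} > x_{min}$ and the denominator is nonzero), all the required properties should follow from elementary facts about linear functions on $\R$.

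First I would exhibit the explicit inverse. Given $y \in [0, 1]$, I would set $x = x_{min} + y\,(x_{max} - x_{min})$ and check both that $x \in [x_{min}, x_{max}]$ (since $y \in [0,1]$ scales the interval length correctly) and that $h(x) = y$ by direct substitution. This simultaneously establishes surjectivity and produces the candidate inverse $h^{-1}(y) = x_{min} + y\,(x_{max} - x_{min})$. Injectivity then follows because $h(x_1) = h(x_2)$ forces $x_1 - x_{min} = x_2 - x_{min}$ after multiplying through by the positive constant $x_{max} - x_{min}$, hence $x_1 = x_2$. Combined with the fact --- already established in the paragraph preceding the statement --- that $h$ maps into $[0,1]$, this shows that $h$ is a bijection onto $[0, 1]$.

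Next I would verify continuity of both $h$ and $h^{-1}$. Each is an affine function of the form $\alpha x + \beta$ with constant coefficients, so continuity is immediate: either by an $\epsilon$--$\delta$ argument (taking $\delta = \epsilon\,(x_{max} - x_{min})$ for $h$, and analogously for $h^{-1}$), or by observing that each is a finite scalar multiple and sum of the continuous identity and constant functions. With $h$ shown to be a continuous bijection possessing a continuous inverse, Theorem \ref{T:homeomorphism} immediately yields that $h$ is a Borel isomorphism.

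The argument is entirely routine; the only point that demands any attention is the standing assumption $x_{max} > x_{min}$, without which $h$ is undefined. This is harmless in context, since a degenerate interval with $x_{min} = x_{max}$ would carry no variation in the data to normalize.
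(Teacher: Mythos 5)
Your proposal is correct and follows essentially the same route as the paper: exhibit the explicit inverse $h^{-1}(y) = x_{min} + y\,(x_{max}-x_{min})$, conclude bijectivity, observe that both maps are affine and hence continuous, and invoke Theorem \ref{T:homeomorphism}. The only cosmetic difference is that you establish bijectivity via injectivity and surjectivity while the paper checks the two composition identities $h \circ h^{-1} = h^{-1} \circ h = \mathrm{id}$; your explicit remark that $x_{max} > x_{min}$ is needed is a small point the paper leaves implicit.
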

\begin{proof}
We first show that $h$ has the inverse $h^{-1}: [0, 1] \mapsto [x_{min}, x_{max}]$ where
\begin{equation}
h^{-1}(x) = x(x_{max} - x_{min}) + x_{min}
\end{equation}

To show this we first show that $h(h^{-1}(x)) = x$,
\begin{align*}
h(h^{-1}(x)) &= h(x(x_{max} - x_{min}) + x_{min}) \\
&= \frac{x(x_{max} - x_{min}) + x_{min} - x_{min}}{x_{max} - x_{min}} \\
&= \frac{x(x_{max} - x_{min})}{x_{max} - x_{min}} \\
&= x
\end{align*}

We also need to show that $h^{-1}(h(x)) = x$,
\begin{align*}
h^{-1}(h(x)) &= h^{-1}\left(\frac{x - x_{min}}{x_{max} - x_{min}}\right) \\
&= \frac{x - x_{min}}{x_{max} - x_{min}}(x_{max} - x_{min}) + x_{min} \\
&= x - x_{min} + x_{min} \\
&= x
\end{align*}

Since $h(h^{-1}(x)) = h^{-1}(h(x)) = x$ for all $x$ in the domain, this means that $h$ has the inverse $h^{-1}$ and so is bijective.

Since both $h$ and $h^{-1}$ are linear functions, both $h$ and $h^{-1}$ are continuous, which means that $h$ is a bijective continuous function with a continuous inverse, or a homeomorphism. Since $h$ is a homeomorphism, by Theorem \ref{T:homeomorphism} $h$ is also a Borel isomorphism.
\end{proof}

Since the function $h:[x_{min}, x_{max}] \mapsto [0, 1]$ defined above used to force data to $[0, 1]$ is a Borel isomorphism and the composition of Borel isomorphisms is a Borel isomorphism (by Theorem \ref{T:composition}), we can apply $h$ to the data before applying the digit interchange Borel isomorphism and the result is still a Borel isomorphism.

\subsection{Borel Dimensionality Reduction of Datasets}

The reason we are interested in Borel isomorphisms is that if we apply a Borel isomorphism from a separable metric space $X$ to a metric space where k-NN is universally consistent (such as $\R^n$ or any metric subspace of $\R^n$), then the k-NN learning rule is universally consistent on $X$ with the Borel isomorphism applied. More formally ~\cite{Pestov}
\begin{theorem}
Let $X$ be a separable metric space and $Y$ a metric space (with metric $d$) such that k-NN is universally  consistent, $f: X \mapsto Y$ be a Borel isomorphism, and $\rho$ be a metric on $X$ defined by
\begin{equation*}
\rho(x, y) = d(f(x), f(y))
\end{equation*}

Then the k-NN learning rule on $X$ with the metric $\rho$ is universally consistent.
\end{theorem}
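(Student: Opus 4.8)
The plan is to transfer the entire learning problem from $(X, \rho)$ to $(Y, d)$ through the map $f$, invoke the assumed universal consistency of k-NN on $Y$, and then transfer the conclusion back. The key observation is that, by the very definition $\rho(x, y) = d(f(x), f(y))$, the bijection $f$ is an isometry from $(X, \rho)$ onto $(Y, d)$, and in particular a homeomorphism. Combined with the hypothesis that $f$ is already a Borel isomorphism from $X$ (with its original metric) to $Y$, this shows that the Borel $\sigma$-algebra on $X$ induced by $\rho$ coincides with the one induced by the original metric, both equalling $f^{-1}(\mathcal{B}(Y))$; hence the phrase ``probability measure on $X \times \{1, \dots, m\}$'' is unambiguous, and $f \times \mathrm{id}$ is a Borel isomorphism of the product spaces.

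First I would set up the correspondence between samples. Because $f$ is an isometry, for any query point $a$ and data points $X_1, \dots, X_n$ we have $\rho(a, X_i) = d(f(a), f(X_i))$ for every $i$, so the collection of indices realizing the $k$ smallest distances to $a$ in $(X, \rho)$ is identical to the collection realizing the $k$ smallest distances to $f(a)$ in $(Y, d)$, including which configurations produce ties. Consequently, running k-NN on the sample $(X_1, Y_1), \dots, (X_n, Y_n)$ in $(X, \rho)$ and evaluating at $a$ returns the same label as running k-NN on the image sample $(f(X_1), Y_1), \dots, (f(X_n), Y_n)$ in $(Y, d)$ and evaluating at $f(a)$ (coupling the random tie-breaking). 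Writing $L_n^{X}$ and $L_n^{Y}$ for the two k-NN rules, this says $L_n^{X} = L_n^{Y} \circ f$ as classifiers.

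Next I would push the measure forward. Given an arbitrary Borel probability measure $\mu$ on $X \times \{1, \dots, m\}$, set $\nu = (f \times \mathrm{id})_{*} \mu$, a Borel probability measure on $Y \times \{1, \dots, m\}$. For any classifier $g$ on $Y$, the change-of-variables formula for pushforwards gives $err_{\mu}(g \circ f) = err_{\nu}(g)$, and since $f$ is a bijection every classifier on $X$ has the form $g \circ f$; taking infima yields $\ell^{*}(\mu) = \ell^{*}(\nu)$. Moreover, if $(X_i, Y_i)$ are i.i.d. with law $\mu$, then the images $(f(X_i), Y_i)$ are i.i.d. with law $\nu$, so $L_n^{Y}$ is precisely the k-NN rule trained on an i.i.d. $\nu$-sample. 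Combining the label identity $L_n^{X} = L_n^{Y} \circ f$ with this error identity gives $err_{\mu}(L_n^{X}) = err_{\nu}(L_n^{Y})$.

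Finally, I would invoke the hypothesis: since k-NN is universally consistent on $Y$ and $\nu$ is a probability measure on $Y \times \{1, \dots, m\}$, we have $\mathbb{E}[err_{\nu}(L_n^{Y})] \to \ell^{*}(\nu)$ as $n \to \infty$. By the identities of the previous step this is exactly $\mathbb{E}[err_{\mu}(L_n^{X})] \to \ell^{*}(\mu)$, i.e. consistency of k-NN on $(X, \rho)$ for $\mu$; as $\mu$ was arbitrary, universal consistency follows. The genuinely hard analytic content, that k-NN attains the Bayes error, is supplied entirely by the hypothesis on $Y$ (Stone's theorem when $Y = \R^n$); the work here is the structural transfer, and I expect the only real subtlety to lie in the nearest-neighbour and tie-breaking correspondence, and in verifying that the two Borel structures on $X$ agree so that the pushforward is legitimate.
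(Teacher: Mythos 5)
The paper does not actually prove this theorem --- it only states it and cites Pestov, so there is no internal proof to compare against. Your argument is correct and complete as a standalone proof, and it is the standard transfer argument one would expect to find in the cited source: the identity $\rho(a,X_i)=d(f(a),f(X_i))$ makes $f$ a surjective isometry $(X,\rho)\to(Y,d)$, so the $k$ nearest neighbours (and ties) correspond exactly, giving $L_n^{X}=L_n^{Y}\circ f$; the pushforward $\nu=(f\times\mathrm{id})_{*}\mu$ is legitimate because, as you observe, the $\rho$-Borel structure and the original Borel structure on $X$ both equal $f^{-1}(\mathcal{B}(Y))$; and the bijection $g\mapsto g\circ f$ between classifiers gives $\ell^{*}(\mu)=\ell^{*}(\nu)$ and $err_{\mu}(L_n^{X})=err_{\nu}(L_n^{Y})$, so consistency on $Y$ transfers. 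The only remark worth adding is that separability of $X$ plays no role in your argument (nor does it need to --- it is inherited automatically by $(X,\rho)$ whenever $Y$ is separable), and that the whole proof uses only injectivity of $f$ and Borel bimeasurability, exactly the two properties packaged in the hypothesis that $f$ is a Borel isomorphism.
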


This result (proven in \cite{Pestov}) means that in the limit, as the number of sample points goes to infinity, the error of the k-NN learning rule on any separable metric space converges to the Bayes error. This does not mean that for any particular sample size, the error after applying the Borel isomorphism will remain the same. For a particular number  of sample points, it is possible that applying a Borel isomorphism may increase the error, but in the limit as the number of sample points goes to infinity, k-NN after applying the Borel isomorphism will converge to the Bayes error. In this project, we will see if dimensionality reducing Borel isomorphisms preserve the accuracy well and look at methods for increasing their accuracy. There are multiple approaches we will try, including adjusting the base and applying a linear transformation before performing the digit interlace Borel isomorphism.

\section{Datasets}

We used various datasets in our project, including the Yeast dataset and the Phoneme dataset. We use the kNN classifier with the Euclidean distance.

The yeast dataset is available at the UCI machine learning repository. \cite{UCI} \cite{yeast} In the yeast dataset, we have a total of 10 columns and 1484 entries. Of the columns, one is a string label indicating the sample, 8 are real values with data about the yeast sample, and the last is the category, which can take on 10 possible values. This means that the Yeast dataset has 8 dimensions. The classification accuracy is not very good, being approximately 57\% on average with kNN applied directly, with no dimensionality reduction (this is still far better than random choice, as the most frequent outcome occurs approximately 30 \% of the time, so the accuracy of random choices is expected to be no higher than approximately 30 \%). The Yeast dataset is not very high dimensional, but is good for testing purposes as simulations run quickly on the Yeast dataset.

Another dataset is the phoneme dataset.\cite{phoneme} This dataset consists of various phonemes being spoken, with the data being the waveform data and the attributes being the phonemes spoken. This dataset has medium high dimensionality with 256 dimensions, so it is good as an actual dataset for practice. The phoneme dataset has 4508 entries and there are 5 possible classes for the response (corresponding to the different 5 phonemes being spoken). There is also an additional column identifying the speaker.

\section{Optimal k}

Whenever we use the kNN classifier, we need to find the optimal value of $k$ for our dataset. This is the value of $k$ that we use for that test so we may have the highest accuracy. In general, the value of $k$ changes if we apply a transformation to the data (such as the digit interlace Borel isomorphism), so we should find the new optimal value of $k$ for the dataset with that transformation applied. To find the optimal $k$, we split the dataset into the training and testing sets, and find the value of $k$ such that kNN maximizes the accuracy on the testing set.

As an example, we find the optimal value for $k$ for the Yeast dataset (with an 8 to 1 Borel isomorphic reduction performed). To do this we test the accuracy with $k$ being integers between 1 and 30 (with 500 trials, with the dataset randomly split into training and testing sets, and an 8 to 1 reduction performed), and we obtain the box-and-whiskers plot in figure \ref{F:YeastK}.
\begin{figure}[h!]
\includegraphics[width=\textwidth]{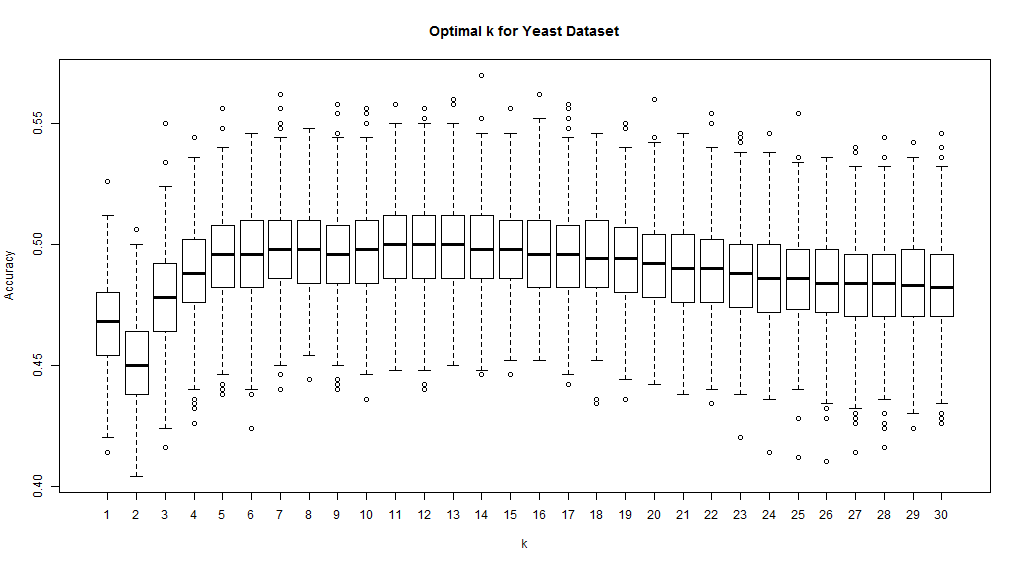} 
\caption{Accuracy of Borel isomorphic reduction with various $k$ for the Yeast dataset.}
\label{F:YeastK}
\end{figure}

From the values of k, the maximum is at $k = 11$. We notice that there is no significant difference between $k=11$ and $k=13$, for instance, but this is not important for us, we can simply select any k that appears to be optimal. We will therefore use $k=11$ here.

For the phoneme dataset in its original form we find that $k = 21$ is optimal. After the Borel isomorphism is applied we need to find the new optimal value of $k$ to use in our tests.

\section{Optimal Base}

One interesting problem is to determine the optimal base for the Borel isomorphism. Smaller bases (like 2) would result in finer mixing of the values in each column (that is, columns after the first one have a larger effect), while larger bases would result in a coarser mixing and will increase the effect of the first columns and reduce the effect of later columns. We will therefore try various bases on the datasets and see which produce the most accurate results. This is one way of expanding the forms of possible Borel isomorphisms (we can try multiple bases instead of only one base), and we can then pick the one with the highest accuracy. The importance of this work is briefly discussed further in the conclusion.

We first do a test of this with the Yeast dataset. To check this we selected bases between 2 and 20 and did a test of the accuracy of reducing 8 dimensions to 1 with that base. The result is summarized in the box-and-whiskers plot in figure \ref{F:YeastBase}.

\begin{figure}[h!]
\includegraphics[width=\textwidth]{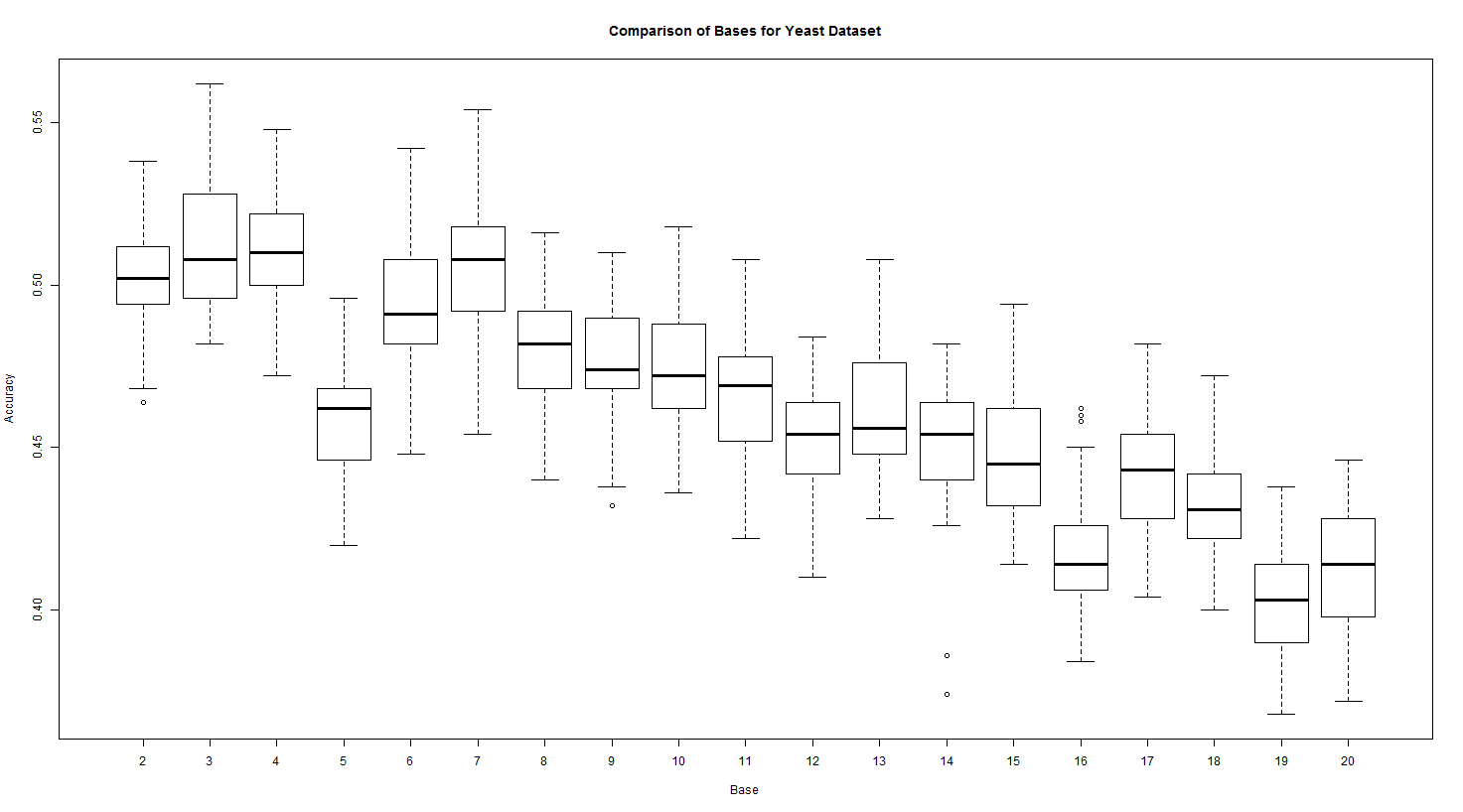} 
\caption{Comparison of various bases for the Yeast dataset, with an 8 to 1 dimensionality reduction.}
\label{F:YeastBase}
\end{figure}

We see from this plot that bases 3, 4, and 7 appear to be optimal for the Yeast dataset.

For the Phoneme dataset, the results for the accuracy of the base depend enormously on how many dimensions are compressed into one. If all 256 dimensions are compressed into one (for a 256 to 1 dimension reduction), then base 3 is the optimal base as expected, as seen in figure \ref{F:PhonemeBase16}. However, if only 16 dimensions are compressed into one (for a 256 to 16 dimension reduction), then base 3 is suboptimal and larger bases perform better (with 14 having the highest performance), as seen in figure \ref{F:PhonemeBase256}. The optimal base for 16, 32, 64, 128, and 256 dimensions compressed into one is summarized in table \ref{F:PhonemeBaseTable}.

\begin{figure}[h!]
\includegraphics[width=\textwidth]{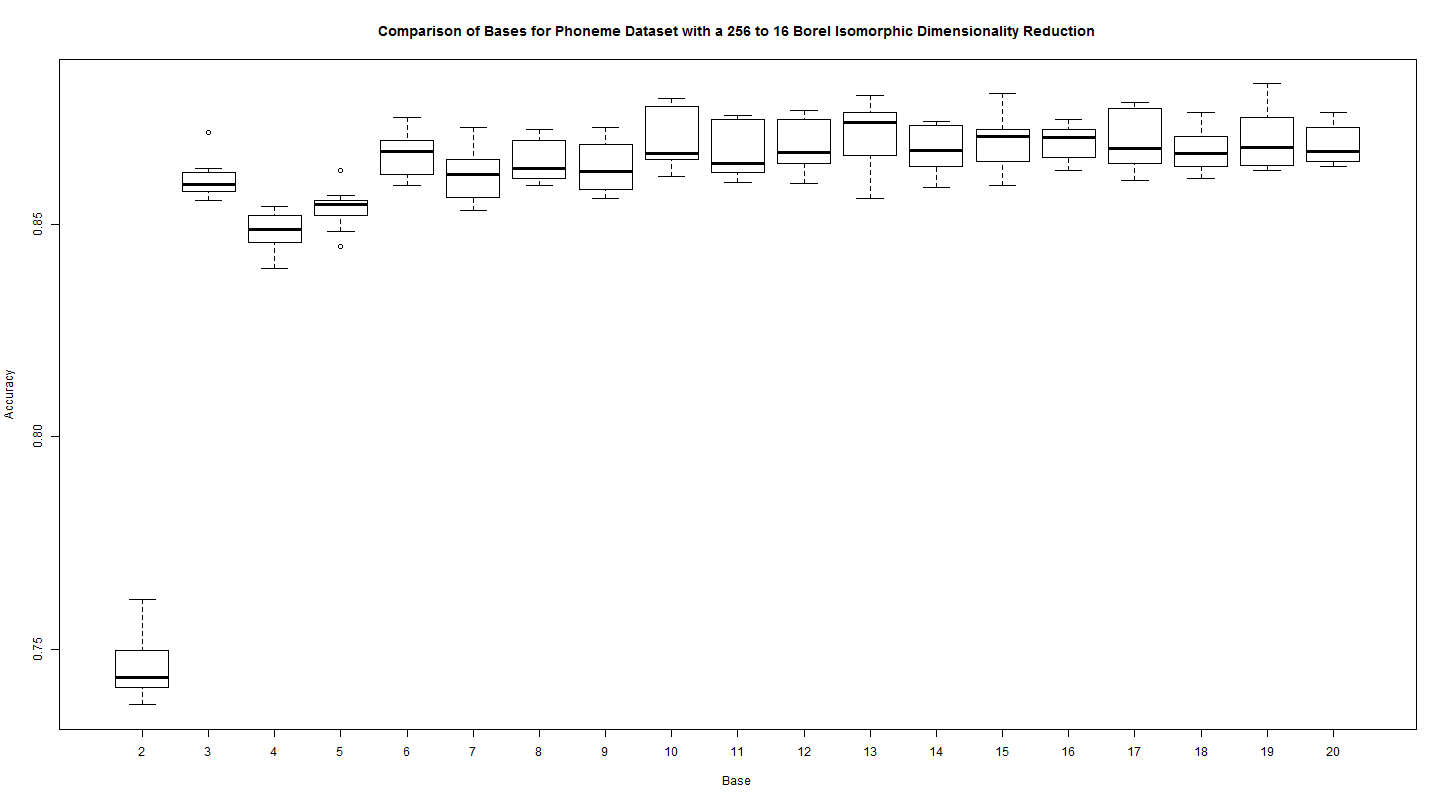}
\caption{Comparison of various bases with a 256 to 16 dimension reduction for the Phoneme dataset.}
\label{F:PhonemeBase16}
\end{figure}

\begin{figure}[H]
\includegraphics[width=\textwidth]{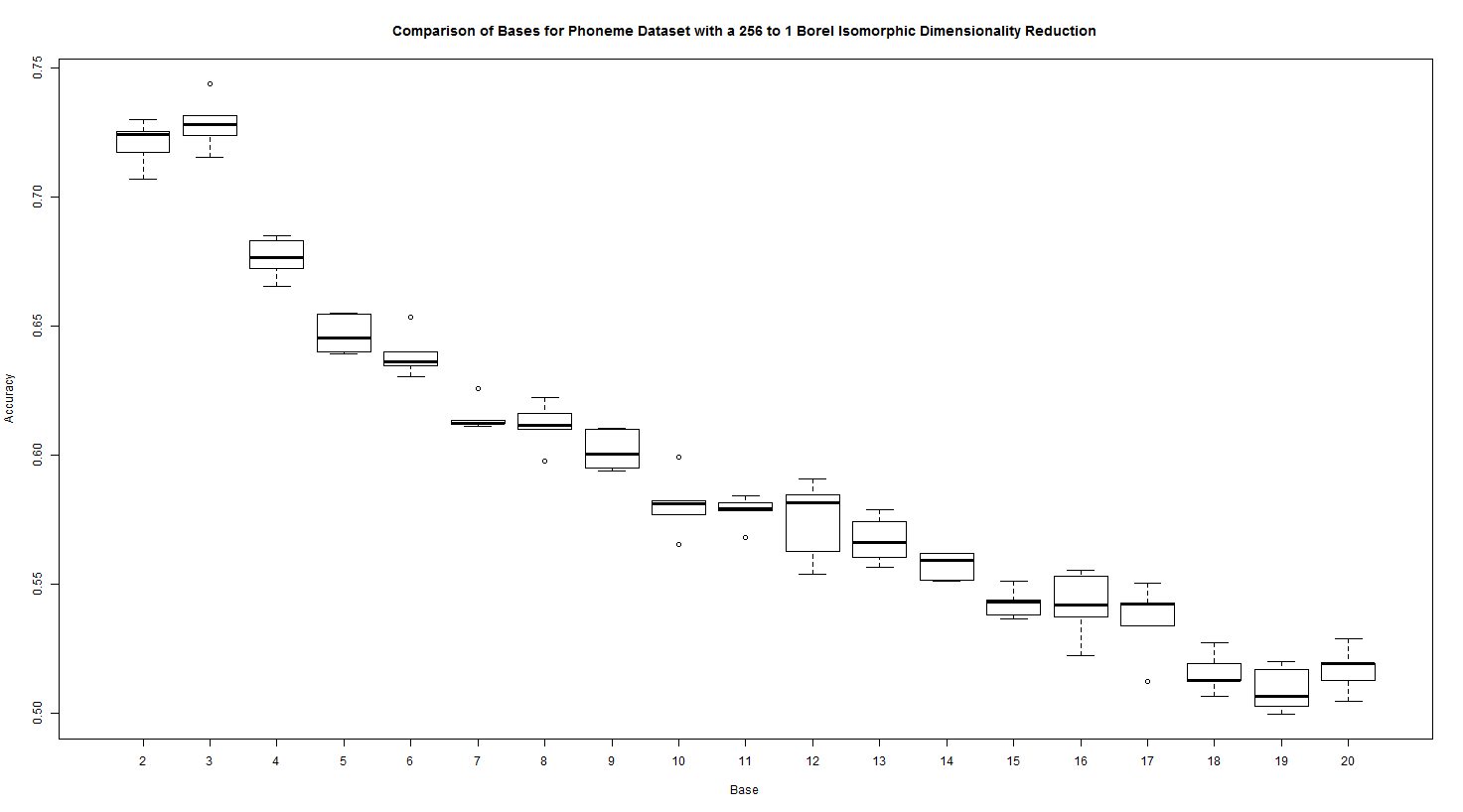} 
\caption{Comparison of various bases with a 256 to 1 dimension reduction for the Phoneme dataset.}
\label{F:PhonemeBase256}
\end{figure}

\begin{table}[ht]
\centering
\begin{tabular}{|r|r|r|}
  \hline
 Dimensions Compressed into One & Dimensions after Reduction & Optimal base \\ 
   \hline
 16 & 256 to 16 & 13 \\
   \hline
 32 & 256 to 8 & 24 \\
   \hline
 64 & 256 to 4 & 3 \\
  \hline
 128 & 256 to 2 & 3 \\
 \hline
 256 & 256 to 1 & 3 \\
   \hline%
\end{tabular}
\caption{A table showing the optimal base for the Phoneme dataset with various numbers of dimensions compressed into one.}
\label{F:PhonemeBaseTable}
\end{table}

\section{Accuracy of the Borel Isomorphic Dimensionality Reduction}

We now consider the overall accuracy of classifying the data after applying the Borel isomorphism, compared to directly applying kNN without the Borel isomorphism.

For the Yeast dataset, we compare the accuracy of classifying the original dataset and reducing the number of dimensions from 8 to 4, 2, and 1. We then compare the accuracy of the Borel isomorphic dimensionality reduction for various numbers of dimensions compressed into a single dimension. We compare 8 to 1, 4 to 1, 2 to 1, and no dimension reduction against each other. We select the optimal base and $k$ for each dimensionality reduction. We obtain the box-and-whiskers plot in figure \ref{F:YeastResult}.
\begin{figure}[H] \label{F:YeastResult}
\includegraphics[width=\textwidth]{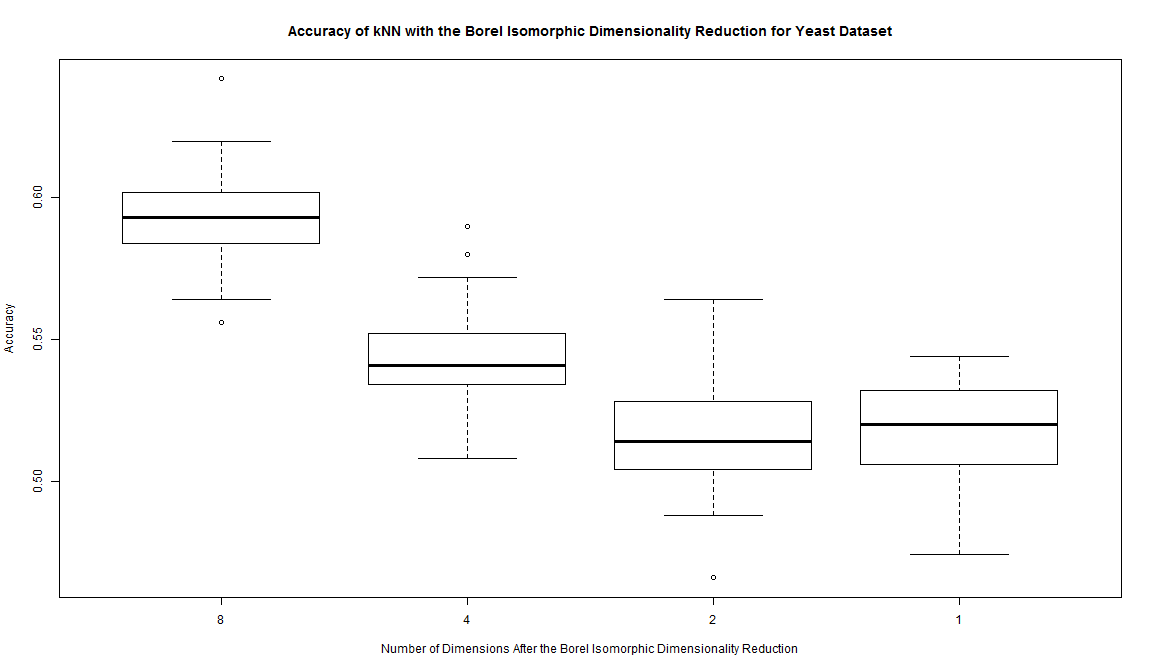} 
\caption{Accuracy of Borel Isomorphic Dimension Reduction for Yeast Dataset}
\end{figure}

We find that the accuracy decreases from 57\% with no Borel isomorphism to 50\% with an eight to one Borel isomorphic reduction. This means that Borel isomorphic dimensionality reduction works well for the Yeast dataset.

A box and whiskers plot of the accuracy of the Borel isomorphism applied to the Phoneme dataset (with the optimal base and $k$ for each dimensionality reduction) is in figure \ref{F:PhonemeResult}.
\begin{figure}[H]
\includegraphics[width=\textwidth]{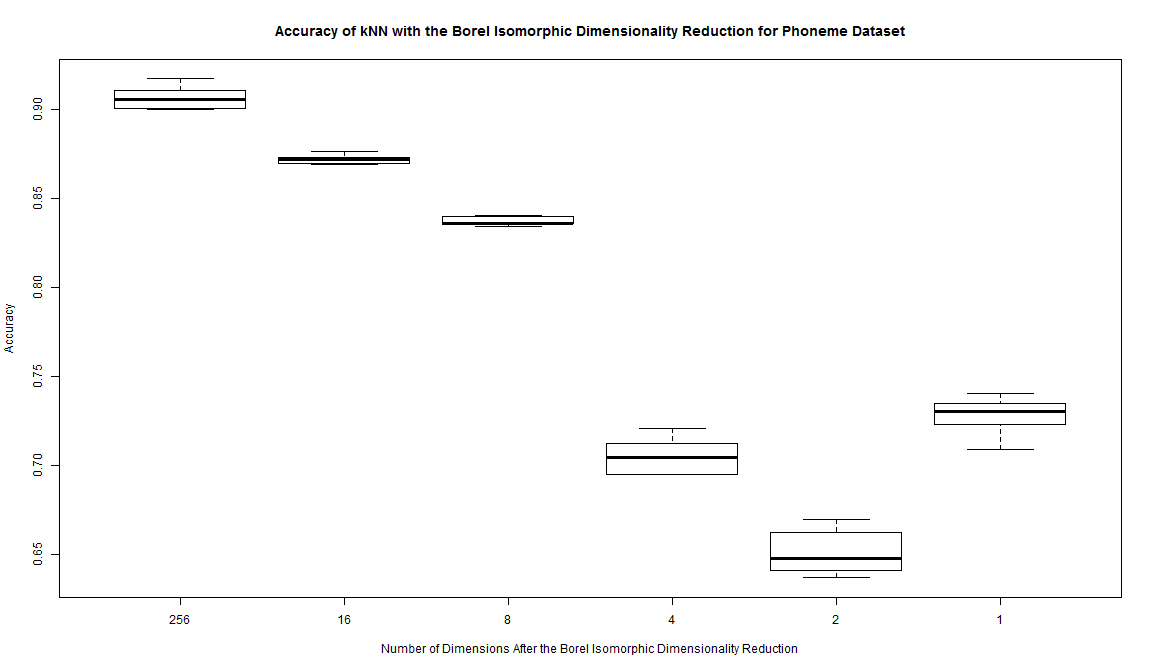}
\caption{Accuracy of Borel Isomorphic Dimensionality Reduction for Phoneme Dataset}
\label{F:PhonemeResult}
\end{figure}

The accuracy with no Borel isomorphic dimensionality reduction is approximately 90\%. We see that for a 16 dimensions compressed into one, the accuracy decreases very little to 87 \%, which is almost the original accuracy. For a Borel isomorphic reduction of the entire dataset to one dimension, the accuracy is approximately 77 \%.

We also compare the Borel isomorphic dimensionality reduction to two more traditional methods, Principal Component Analysis (PCA) and Linear Discriminant Analysis (LDA). Principal Component Analysis applies an orthogonal transformation that converts the variables to a set of linearly uncorrelated principal components such that the first principal component has the largest possible variance, the second principal component has the second largest possible variance, and so on. Linear Discriminant Analysis looks for linear combinations of features that separates the data points into two or more classes. For the Yeast dataset, both PCA and LDA perform worse than the Borel isomorphic dimensionality reduction. For the Phoneme dataset, both PCA and LDA perform better  than Borel isomorphic dimensionality reduction for smaller dimensionality reductions, but for reductions to very low dimensional spaces (such as 256 to 1 dimension) the Borel isomorphic dimensionality reduction performs better.

In this section, we simply applied the Borel isomorphism to the columns without doing any reordering of the columns. This may not be optimal, however. This is the idea for the next section.

\section{Multiplication by Orthogonal Matrices}

\subsection{Motivation}

In the initial dataset, the columns are in some order. When we apply the Borel isomorphism, we apply it to the columns in that same order, so that the first digit is from the first column, the second digit from the second column, and so on. This may not be optimal, however. For instance, it is possible that the first column may be a very weak predictor while a later column may be a strong predictor. One possible solution is to multiply the original data matrix by another invertible matrix, which (if we select the right matrix) may increase the accuracy. In this way, we enlarge the forms of available Borel isomorphisms, and we may choose the one with the highest accuracy. A further justification for enlarging the forms of possible Borel isomorphisms is a topic for future research.

\begin{theorem} \label{T:invmatrix}
Multiplying the data matrix (with $n$ columns) by any $n$ by $n$ invertible matrix and then applying any Borel isomorphism is a Borel isomorphism.
\end{theorem}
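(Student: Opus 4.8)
The plan is to reduce the statement to two results already established in the excerpt: that any homeomorphism is a Borel isomorphism (Theorem \ref{T:homeomorphism}) and that the composition of Borel isomorphisms is again a Borel isomorphism (Theorem \ref{T:composition}). Writing $M$ for the given invertible $n \times n$ matrix and $T: \R^n \mapsto \R^n$ for the map $T(x) = Mx$, the essential content is to show that $T$ is a homeomorphism; everything else then follows formally.

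First I would verify that $T$ is a bijection. Since $M$ is invertible, $M^{-1}$ exists, and the map $S(x) = M^{-1}x$ satisfies $S \circ T = T \circ S = \mathrm{id}$, so $T$ is a bijection whose inverse is $S$. Next I would establish continuity of both $T$ and $S$. Each coordinate of $Mx$ is a finite linear combination of the coordinates of $x$, and a finite linear combination of continuous coordinate functions is continuous; hence $T$ is continuous. Applying the identical argument to $M^{-1}$ shows that $S = T^{-1}$ is continuous as well. Thus $T$ is a continuous bijection with continuous inverse, i.e.\ a homeomorphism, and by Theorem \ref{T:homeomorphism} it is a Borel isomorphism.

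Finally, let $f$ be any Borel isomorphism defined on the target of $T$ (for instance the digit-interlacing map of Definition \ref{D:DigitInterlace}, shown to be a Borel isomorphism in Theorem \ref{T:interchange}, after first normalizing into $[0,1]^n$). Multiplying the data by $M$ and then applying $f$ is precisely the composition $f \circ T$, which is a composition of two Borel isomorphisms; by Theorem \ref{T:composition} it is itself a Borel isomorphism, which is what we wanted. I do not expect any serious obstacle here: the only point requiring genuine verification is the continuity of a linear map and of its inverse, and this is immediate because every output coordinate is a linear (in particular continuous) function of the input coordinates. The role of invertibility is simply to guarantee bijectivity and to make the inverse map linear, so that the same continuity argument applies in both directions.
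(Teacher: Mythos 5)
Your proposal is correct and follows essentially the same route as the paper: show that multiplication by an invertible matrix is a homeomorphism (bijective because $M^{-1}$ exists, continuous in both directions because finite-dimensional linear maps are continuous), invoke Theorem \ref{T:homeomorphism} to get a Borel isomorphism, and then apply Theorem \ref{T:composition} to the composition. Your version is in fact slightly more careful than the paper's, which garbles the continuity claim (it says "any linear map \ldots is bijective" where it means continuous); your coordinate-wise justification of continuity fills that in cleanly.
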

\begin{proof}
Multiplication by an invertible matrix is a bijective function (since multiplying by its inverse matrix is the inverse, is it bijective) and any linear map (like matrix multiplication) on a finite dimensional vector space is bijective~\cite{DiscontinousLinearMap}, and its inverse is also a finite dimensional linear map and is also continuous, which means that (by Theorem \ref{T:homeomorphism}), multiplication by an invertible matrix is a Borel isomorphism. Since composition of Borel isomorphisms is a Borel isomorphism (by Theorem \ref{T:composition}), the composition of multiplying by an invertible matrix and then applying any other Borel isomorphism is a Borel isomorphism.
\end{proof}

This means that we can first multiply the data matrix by any invertible matrix of our choice and then apply the digit interchange Borel isomorphism.

The problem now is to find a matrix that improves the accuracy. We would like a matrix that represents a linear transformation that rotates vectors in the data set, preferably to one that improves the accuracy. Most invertible matrices correspond to linear maps that are distortions, dilations, and contractions. We are not interested in such transformations. We are only interested in linear transformations that correspond to rotations. Such matrices form a group (under multiplication, result proven below) called the \emph{Orthogonal group}, denoted $O(n)$. Furthermore if we restrict ourselves to matrices that preserve orientation, we form the \emph{Special Orthogonal} group ($SO(n)$).

\subsection{The Orthogonal and Special Orthogonal Groups}

\begin{definition} \label{D:SO(n)}
An $n$ by $n$ matrix $R$ is an \emph{orthogonal} matrix if multiplying the matrix by its transpose (in either order) results in the identity matrix: \cite{FriedbergInselSpence} \begin{equation*}
R^\mathsf{T} R = R R^\mathsf{T} = I
\end{equation*}

We say that $R$ is \emph{special orthogonal} if it satisfies the additional condition that the determinant is one:
\begin{equation*}
det(R) = 1
\end{equation*}
\end{definition}

We denote the set of orthogonal $n$ by $n$ matrices $O(n)$ and the set of special orthogonal $n$ by $n$ matrices $SO(n)$. We now show the $O(n)$ and $SO(n)$ are groups under multiplication, with $SO(n)$ being a subgroup of $O(n)$.

\begin{theorem} \label{T:SO(n)IsGroup}
The set of $n$ by $n$ orthogonal matricies $O(n)$ froms a group under multiplication, and  the set of special orthogonal matrices $SO(n)$ is a subgroup of $O(n)$.
\end{theorem}
\begin{proof}
We first show $O(n)$ satisfies the definition of a group:

\begin{enumerate}
\item Show $O(n)$ is closed under multiplication.

Let $Q, R \in O(n)$, we must show that $QR \in O(n)$.

We have that
\begin{equation*}
(QR)^\mathsf{T} (QR) = R^\mathsf{T} Q^\mathsf{T} Q R = R^\mathsf{T} I R = R^\mathsf{T} R = I
\end{equation*}

We also find that
\begin{equation*}
(QR) (QR)^\mathsf{T} = Q R R^\mathsf{T} Q^\mathsf{T} = Q I Q^\mathsf{T} = Q Q^\mathsf{T} = I
\end{equation*}

This means that
\begin{equation*}
(QR)^\mathsf{T} (QR) = (QR) (QR)^\mathsf{T} = I
\end{equation*}

Therefore, since $(QR)^\mathsf{T} (QR) = (QR) (QR)^\mathsf{T} = I$, this means $QR \in O(n)$.

\item Show that $I \in O(n)$.

Since $I^\mathsf{T} I = I I^\mathsf{T} = I$, $I \in O(n)$.

\item Since matrix multiplication is associative, it follows that multiplication of matrices in $O(n)$ is associative.

\item Show every element in $O(n)$ has an inverse in $O(n)$, by proving that if $Q \in O(n)$, then $Q^{-1} \in O(n)$.

Since $Q^\mathsf{T} Q = Q Q^\mathsf{T} = I$, by the definition of a matrix inverse $Q^\mathsf{T}$ is the inverse of $Q$, so $Q^{-1} = Q^\mathsf{T}$.

We see that
\begin{align*}
\left(Q^{-1}\right)^\mathsf{T} Q^{-1} &= \left(Q^\mathsf{T}\right)^\mathsf{T} Q^\mathsf{T} \\
									  &= Q Q^\mathsf{T}\\
									  &= I
\end{align*}

Similarly, we see that
\begin{align*}
Q^{-1} \left(Q^{-1}\right)^\mathsf{T} &= Q^\mathsf{T} \left(Q^\mathsf{T}\right)^\mathsf{T} \\
									  &= Q^\mathsf{T} Q \\
									  &= I
\end{align*}

Since $\left(Q^{-1}\right)^\mathsf{T} Q^{-1} = Q^{-1} \left(Q^{-1}\right)^\mathsf{T} = I$, this means that $Q^{-1} \in O(n)$.

\end{enumerate}

Next we must show that $SO(n)$ is a subgroup of $O(n)$. Since a matrix in $SO(n)$ must satisfy all the conditions a matrix in $O(n)$ does, it is obvious that $SO(n)$ is a subset of $O(n)$. We use the subgroup test to show $SO(n)$ is a subgroup of $O(n)$. \cite{Nicholson}

\begin{enumerate}
\item Show $I \in SO(n)$.

Since $I^\mathsf{T} I = I I^\mathsf{T} = I$ and $det(I) = 1$, $I \in SO(n)$.

\item Show if $Q, R \in SO(n)$, then $QR \in SO(n)$.

We see that
\begin{equation*}
det(QR) = det(Q) det(R) = 1 \cdot 1 = 1
\end{equation*}

Since $QR \in O(n)$ (shown above) and $det(QR) = 1$, it follows that $QR \in SO(n)$.

\item Show if $Q \in SO(n)$, then $Q^{-1} \in SO(n)$.

We have already shown that since $Q \in O(n)$, $Q^{-1} \in O(n)$.

Since $det(Q) = 1$, we find that
\begin{align*}
det\left(Q^{-1}\right) = \frac{1}{det(Q)} = \frac{1}{1} = 1
\end{align*}

Combining these facts, we find that $Q^{-1} \in O(n)$ and $det\left(Q^{-1}\right) = 1$, which means $Q^{-1} \in SO(n)$.

\end{enumerate}

\end{proof}

\begin{theorem} \label{T:OrthogonalDeterminant}
For any matrix $Q \in O(n)$, $det(Q) = \pm 1$.
\end{theorem}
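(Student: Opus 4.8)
The plan is to exploit the defining relation $Q^\mathsf{T} Q = I$ of an orthogonal matrix together with two elementary facts about determinants: that the determinant is multiplicative, $det(AB) = det(A)\, det(B)$, and that a square matrix and its transpose have the same determinant, $det(Q^\mathsf{T}) = det(Q)$. Both of these are standard theorems of linear algebra (see \cite{FriedbergInselSpence}), and within the framework of this paper they may be assumed without proof.

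First I would take the determinant of both sides of the orthogonality condition $Q^\mathsf{T} Q = I$. Since the right-hand side is the identity, its determinant is $1$, while applying multiplicativity to the left-hand side gives $det(Q^\mathsf{T} Q) = det(Q^\mathsf{T})\, det(Q)$. Next I would invoke $det(Q^\mathsf{T}) = det(Q)$ to rewrite this product as $det(Q)^2$. Combining the two computations yields the single real equation $det(Q)^2 = 1$. Finally, since $det(Q)$ is a real number whose square equals $1$, it follows immediately that $det(Q) = \pm 1$, which is exactly the claim.

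As for the main obstacle, there is essentially none of substance: the entire argument is a one-line consequence of the two determinant properties above. The only point that requires any care is to ensure that the multiplicativity of the determinant and its transpose-invariance are \emph{cited} rather than reproved, since they are themselves nontrivial results; everything past that is immediate. (It is also worth noting what the theorem does \emph{not} say, namely that the converse fails, so this computation does not characterize $O(n)$ but only constrains the possible determinants.)
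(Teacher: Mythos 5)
Your argument is correct and is exactly the proof the paper gives: take determinants of $Q^\mathsf{T} Q = I$, use multiplicativity and $\det(Q^\mathsf{T}) = \det(Q)$ to get $\det(Q)^2 = 1$, and conclude $\det(Q) = \pm 1$. No differences worth noting.
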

\begin{proof}
Let $Q \in O(n)$. Since $Q^{\mathsf{T}} Q = I$, it follows that $det(Q^{\mathsf{T}} Q) = det(I) = 1$. We find that
\begin{align*}
det(Q^{\mathsf{T}} Q) &= det(Q^{\mathsf{T}}) det(Q) \\
&= det(Q) det(Q) \\
&= \left( det(Q) \right)^2
\end{align*}
Since $det(Q^{\mathsf{T}} Q) = 1$, it follows that $\left( det(Q) \right)^2 = 1$, and so $det(Q) = \pm 1$.
\end{proof}

\begin{corollary} \label{C:OnSquaresIsSOn}
If $Q \in O(n)$, then $Q^2 \in SO(n)$.
\end{corollary}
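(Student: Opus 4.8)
The plan is to verify the two defining conditions of $SO(n)$ directly, each of which follows immediately from a result already established above. Recall from Definition \ref{D:SO(n)} that a matrix lies in $SO(n)$ precisely when it is orthogonal and has determinant one, so it suffices to show that $Q^2 \in O(n)$ and that $\det(Q^2) = 1$.

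For the first condition, I would simply invoke the group structure. By Theorem \ref{T:SO(n)IsGroup}, the set $O(n)$ forms a group under multiplication and is therefore closed under products; applying this closure to the product $Q \cdot Q$ yields $Q^2 \in O(n)$ with no further computation.

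For the second condition, I would combine the multiplicativity of the determinant with Theorem \ref{T:OrthogonalDeterminant}, which tells us that $\det(Q) = \pm 1$ for any $Q \in O(n)$. Then
\begin{align*}
\det(Q^2) &= \det(Q) \det(Q) \\
&= (\pm 1)^2 \\
&= 1,
\end{align*}
so the determinant of the square is one regardless of the sign of $\det(Q)$.

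Having established both $Q^2 \in O(n)$ and $\det(Q^2) = 1$, the definition of the special orthogonal group gives $Q^2 \in SO(n)$, completing the argument. I do not anticipate any genuine obstacle here, as the statement is a direct corollary of the two preceding theorems; the only point worth stating explicitly is that the key role of Theorem \ref{T:OrthogonalDeterminant} is exactly to eliminate the ambiguous sign of $\det(Q)$ upon squaring.
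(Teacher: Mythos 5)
Your proposal is correct and follows essentially the same route as the paper's proof: both use the group closure of $O(n)$ from Theorem \ref{T:SO(n)IsGroup} to get $Q^2 \in O(n)$ and Theorem \ref{T:OrthogonalDeterminant} together with multiplicativity of the determinant to get $\det(Q^2) = 1$. The only cosmetic difference is that the paper splits into the cases $\det(Q) = 1$ and $\det(Q) = -1$ where you write $(\pm 1)^2 = 1$ directly.
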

\begin{proof}
Let $Q \in O(n)$, then by Theorem \ref{T:OrthogonalDeterminant}, $det(Q) = \pm 1$. If $det(Q) = 1$, then $det(Q^2) = \left(det(Q)\right)^2 = 1$. If $det(Q) = -1$, then $det(Q^2) = \left(det(Q)\right)^2 = \left(-1\right)^2 = 1$. Since $O(n)$ is a group (by Theorem \ref{T:SO(n)IsGroup}) and $Q \in O(n)$, $Q^2 \in O(n)$, and since $det(Q^2) = 1$, $Q^2 \in SO(n)$.
\end{proof}

\begin{corollary} \label{C:OnMultIsBorelIsomorphism}
Let $Q \in O(n)$ be an orthogonal matrix in $O(n)$. Then $f: \R^{n} \mapsto \R^{n}$ where $f(\vec x) = Q \vec x$ is a Borel isomorphism.
\end{corollary}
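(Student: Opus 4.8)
The plan is to recognize this corollary as an immediate specialization of Theorem \ref{T:invmatrix}, since every orthogonal matrix is in particular invertible. First I would unpack the definition of an orthogonal matrix: by Definition \ref{D:SO(n)}, any $Q \in O(n)$ satisfies $Q^{\mathsf{T}} Q = Q Q^{\mathsf{T}} = I$. This identity exhibits $Q^{\mathsf{T}}$ as a two-sided inverse of $Q$, so $Q$ is invertible with $Q^{-1} = Q^{\mathsf{T}}$. (As an independent sanity check, Theorem \ref{T:OrthogonalDeterminant} gives $\det(Q) = \pm 1 \neq 0$, which also forces invertibility.)

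Next I would observe that $f(\vec x) = Q \vec x$ is therefore multiplication by an invertible $n \times n$ matrix. Theorem \ref{T:invmatrix} already establishes that multiplying the data matrix by an invertible matrix and then composing with any Borel isomorphism yields a Borel isomorphism; taking the subsequent Borel isomorphism to be the identity map on $\R^n$ (which is trivially a homeomorphism, hence a Borel isomorphism by Theorem \ref{T:homeomorphism}), we conclude that $f$ itself is a Borel isomorphism. Equivalently, one can bypass Theorem \ref{T:invmatrix} and argue directly: $f$ is a linear bijection on the finite-dimensional space $\R^n$, with inverse $f^{-1}(\vec y) = Q^{\mathsf{T}} \vec y$, and both $f$ and $f^{-1}$ are continuous linear maps, so $f$ is a homeomorphism; then Theorem \ref{T:homeomorphism} yields the result.

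There is no genuine obstacle here, since the substantive content is entirely contained in the earlier theorems. The only fact requiring verification is the one-line observation that orthogonality implies invertibility, after which the corollary follows by citation. The only real choice is which prior result to invoke, and I would favor citing Theorem \ref{T:invmatrix} directly, as it was stated precisely for this purpose.
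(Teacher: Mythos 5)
Your proposal is correct and follows essentially the same route as the paper: establish that $Q$ is invertible and then invoke Theorem \ref{T:invmatrix}. The only cosmetic difference is that you derive invertibility directly from $Q^{-1} = Q^{\mathsf{T}}$, whereas the paper goes through $\det(Q) = \pm 1 \neq 0$ via Theorem \ref{T:OrthogonalDeterminant}; both are fine.
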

\begin{proof}
By Theorem \ref{T:OrthogonalDeterminant}, for any matrix $Q \in O(n)$, $det(Q) = \pm 1$, which means that $det(Q) \neq 0$. A matrix is singular if and only if its determinant is zero, which means that $Q$ is invertible. Since multiplication by any invertible matrix is a Borel isomorphism (by Theorem \ref{T:invmatrix}), multiplication by $Q$ is a Borel isomorphism.
\end{proof}

One important property of $O(n)$ is that the columns are orthogonal to each other.

\begin{theorem} \label{T:ColumnsOrthogonal}
An $n$ by $n$ matrix $Q$ is in $O(n)$ if and only if the columns of $Q$ and the rows of $Q$ are orthonormal to each other.

That is, $Q \in O(n)$ if and only if $\forall\; 1 \leq i, j \leq n$ where $i \neq j$ and $v_{i}, v_{j}$ are both column vectors or both row vectors of $Q$, $\vec v_i \cdot \vec v_i = 1$ and $\vec v_i \cdot \vec v_j = 0$.
\end{theorem}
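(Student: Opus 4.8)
The plan is to convert the defining matrix identity $Q^{\mathsf{T}} Q = Q Q^{\mathsf{T}} = I$ into pointwise statements about dot products of columns and rows, using the fact that each entry of a matrix product is the dot product of a row of the left factor with a column of the right factor.

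First I would fix notation, writing $\vec c_1, \ldots, \vec c_n$ for the columns of $Q$ and $\vec r_1, \ldots, \vec r_n$ for the rows. The central computation is that the $i$-th row of $Q^{\mathsf{T}}$ is exactly the $i$-th column $\vec c_i$ of $Q$, so that the $(i,j)$ entry of $Q^{\mathsf{T}} Q$ is the dot product $\vec c_i \cdot \vec c_j$. In the same way, the $j$-th column of $Q^{\mathsf{T}}$ is the $j$-th row $\vec r_j$ of $Q$, so the $(i,j)$ entry of $Q Q^{\mathsf{T}}$ is $\vec r_i \cdot \vec r_j$.

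With these two entry identities in hand, the equivalence becomes a matter of reading off entries against the identity matrix. The identity $Q^{\mathsf{T}} Q = I$ holds if and only if $\vec c_i \cdot \vec c_j$ equals $1$ when $i = j$ and $0$ when $i \neq j$, which is precisely the condition that the columns are orthonormal; identically, $Q Q^{\mathsf{T}} = I$ holds if and only if the rows are orthonormal. For the forward direction I would assume $Q \in O(n)$, so that both matrix equations hold by Definition \ref{D:SO(n)}, and conclude that both the columns and the rows are orthonormal. For the reverse direction I would assume both the columns and the rows are orthonormal, which yields $Q^{\mathsf{T}} Q = I$ and $Q Q^{\mathsf{T}} = I$, hence $Q \in O(n)$.

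There is no serious obstacle here; the argument is essentially bookkeeping. The only point requiring care is keeping the index conventions straight — in particular remembering that transposing $Q$ sends its $i$-th column to the $i$-th row of $Q^{\mathsf{T}}$ — and correctly separating the diagonal case ($i = j$, giving the normalization $\vec v_i \cdot \vec v_i = 1$) from the off-diagonal case ($i \neq j$, giving orthogonality $\vec v_i \cdot \vec v_j = 0$). I would also remark that for a square matrix the single condition $Q^{\mathsf{T}} Q = I$ already forces $Q Q^{\mathsf{T}} = I$, so that orthonormality of the columns alone implies orthonormality of the rows; the statement phrased with both is thus mildly redundant, but proving the two entry identities directly handles both halves of the claim uniformly and avoids having to invoke that fact.
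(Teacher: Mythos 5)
Your proof is correct and follows essentially the same route as the paper's: both arguments identify the $(i,j)$ entry of $Q^{\mathsf{T}}Q$ with the dot product of the $i$-th and $j$-th columns (and of $QQ^{\mathsf{T}}$ with the rows) and read the orthonormality conditions off against the identity matrix in both directions. Your closing remark that $Q^{\mathsf{T}}Q = I$ already forces $QQ^{\mathsf{T}} = I$ for square matrices is a correct observation the paper does not make, but it does not change the substance of the argument.
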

\begin{proof}
First we show that if $Q \in O(n)$, then the rows of $Q$ are orthonormal and the columns of $Q$ are orthonormal. Let $Q \in O(n)$.

The the column vectors of $Q$ be $\vec v_1$, $\vec v_2$, ..., $\vec v_{n}$, so that $Q = \begin{bmatrix}\vec v_1& \vec v_2& ...& \vec v_{n}\end{bmatrix}$.

Then we find that
\begin{align*}
I &= Q^\mathsf{T} Q \\
&= \begin{bmatrix}
\vec v_1^\mathsf{T} & \\
\vec v_2^\mathsf{T} & \\
... & \\
\vec v_{n}^\mathsf{T} & \\
\end{bmatrix} \begin{bmatrix}\vec v_1& \vec v_2& ...& \vec v_{n}\end{bmatrix}
\end{align*}

This means that (for all $i \in \{0, 1, ..., n\}$) $\vec v_{i} \cdot \vec v_{i} = \vec v_{i}^\mathsf{T} \vec v_{i} = 1$, and that (for $i \neq j$) $\vec v_{i} \cdot \vec v_{j} = \vec v_{i}^\mathsf{T} \vec v_{j} = 0$, so the columns of $Q$ form an orthonormal set.

Now let the row vectors of $Q$ be $\vec u_1$, $\vec u_2$, ..., $\vec u_{n}$, so that $Q = \begin{bmatrix}
\vec u_1&\\
\vec u_2&\\
...&\\
\vec u_{n}
\end{bmatrix}$.

We then find that
\begin{align*}
I &= Q Q^\mathsf{T} \\
&= \begin{bmatrix}
\vec u_1&\\
\vec u_2&\\
...&\\
\vec u_{n}
\end{bmatrix} \begin{bmatrix}\vec u_1^\mathsf{T}& \vec u_2^\mathsf{T}& ...& \vec u_{n}^\mathsf{T} \end{bmatrix}
\end{align*}

This means that (for all $i \in \{0, 1, ..., n\}$) $\vec u_{i} \cdot \vec u_{i} = \vec u_{i} \vec u_{i}^\mathsf{T} = 1$, and that (for $i \neq j$) $\vec u_{i} \cdot \vec u_{j} = \vec u_{i} \vec u_{j}^\mathsf{T} = 0$, so the rows of $Q$ form an orthonormal set.

This means that if $Q \in O(n)$, then the rows and the columns of $Q$ both form orthonormal sets.

We now need to prove the opposite implication. Let $Q$ be an $n$ by $n$ matrix such that the column vectors $\vec v_1$, $\vec v_2$, ..., $\vec v_{n}$ are orthonormal and the row vectors $\vec u_1$, $\vec u_2$, ..., $\vec u_{n}$ are orthonormal. This means that $\forall i \in \{0, 1, ..., n\}$, $\vec v_{i} \cdot \vec v_{i} = 1$ and $\vec u_{i} \cdot \vec u_{i} = 1$, and that if $i \neq j$, then $\vec v_{i} \cdot \vec v_{j} = 0$ and $\vec u_{i} \cdot \vec u_{j} = 0$.

Then we find that
\begin{align*}
Q^\mathsf{T} Q &= \begin{bmatrix}
\vec v_1^\mathsf{T} & \\
\vec v_2^\mathsf{T} & \\
... & \\
\vec v_{n}^\mathsf{T} & \\
\end{bmatrix} \begin{bmatrix}\vec v_1& \vec v_2& ...& \vec v_{n}\end{bmatrix}\\
&= \begin{bmatrix} \vec v_1 \cdot \vec v_1 & \vec v_1 \cdot \vec v_2 & ... & \vec v_1 \cdot \vec v_{n}&\\
				   \vec v_2 \cdot \vec v_1 & \vec v_2 \cdot \vec v_2 & ... & \vec v_2 \cdot \vec v_{n}&\\
				   ... & ... & ... & ... &\\
				   \vec v_{n} \cdot \vec v_1 & \vec v_{n} \cdot \vec v_2 & ... & \vec v_{n} \cdot \vec v_{n} \end{bmatrix}\\
&= \begin{bmatrix} 1 & 0 & ... & 0&\\
				   0 & 1 & ... & 0&\\
				   ... & ... & ... & ... &\\
				   0 & 0 & ... & 1
\end{bmatrix}\\
&= I
\end{align*}

We also find that
\begin{align*}
Q Q^\mathsf{T} &= \begin{bmatrix}
\vec u_1&\\
\vec u_2&\\
...&\\
\vec u_{n}
\end{bmatrix} \begin{bmatrix}\vec u_1^\mathsf{T}& \vec u_2^\mathsf{T}& ...& \vec u_{n}^\mathsf{T} \end{bmatrix}\\
&= \begin{bmatrix} \vec u_1 \cdot \vec u_1 & \vec u_1 \cdot \vec u_2 & ... & \vec u_1 \cdot \vec u_{n}&\\
				   \vec u_2 \cdot \vec u_1 & \vec u_2 \cdot \vec u_2 & ... & \vec u_2 \cdot \vec u_{n}&\\
				   ... & ... & ... & ... &\\
				   \vec u_{n} \cdot \vec u_1 & \vec u_{n} \cdot \vec u_2 & ... & \vec u_{n} \cdot \vec u_{n} \end{bmatrix}\\
&= \begin{bmatrix} 1 & 0 & ... & 0&\\
				   0 & 1 & ... & 0&\\
				   ... & ... & ... & ... &\\
				   0 & 0 & ... & 1
\end{bmatrix}\\
&= I
\end{align*}

This means that since $Q Q^\mathsf{T} = Q^\mathsf{T} Q = I$,  $Q \in O(n)$.
\end{proof}

\begin{theorem} \label{T:InnerProduct}
Multiplication by a matrix $Q \in O(n)$ preserves the inner product, that is, for any vectors $\vec x, \vec y \in \R^{n}$, $\langle \vec x, \vec y \rangle = \langle Q \vec x, Q \vec y \rangle$.
\end{theorem}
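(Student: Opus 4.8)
The plan is to reduce the statement to the single defining relation $Q^{\mathsf{T}} Q = I$ from Definition \ref{D:SO(n)} by rewriting the Euclidean inner product as a matrix product. First I would record the elementary identity that, for the standard inner product on $\R^{n}$, treating vectors as column matrices gives
\begin{equation*}
\langle \vec x, \vec y \rangle = \vec x^{\mathsf{T}} \vec y .
\end{equation*}
This is just the statement that the dot product $\sum_{i} x_{i} y_{i}$ equals the $1 \times 1$ matrix product of the row vector $\vec x^{\mathsf{T}}$ with the column vector $\vec y$.

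Next I would apply this representation to the transformed vectors $Q\vec x$ and $Q\vec y$, obtaining
\begin{equation*}
\langle Q\vec x, Q\vec y \rangle = (Q\vec x)^{\mathsf{T}}(Q\vec y).
\end{equation*}
The key algebraic step is then to invoke the transpose-of-a-product rule $(Q\vec x)^{\mathsf{T}} = \vec x^{\mathsf{T}} Q^{\mathsf{T}}$, so that, using associativity of matrix multiplication,
\begin{equation*}
\langle Q\vec x, Q\vec y \rangle = \vec x^{\mathsf{T}} Q^{\mathsf{T}} Q \vec y .
\end{equation*}
At this point the defining property of an orthogonal matrix does all the work: substituting $Q^{\mathsf{T}} Q = I$ collapses the middle factor, giving $\vec x^{\mathsf{T}} I \vec y = \vec x^{\mathsf{T}} \vec y = \langle \vec x, \vec y \rangle$, which completes the argument.

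I do not expect a genuine obstacle here, since the proof is a direct three-line computation once the inner product is written in matrix form. The only points that deserve care, rather than difficulty, are the two standard linear-algebra facts being used: that the Euclidean inner product coincides with $\vec x^{\mathsf{T}} \vec y$, and that transposition reverses the order of a product. Both are routine, so the heart of the proof is simply the clean cancellation supplied by $Q^{\mathsf{T}} Q = I$. If desired, one could note as an immediate corollary that norms are preserved (take $\vec y = \vec x$), which is why multiplication by $Q \in O(n)$ behaves as a rotation or reflection, matching the geometric motivation given earlier.
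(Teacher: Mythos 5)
Your proof is correct, and it takes a genuinely different route from the one in the paper. You reduce everything to the defining relation $Q^{\mathsf{T}}Q = I$ via the matrix identity $\langle \vec x, \vec y\rangle = \vec x^{\mathsf{T}}\vec y$ and the transpose-reversal rule, so the whole argument is the one-line cancellation $(Q\vec x)^{\mathsf{T}}(Q\vec y) = \vec x^{\mathsf{T}}Q^{\mathsf{T}}Q\vec y = \vec x^{\mathsf{T}}\vec y$. The paper instead first invokes Theorem \ref{T:ColumnsOrthogonal} to get an orthonormal basis $\vec v_1,\dots,\vec v_n$ out of $Q$, expands $\vec x = \sum_i a_i\vec v_i$ and $\vec y = \sum_i b_i\vec v_i$, and then computes both $\langle \vec x,\vec y\rangle$ and $\langle Q\vec x, Q\vec y\rangle$ explicitly by bilinearity, showing each equals $a_1b_1 + \cdots + a_nb_n$. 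Your version is shorter, needs no auxiliary lemma, and generalizes immediately (e.g.\ to unitary matrices over $\C$ with the conjugate transpose); the paper's version buys an explicit coordinate picture --- it exhibits $Q\vec x$ as the coordinate vector $(a_1,\dots,a_n)^{\mathsf{T}}$ of $\vec x$ in the basis built from $Q$ --- at the cost of a much longer computation and a dependence on the orthonormality theorem. Your closing remark that norm preservation (Theorem \ref{T:Lengths}) follows by taking $\vec y = \vec x$ is exactly how the paper derives it as well.
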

\begin{proof}
First we denote the row vectors of $Q$ as $\vec u_1, \vec u_2, ..., \vec u_{n}$, so that $Q = \begin{bmatrix}\vec u_1&\\ \vec u_2&\\ ...&\\ \vec u_{n} \end{bmatrix}$.

By Theorem \ref{T:ColumnsOrthogonal}, the row vectors $\vec u_1, \vec u_2, ..., \vec u_{n}$ form an orthonormal basis. We then denote the transpose of each $\vec u_{i}$ as $\vec v_{i}$, so $\vec v_{i} = \vec u_{i}^\mathsf{T}$. This means that the $\vec v_{i}$ also form an orthonormal set. It follows that for all $1 \leq i \leq n$, $\langle \vec v_i, \vec v_i \rangle = \vec v_i^\mathsf{T} \vec v_i = 1$ and for all $i \neq j$, $\langle \vec v_i, \vec v_j \rangle = \vec v_i^\mathsf{T} \vec v_j = 0$.

This means that
\begin{align*}
\langle \vec x, \vec y \rangle &= \langle a_1 \vec v_1 + a_2 \vec v_2 + ... + a_{n} \vec v_{n}, b_1 \vec v_1 + b_2 \vec v_2 + ... + b_{n} \vec v_{n} \rangle \\
&= \langle a_1 \vec v_1, b_1 \vec v_1 + b_2 \vec v_2 + ... + b_{n} \vec v_{n} \rangle + \langle a_2 \vec v_2, b_1 \vec v_1 + b_2 \vec v_2 + ... + b_{n} \vec v_{n} \rangle \\& + ... + \langle a_{n} \vec v_{n}, b_1 \vec v_1 + b_2 \vec v_2 + ... + b_{n} \vec v_{n} \rangle \\
&= \langle a_1 \vec v_1, b_1 \vec v_1 \rangle + \langle a_1 \vec v_1, b_2 \vec v_2 \rangle + ... + \langle a_1 \vec v_1, b_{n} \vec v_{n} \rangle \\&+ \langle a_2 \vec v_2, b_1 \vec v_1 \rangle + \langle a_2 \vec v_2, b_2 \vec v_2 \rangle + ... + \langle a_2 \vec v_2, b_{n} \vec v_{n} \rangle \\&+ ... + \langle a_{n} \vec v_{n}, b_1 \vec v_1 \rangle + \langle a_{n} \vec v_{n}, b_2 \vec v_2 \rangle + \langle a_{n} \vec v_{n}, b_{n} \vec v_{n} \rangle \\
&= a_1 b_1 \langle \vec v_1, \vec v_1 \rangle + a_1 b_2 \langle \vec v_1, \vec v_2 \rangle + ... + a_1 b_{n} \langle \vec v_1, \vec v_{n} \rangle \\&+ a_2 b_1 \langle \vec v_2, \vec v_1 \rangle + a_2 b_2 \langle \vec v_2, \vec v_2 \rangle + ... + a_2 b_{n} \langle \vec v_2, \vec v_{n} \rangle \\&+ ... + a_{n} b_1 \langle \vec v_{n}, \vec v_1 \rangle + a_{n} b_2 \langle \vec v_{n}, \vec v_2 \rangle + a_{n} b_{n} \langle \vec v_{n}, \vec v_{n} \rangle \\
&= a_1 b_1 + a_2 b_2 + ... + a_{n} b_{n}
\end{align*}

Furthermore, we see that
\begin{align*}
\langle Q \vec x, Q \vec y \rangle &= \langle Q (a_1 \vec v_1 + a_2 \vec v_2 + ... + a_{n} \vec v_{n}), Q (b_1 \vec v_1 + b_2 \vec v_2 + ... + b_{n} \vec v_{n})\rangle \\
&= \left\langle \begin{bmatrix} u_1 &\\ \vec u_2 &\\ ... &\\ \vec u_{n}\end{bmatrix} (a_1 \vec v_1 + a_2 \vec v_2 +... + a_{n} \vec v_{n}), \begin{bmatrix} u_1 &\\ \vec u_2 &\\ ... &\\ \vec u_{n}\end{bmatrix} (b_1 \vec v_1 + b_2 \vec v_2 + ... + b_{n} \vec v_{n})\right\rangle \\
&= \left\langle \begin{bmatrix} \vec v_1^\mathsf{T} (a_1 \vec v_1 + a_2 \vec v_2 + ... + a_{n} \vec v_{n}) &\\ \vec v_2^\mathsf{T} (a_1 \vec v_1 + a_2 \vec v_2 +... + a_{n} \vec v_{n}) &\\ ... &\\ \vec v_{n}^\mathsf{T} (a_1 \vec v_1 + a_2 \vec v_2 +... + a_{n} \vec v_{n})\end{bmatrix}, \begin{bmatrix} \vec v_{1}^\mathsf{T} (b_1 \vec v_1 + b_2 \vec v_2 + ... + b_{n} \vec v_{n}) &\\ \vec v_{2}^\mathsf{T} (b_1 \vec v_1 + b_2 \vec v_2 + ... + b_{n} \vec v_{n}) &\\ ... &\\ \vec v_{n}^\mathsf{T} (b_1 \vec v_1 + b_2 \vec v_2 + ... + b_{n} \vec v_{n})\end{bmatrix}\right\rangle \\
&= \left\langle \begin{bmatrix} a_1 &\\ a_2 &\\ ... &\\ a_{n}\end{bmatrix}, \begin{bmatrix} b_1 &\\ b_2 &\\ ... &\\ b_{n}\end{bmatrix}\right\rangle \\
&= a_1 b_1 + a_2 b_2 + ... + a_{n} b_{n}
\end{align*}

Combining the above results, we find that
\begin{align*}
\langle \vec x, \vec y \rangle = \langle Q \vec x, Q \vec y \rangle = a_1 b_1 + a_2 b_2 + ... + a_{n} b_{n}
\end{align*}

\end{proof}

\begin{theorem} \label{T:Lengths}
Multiplication by matrices in $O(n)$ preserves the length of vectors with the Euclidean norm, that is, for any vector $\vec x \in \R^{n}$, $||Q \vec x||_2 = ||\vec x||_2$.
\end{theorem}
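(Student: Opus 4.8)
The plan is to derive this directly from Theorem \ref{T:InnerProduct}, which already establishes that multiplication by a matrix $Q \in O(n)$ preserves the inner product. The key observation is that the Euclidean norm is not an independent object but is defined in terms of the inner product, namely $\|\vec x\|_2 = \sqrt{\langle \vec x, \vec x\rangle}$ for any $\vec x \in \R^n$. Once this is recalled, the theorem reduces to the special case of inner-product preservation where both arguments are the same vector.

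Concretely, first I would invoke Theorem \ref{T:InnerProduct} with the second argument set equal to the first, i.e. take $\vec y = \vec x$. This immediately yields the chain of equalities
\begin{equation*}
\langle Q\vec x, Q\vec x\rangle = \langle \vec x, \vec x\rangle.
\end{equation*}
Next I would rewrite each side in terms of the norm using the definition $\|\vec v\|_2^2 = \langle \vec v, \vec v\rangle$, giving $\|Q\vec x\|_2^2 = \|\vec x\|_2^2$. Since the Euclidean norm is nonnegative by definition, both $\|Q\vec x\|_2$ and $\|\vec x\|_2$ lie in $[0,\infty)$, so taking the (nonnegative) square root of both sides is a reversible step and produces $\|Q\vec x\|_2 = \|\vec x\|_2$, as desired.

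There is essentially no serious obstacle here, since the hard analytic work was already carried out in the proof of Theorem \ref{T:InnerProduct}; this statement is a one-line corollary of it. The only point requiring a word of care is the passage from squared norms back to norms, where one must note that taking square roots is legitimate precisely because the norm is nonnegative (so no sign ambiguity arises). I would therefore expect the entire proof to occupy just a few lines, consisting of the substitution $\vec y = \vec x$ into the inner-product identity, the rewriting via the definition of the norm, and the final square-root step.
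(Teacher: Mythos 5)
Your proposal is correct and follows essentially the same route as the paper: the paper likewise applies Theorem \ref{T:InnerProduct} with both arguments equal to $\vec x$ and writes $\| Q \vec x \|_2 = \sqrt{\langle Q \vec x, Q \vec x\rangle} = \sqrt{\langle \vec x, \vec x\rangle} = \| \vec x \|_2$. Your extra remark about the nonnegativity of the norm justifying the square-root step is a fine (if minor) addition.
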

\begin{proof}
Let $Q \in O(n)$ and $\vec x \in \R^{n}$.

This result follows from Theorem \ref{T:InnerProduct} as follows,
\begin{align*}
\| Q \vec x \|_2 &= \sqrt{\langle Q \vec x, Q \vec x\rangle} \\
&= \sqrt{\langle \vec x, \vec x\rangle} \\
&= \| \vec x \|_2
\end{align*}
\end{proof}

\begin{theorem} \label{T:Angles}
Multiplication by matrices in $O(n)$ preserves the angle between vectors.
\end{theorem}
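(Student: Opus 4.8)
The plan is to use the standard definition of the angle between two nonzero vectors in terms of the inner product and the Euclidean norm, and then reduce everything to the two results immediately preceding this theorem.

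First I would recall that for nonzero vectors $\vec x, \vec y \in \R^{n}$, the angle $\theta$ between them is defined by
\begin{equation*}
\cos\theta = \frac{\langle \vec x, \vec y \rangle}{\|\vec x\|_2 \, \|\vec y\|_2},
\end{equation*}
with $\theta \in [0, \pi]$. This is well defined because the Cauchy--Schwarz inequality guarantees the quotient lies in $[-1, 1]$, and because cosine is injective on $[0, \pi]$, so the angle is recovered uniquely from its cosine. I would note that the statement is to be understood for nonzero $\vec x$ and $\vec y$, since otherwise the angle is undefined; moreover, by Theorem \ref{T:Lengths} multiplication by $Q$ preserves norms, so $Q\vec x$ and $Q\vec y$ are again nonzero and the angle between them is well defined.

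Next I would fix $Q \in O(n)$ and let $\phi$ denote the angle between $Q\vec x$ and $Q\vec y$, so that
\begin{equation*}
\cos\phi = \frac{\langle Q\vec x, Q\vec y \rangle}{\|Q\vec x\|_2 \, \|Q\vec y\|_2}.
\end{equation*}
Applying Theorem \ref{T:InnerProduct} to the numerator gives $\langle Q\vec x, Q\vec y \rangle = \langle \vec x, \vec y \rangle$, and applying Theorem \ref{T:Lengths} to each factor in the denominator gives $\|Q\vec x\|_2 = \|\vec x\|_2$ and $\|Q\vec y\|_2 = \|\vec y\|_2$. Substituting these three identities yields $\cos\phi = \cos\theta$, and since both $\phi$ and $\theta$ lie in $[0,\pi]$, where cosine is injective, I would conclude $\phi = \theta$.

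The main obstacle here is essentially nonexistent, since the substantive work was already carried out in Theorems \ref{T:InnerProduct} and \ref{T:Lengths}: once inner products and lengths are known to be preserved, angle preservation is immediate from the definition. The only genuine subtlety to flag is the well-definedness of the angle, namely restricting to nonzero vectors and observing that $Q$ maps nonzero vectors to nonzero vectors, so that both $\cos\theta$ and $\cos\phi$ make sense before they are compared.
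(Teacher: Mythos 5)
Your proof is correct and follows essentially the same route as the paper: express the cosine of the angle via the inner product and norms, invoke Theorems \ref{T:InnerProduct} and \ref{T:Lengths}, and conclude by injectivity of cosine on $[0,\pi]$. Your added remarks on well-definedness for nonzero vectors are a small improvement in rigour over the paper's version but do not change the argument.
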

\begin{proof}
Let $\vec v_1, \vec v_2 \in \R^{n}$, and let $\theta$ be the angle between between $\vec v_1$ and $\vec v_2$. Also let $\theta_{Q}$ be the angle between $Q \vec v_1$ and $Q \vec v_2$. Then the angle $\theta$ satisfies \cite{AppliedLinearAlgebra}
\begin{equation*}
cos(\theta) = \frac{\langle \vec v_1, \vec v_2 \rangle}{\left\| \vec v_1 \right \| \left\| \vec v_2 \right \|}
\end{equation*}

We then find that from the above fact and Theorems \ref{T:InnerProduct} and \ref{T:Lengths}, 
\begin{align*}
cos(\theta_{Q}) &= \frac{\langle Q \vec v_1, Q \vec v_2 \rangle}{\left\| Q \vec v_1 \right \| \left\| Q \vec v_2 \right \|} \\
&= \frac{\langle \vec v_1, \vec v_2 \rangle}{\left\| \vec v_1 \right \| \left\| \vec v_2 \right \|} \\
&= cos(\theta)
\end{align*}

This means that $cos(\theta) = cos(\theta_{Q})$, and it follows that (if without loss of generality we assume $0 \leq \theta < \pi$), $\theta_{Q} = \theta$.
\end{proof}

The group $SO(n)$ corresponds geometrically to rotations of vectors in an n-dimensional space (it is the group of linear transformations preserving length and orientation of vectors), while $O(n)$ corresponds geometrically to improper rotations (preserving length of vectors, but not necessarily orientation). ~\cite{FriedbergInselSpence}

\subsection{Permutation Matrices}

A special class of matrices in $O(n)$ is the set of permutation matrices, which correspond to reordering the columns of the data matrix.

\begin{definition}
A \emph{permutation matrix} is an $n$ by $n$ matrix such that each row contains exactly one 1 and all other entries in that row are 0, and each column contains exactly one 1 and all other entries in that column are zero. \cite{AppliedLinearAlgebra}
\end{definition}

\begin{theorem} \label{T:permutation}
Any permutation matrix is in $O(n)$.
\end{theorem}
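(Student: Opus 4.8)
The plan is to avoid multiplying $P$ by $P^\mathsf{T}$ directly and instead invoke Theorem \ref{T:ColumnsOrthogonal}, which states that an $n$ by $n$ matrix belongs to $O(n)$ exactly when its columns are orthonormal and its rows are orthonormal. This reduces the problem to a clean structural observation about where the $1$'s in a permutation matrix sit.

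First I would unpack the column structure. By the definition of a permutation matrix, each column $\vec v_i$ contains exactly one entry equal to $1$, say in row $\sigma(i)$, with all other entries $0$; hence $\vec v_i = \vec e_{\sigma(i)}$ is a standard basis vector. The key step is then to argue that the assignment $i \mapsto \sigma(i)$ is injective. Indeed, if two distinct columns $i \neq j$ had their $1$ in the same row $\sigma(i) = \sigma(j)$, that row would contain two $1$'s, contradicting the requirement that each row of a permutation matrix contains exactly one $1$. Thus $\sigma$ is a permutation of $\{1, \dots, n\}$, and distinct columns are distinct standard basis vectors.

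From here orthonormality of the columns is immediate: $\vec v_i \cdot \vec v_i = \vec e_{\sigma(i)} \cdot \vec e_{\sigma(i)} = 1$, and for $i \neq j$ we get $\vec v_i \cdot \vec v_j = \vec e_{\sigma(i)} \cdot \vec e_{\sigma(j)} = 0$ since $\sigma(i) \neq \sigma(j)$. Running the identical argument on the rows (each row likewise being a standard basis vector, with the column condition now guaranteeing distinctness) shows the rows are orthonormal as well. Applying Theorem \ref{T:ColumnsOrthogonal} then yields $P \in O(n)$, completing the proof.

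There is no genuine obstacle here; the only point that requires care is the injectivity of the column-to-row assignment, and this is precisely where both halves of the definition (exactly one $1$ per row \emph{and} exactly one per column) must be used together rather than separately. As a fallback, one could instead compute $(P^\mathsf{T} P)_{ij} = \sum_k P_{ki} P_{kj} = \vec v_i \cdot \vec v_j = \delta_{ij}$ and likewise $P P^\mathsf{T} = I$ directly from the definition \ref{D:SO(n)}, but the route through Theorem \ref{T:ColumnsOrthogonal} is shorter and reuses machinery already established in the excerpt.
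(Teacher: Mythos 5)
Your proof is correct and takes essentially the same approach as the paper's: both arguments rest on the observation that each column (and row) of $P$ is a standard basis vector and that two distinct columns (or rows) cannot place their $1$ in the same position, which gives orthonormality. The only difference is cosmetic --- you finish by citing Theorem \ref{T:ColumnsOrthogonal}, whereas the paper writes out the products $P^\mathsf{T}P$ and $PP^\mathsf{T}$ explicitly; the fallback computation you mention is precisely what the paper does.
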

\begin{proof}
Let $P$ be an $n$ by $n$ permutation matrix. By the definition of a permutation matrix, each row and each column contains exactly one 1 and the rest of the entries are 0's. This means that the dot product of any column or row vector with itself is 1. However, if $\vec v_{i}$ and $\vec v_{j}$ two distinct column (or row) vectors, then they cannot have a 1 in the same position since otherwise the row (or column) corresponding to that position would have more than one 1, but each row (or column) must have exactly one 1. This means that if $i \neq j$ and $v_{i}$, $v_{j}$ are column (or row) vectors, then $\vec v_{i} \cdot \vec v_{j} = 0$.

Let $\vec v_{i}$ be the column vectors of $P$, so that $P = \begin{bmatrix}\vec v_1& \vec v_2& ...& \vec v_{n}\end{bmatrix}$. Then $\forall i \in \{0, 1, ..., n\}$, $\vec v_{i} \cdot \vec v_{i} = 1$, and if $i \neq j$, then $\vec v_{i} \cdot \vec v_{j} = 0$.

Then we have that
\begin{align*}
P^\mathsf{T} P &= \begin{bmatrix} \vec v_1^\mathsf{T}&\\ \vec v_2^\mathsf{T}&\\ ...&\\ \vec v_{n}^\mathsf{T} \end{bmatrix} \begin{bmatrix} \vec v_1& \vec v_2& ...& \vec v_{n} \end{bmatrix} \\
&= \begin{bmatrix} \vec v_1^\mathsf{T} \vec v_1 & \vec v_1^\mathsf{T} \vec v_2 & ... & \vec v_1^\mathsf{T} \vec v_{n}&\\
				   \vec v_2^\mathsf{T} \vec v_1 & \vec v_2^\mathsf{T} \vec v_2 & ... & \vec v_2^\mathsf{T} \vec v_{n}&\\
				   ... & ... & ... & ... &\\
				   \vec v_{n}^\mathsf{T} \vec v_1 & \vec v_{n}^\mathsf{T} \vec v_2 & ... & \vec v_{n}^\mathsf{T} \vec v_{n}
\end{bmatrix}\\
&= \begin{bmatrix} 1 & 0 & ... & 0&\\
				   0 & 1 & ... & 0&\\
				   ... & ... & ... & ... &\\
				   0 & 0 & ... & 1
\end{bmatrix}\\
&= I
\end{align*}

Let $\vec u_{i}$ be the column vectors of $P$, so that $P = \begin{bmatrix}\vec u_1&\\ \vec u_2&\\ ...&\\ \vec u_{n}\end{bmatrix}$. Then $\forall i \in \{0, 1, ..., n\}$, $\vec u_{i} \cdot \vec u_{i} = 1$, and if $i \neq j$, then $\vec u_{i} \cdot \vec u_{j} = 0$.

We find that
\begin{align*}
P P^\mathsf{T} &= \begin{bmatrix} \vec u_1&\\ \vec u_2&\\ ...&\\ \vec u_{n} \end{bmatrix} \begin{bmatrix} \vec u_1^\mathsf{T}& \vec u_2^\mathsf{T}& ...& \vec u_{n}^\mathsf{T} \end{bmatrix} \\
&= \begin{bmatrix} \vec u_1 \vec u_1^\mathsf{T} & \vec u_2 \vec u_1^\mathsf{T} & ... & \vec u_{n} \vec u_1^\mathsf{T}&\\
				   \vec u_2^\mathsf{T} \vec u_1 & \vec u_2^\mathsf{T} \vec u_2 & ... & \vec u_2^\mathsf{T} \vec u_{n}&\\
				   ... & ... & ... & ... &\\
				   \vec u_1 \vec u_{n}^\mathsf{T} & \vec u_2 \vec u_{n}^\mathsf{T} & ... & \vec u_{n} \vec u_{n}^\mathsf{T}
\end{bmatrix}\\
&= \begin{bmatrix} 1 & 0 & ... & 0&\\
				   0 & 1 & ... & 0&\\
				   ... & ... & ... & ... &\\
				   0 & 0 & ... & 1
\end{bmatrix}\\
&= I
\end{align*}

This means that $P^\mathsf{T} P = P P^\mathsf{T} = I$, so that $P \in O(n)$.
\end{proof}

Not every permutation matrix is in $SO(n)$ however, for an example consider 

$P = \begin{bmatrix} 0 & 1&\\
			    1 & 0
\end{bmatrix}$

This is a permutation matrix, but $det(P) = -1$, and so $P \not\in SO(2)$.

\subsection{Applying Orthogonal Matrices in Borel Isomorphic Dimensionality Reduction}

One approach is to generate random matrices in $O(n)$ or $SO(n)$ and see which ones improve the accuracy. To generate a random matrix in $O(n)$, we start with a random invertible matrix (since the set of singular matrices in the set of $n$ by $n$ matrices has measure zero, almost all random matrices are invertible, which means that we can simply start with a random matrix) and then we can apply the Gram-Schmidt algorithm to orthonormalize the columns. To generate a matrix in $SO(n)$, we check the sign of the determinant and flip the sign of the first row if the determinant is negative, this will create an orthonormal matrix with a determinant of 1, which is therefore in $SO(n)$. 

One problem that we have is that multiplying the data by an arbitrary matrix in $O(n)$ or even in $SO(n)$ does not always keep the data in $[0, 1]$.
\begin{example}
For instance, let
\begin{equation}
Q = \begin{bmatrix}
\frac{1}{\sqrt{2}} & \frac{1}{\sqrt{2}} & \\
-\frac{1}{\sqrt{2}} & \frac{1}{\sqrt{2}}
\end{bmatrix}
\end{equation}

We can check that $Q^\mathsf{T} Q = Q Q^\mathsf{T} = I$ and $det(Q) = 1$, so $Q \in SO(n)$.

Suppose one row of the data matrix is $\left(0.7,\;0.9\right)$. All entries of this row are in $[0, 1]$. However, when this row is multiplied by $Q$, we find that
\begin{align*}
x Q &= \begin{bmatrix}0.7 & 0.9\end{bmatrix} \begin{bmatrix}
\frac{1}{\sqrt{2}} & \frac{1}{\sqrt{2}} & \\
-\frac{1}{\sqrt{2}} & \frac{1}{\sqrt{2}}
\end{bmatrix} \\
&= \begin{bmatrix}-\frac{\sqrt{2}}{10} & \frac{\sqrt{2}}{1.25}\end{bmatrix} \\
&\approx \left(-0.141, 1.131\right)
\end{align*}

Obviously neither entry of $\left(-0.141, 1.131\right)$ is in $[0, 1]$, in fact one entry is negative while the other is greater than one. This is a problem as the digit interchange Borel isomorphism assumes that the values are in $[0, 1]$.
\end{example}

We can solve this problem by applying the Borel isomorphic function (from Theorem \ref{T:normalize})
\begin{equation}
f(x) = \frac{x - x_{min}}{x_{max} - x_{min}}
\end{equation}
to the matrix after multiplication. Since the composition of Borel isomorphisms is a Borel isomorphism (by Theorem \ref{T:composition}), this means that applying this function after the matrix multiplication maintains the Borel isomorphism.

Another interesting approach is to consider just one column of the data set and the result column, and to rank the columns in how good the column by itself is at predicting the outcome. We then select, if all the dimensions will be reduced into one, that the first column to be the most accurate at predicting the outcome by itself, the second column to be the second most accurate, and so on. This means that columns that are the best predictors are given larger weight. In the event that there will be multiple dimensions in the output data set, say 4 dimensions, we compress columns ranked 1, 5, 9, 13, ... into one (in that order), columns ranked 2, 6, 10, 14, ... into one, columns ranked 3, 7, 11, 15, ... into one, and columns ranked 4, 8, 12, 16, ... into one. We can then generate a  permutation matrix corresponding to ordering the matrices this way. Since this is a permutation matrix, it is in $O(n)$ by Theorem \ref{T:permutation}, and so we can multiply the data set by it before applying the digit interlace Borel isomorphism.

\subsection{Results}

We test this with the Yeast dataset. We generate the identity matrix (as a control), permutation matrix, 100 random $SO(n)$ matrices, and 100 random $O(n)$ matrices. We then observe the accuracy of each of these and obtain the results summarized in table \ref{F:YeastMatrixTable}.

\begin{table}[ht]
\centering
\begin{tabular}{|r|r|r|}
  \hline
Matrix & Mean Accuracy & Standard Deviation of Accuracy \\ 
  \hline
Identity matrix & 0.51 & 0.02 \\ 
  \hline
  Permutation matrix & 0.55 & 0.01 \\ 
    \hline
  Best random SO(n) matrix & 0.50 & 0.01 \\ 
    \hline
  Best random O(n) matrix & 0.52 & 0.01 \\ 
   \hline
\end{tabular}
\caption{A table showing the accuracy of kNN applied to the Yeast dataset after multiplication by various matrices.}
\label{F:YeastMatrixTable}
\end{table}

The results are summarized in the box-and-whiskers plot \ref{F:YeastMatrixResult}.

\begin{figure}[H]
\includegraphics[width=\textwidth]{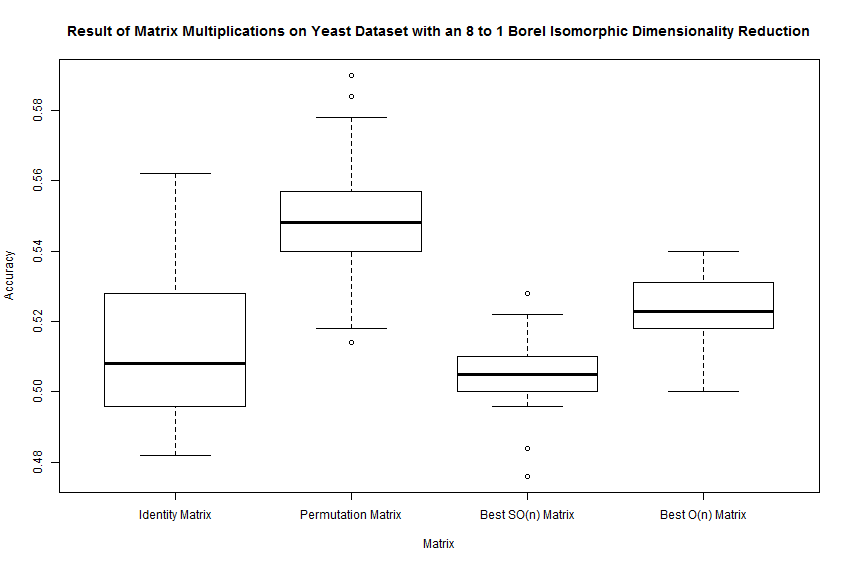}
\caption{Comparison of matrix multiplications with 8 to 1 Borel isomorphic reduction with Yeast dataset.}
\label{F:YeastMatrixResult}
\end{figure}

We observe that generating a single permutation matrix has an accuracy better than the best random $O(n)$ or $SO(n)$ matrix. This suggests that there is no need to randomly generate matrices and that simply using the permutation matrix (generated by ranking the columns by accuracy and placing more columns that generate more accuracy predictions at the front) is the best approach for the Yeast dataset.

When we do a 16 to 1 Borel isomorphic reduction on the Phoneme dataset, we find that multiplication by the permutation matrix \emph{decreases} the accuracy, while multiplication by matrices in $O(n)$ and $SO(n)$ increases the accuracy, as seen in figure \ref{F:PhonemeMatrixResult}.

\begin{figure}[H]
\includegraphics[width=\textwidth]{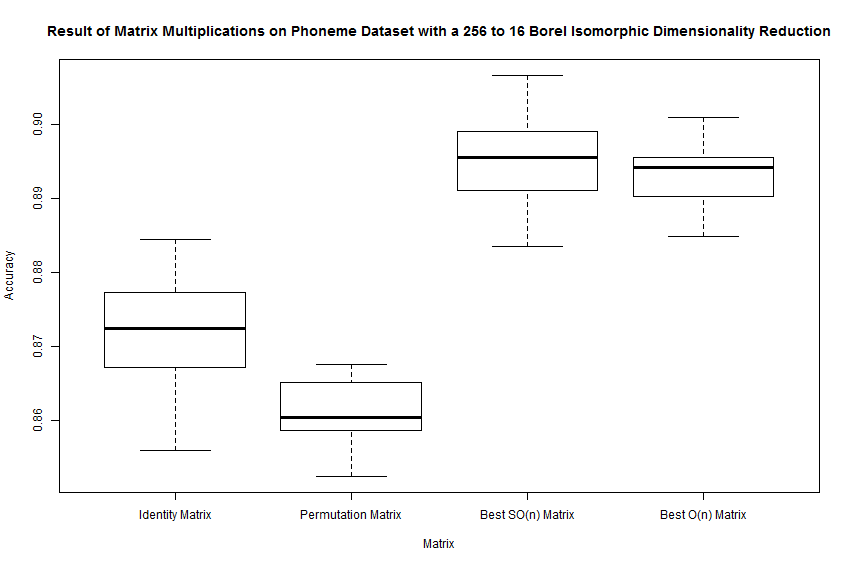} 
\caption{Comparison of matrix multiplications with 256 to 16 Borel isomorphic reduction with Phoneme dataset.}
\label{F:PhonemeMatrixResult}
\end{figure}

However, if we do a 256 to 1 Borel isomorphic reduction on the Phoneme dataset, we find that multiplication by the permutation matrix slightly \emph{increases} the accuracy, and multiplication by random matrices in $O(n)$ and $SO(n)$ increased the accuracy even more, as we see in figure \ref{F:PhonemeMatrixResult2}. We see that multiplication by matrices in $O(n)$ or $SO(n)$ is a good approach for the phoneme dataset, as it produces better results than the identity matrix and the permutation matrix.

\begin{figure}[H]
\includegraphics[width=\textwidth]{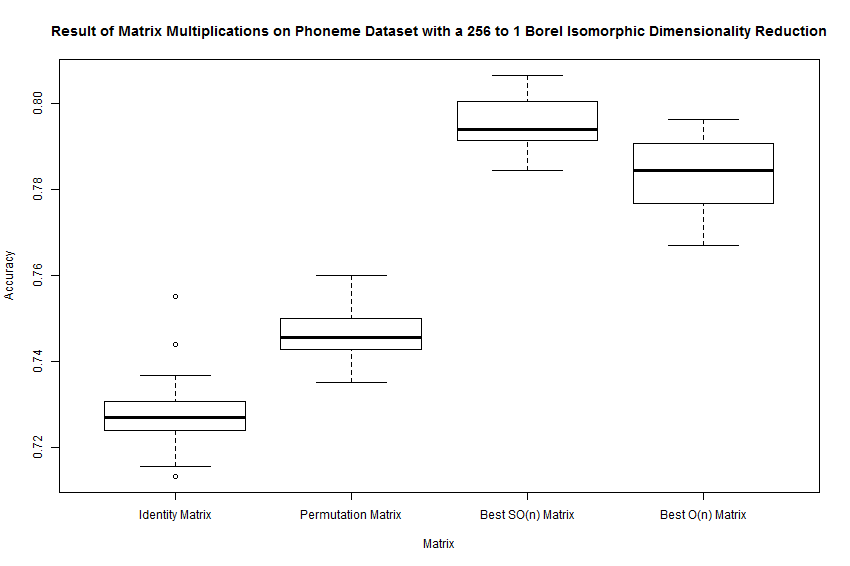} 
\caption{Comparison of matrix multiplications with 256 to 1 Borel isomorphic reduction with Phoneme dataset.}
\label{F:PhonemeMatrixResult2}
\end{figure}

\subsection{Effect on Base}

We would like to see how multiplying by a matrix in $O(n)$ affects the optimal base for the Borel isomorphism.

We test this with the Yeast dataset and the permutation matrix. When we try all integer bases between 2 and 30 after multiplying by the permutation matrix and applying the Borel isomorphism, we obtain the results in figure \ref{F:YeastMatrixBase}.

\begin{figure}[H]
\includegraphics[width=\textwidth]{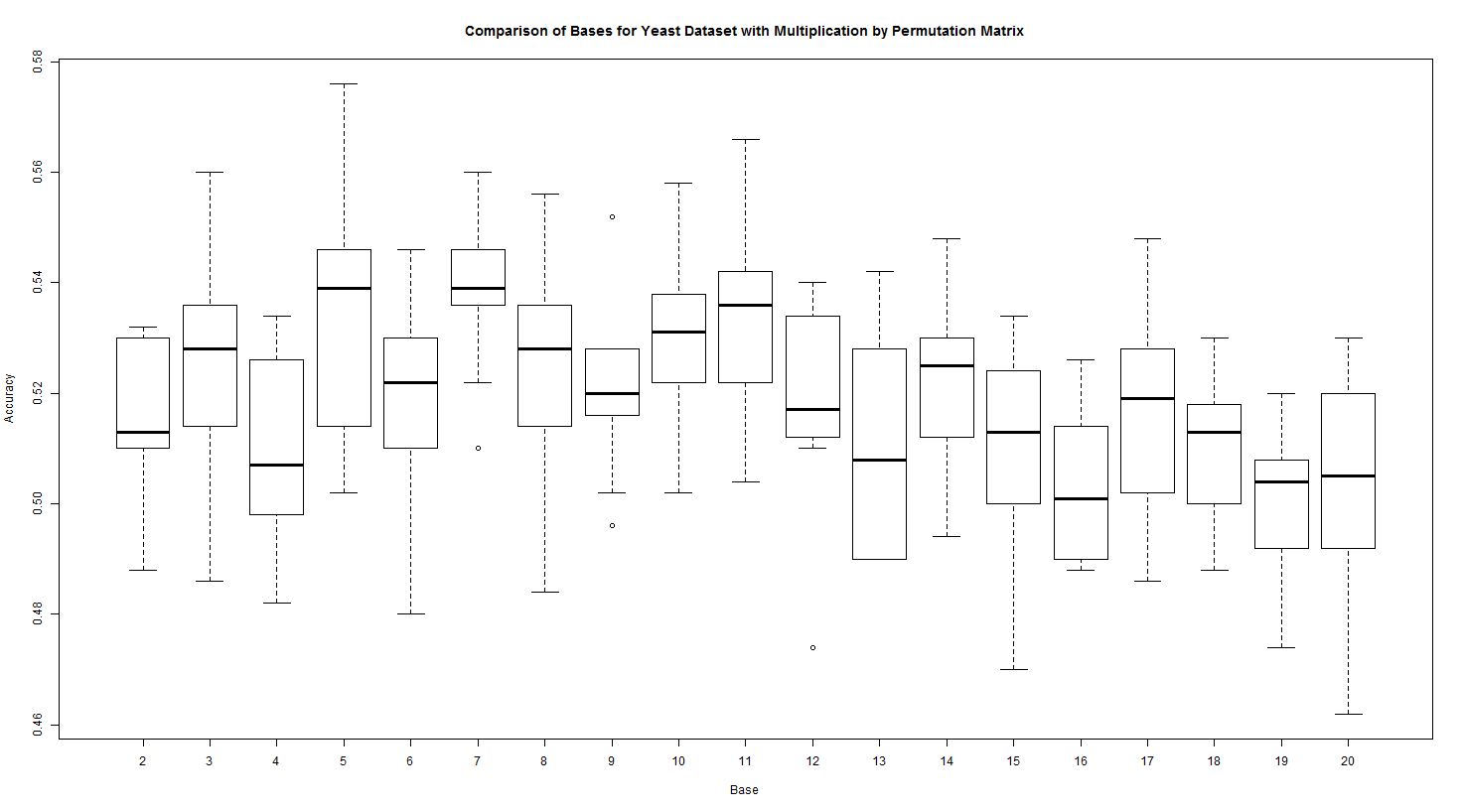}
\caption{Comparison of various bases for the Borel isomorphism after multiplication by a permutation matrix for the Yeast dataset.}
\label{F:YeastMatrixBase}
\end{figure}

We see that the the optimal bases appear to be base 5 and base 7 after the data set was multiplied by the permutation matrix.

\section{Future Prospects}

In this project we have been using only the k-NN classifier. A future project would be to try this with other classifiers, including support vector machine (SVM) and random forest. The familiar problem with random forests is that they are not universally consistent, however they work very well in practice. We would like to verify the accuracy of random forests after a Borel isomorphism is applied as well.

Here is what we see as the paramount direction for research. Having at hand a large family of Borel isomorphisms to choose from will definitely help to improve the accuracy, as we have seen on many examples. On the other hand, the question that needs to be asked is the following: how to make sure that this family is not too large to avoid the problem of overfitting, because for instance if every Borel isomorphism is allowed we can fit such an isomorphism to data perfectly, but there is no guarantee that a new incoming data point will be classified correctly. This is a very familiar problem in statistics and in statistical machine learning. In machine learning it is normally solved by devising a certain measure of complexity of a class, such as the VC dimension or more generally metric entropy. We believe that it's a very interesting and important direction for research to devise a similar measure of complexity for classes of Borel isomorphisms and deduce estimates for the guaranteed accuracy.

\end{document}